\newtheorem{theorem}{Theorem}
\declaretheoremstyle[%
  spaceabove=-4pt,%
  spacebelow=2pt,%
  headfont=\normalfont\itshape,%
  postheadspace=1em,%
  qed=\qedsymbol%
]{mystyle}
\declaretheoremstyle[%
  spaceabove=2ex,%
  spacebelow=4pt,%
  headfont=\normalfont\itshape,%
  postheadspace=1em,
  qed=\qedsymbol%
]{tstyle}
\declaretheorem[name={Proof},style=mystyle,unnumbered,
]{Proof}
\declaretheorem[name={sketch},style=mystyle,unnumbered,
]{Sketch of Proof}
\newcommand\etal[1]{~\textit{et al.}~\citep{#1}}
\newif\ifarxiv\arxivfalse
\title{Improved Adversarial Robustness by Reducing \\ Open Space Risk via Tent Activations}
\author{%
  Andras Rozsa \\
  Verisk AI, Verisk Analytics\\
  \texttt{andras.rozsa@verisk.com} \\
   \And
   Terrance E. Boult \\
   Vision and Security Technology Lab, UCCS \\
   \texttt{tboult@vast.uccs.edu} \\
}
\begin{document}

\maketitle

\begin{abstract}

Adversarial examples contain small perturbations that can remain imperceptible to human observers but alter the behavior of even the best performing deep learning models and yield incorrect outputs.
Since their discovery, adversarial examples have drawn significant attention in machine learning: researchers try to reveal the reasons for their existence and improve the robustness of machine learning models to adversarial perturbations.
The state-of-the-art defense is the computationally expensive and very time consuming adversarial training via projected gradient descent (PGD).
We hypothesize that adversarial attacks exploit the open space risk of classic monotonic activation functions.
This paper introduces the tent activation function with bounded open space risk and shows that tents make deep learning models more robust to adversarial attacks.
We demonstrate on the MNIST dataset that a classifier with tents yields an average accuracy of 91.8\% against six white-box adversarial attacks, which is more than 15 percentage points above the state of the art.
On the CIFAR-10 dataset, our approach improves the average accuracy against the six white-box adversarial attacks to 73.5\% from 41.8\% achieved by adversarial training via PGD.

\end{abstract}

\section{Introduction}

Recent advances in computer vision and speech recognition are rapidly and increasingly bringing real-world applications of deep learning to life.
As these developments impact our lives, the security aspects of machine learning are critical.
While the best performing deep learning classifiers can achieve human-level performance on normal examples, Szegedy\etal{szegedy2013intriguing} demonstrated that an adversary can slightly perturb those examples to make deep learning models produce incorrect outputs.
These adversarial examples are often transferable between deep learning models, which means that adversaries can perform black-box or transfer attacks and do not need to access the targeted classifier in order to form such perturbations.
The phenomenon of adversarial examples has received rapidly growing attention from researchers.
A broad range of adversarial attacks have been proposed \citep{szegedy2013intriguing,goodfellow2014explaining, moosavi2016deepfool,carlini2017towards}.
In parallel, there is significant work on how to make deep learning models robust against such adversaries via model hardening or adversary detection\citep{szegedy2013intriguing,goodfellow2014explaining,kurakin2017adversarial,madry2018towards,tao2018attacks}.

As of today, among the several defense mechanisms proposed in the literature, adversarial training produces the best performance.
As the name suggests, this approach relies on the use of adversarially perturbed examples during training.  
Madry\etal{madry2018towards} demonstrated that adversarial training via their PGD produces classifiers that are robust against a wide range of adversarial attacks.
They argue that in order to withstand adversaries, robust classifiers require a significantly increased capacity. 

Handling unknown/unwanted inputs -- other than adversarially generated ones -- has become an important research topic known by many names including open set recognition \cite{openset14}, and out-of-distribution samples \cite{vyas2018out,lee2018simple,shalev2018out}.
It is also intimately related to anomaly detection \cite{racah2017extremeweather,menon2018loss,pidhorskyi2018generative}
and uncertainty estimation\cite{gal2016dropout,lakshminarayanan2017simple,sensoy2018evidential,malinin2018predictive}, since one can use such detectors or uncertainty estimates to reject unknown samples.
Each subarea has its own notation and related theories. 
The idea of managing open set risk has gained traction as a way of improving performance, see \cite{rattani2015open,zhang2016sparse,scherreik2016open,shi2018odn,dang2019open,coletta2019combining}. 
We follow \cite{openset14} as our goal is to obtain robust classifiers rather than detect or quantify unknown or unwanted inputs.
The intuition for open space risk is that when an input is far from the training samples, there is an increased risk that it is from an unknown distribution.
A key component of their theories is that recognition in the open world is risky and empirical, and open space risk need to be balanced.
We define activation functions that reduce open space risk.
By training with these tent activations, the network becomes more robust to adversarial perturbations. 

\begin{figure}
  \centering\includegraphics[width=1.0\linewidth]{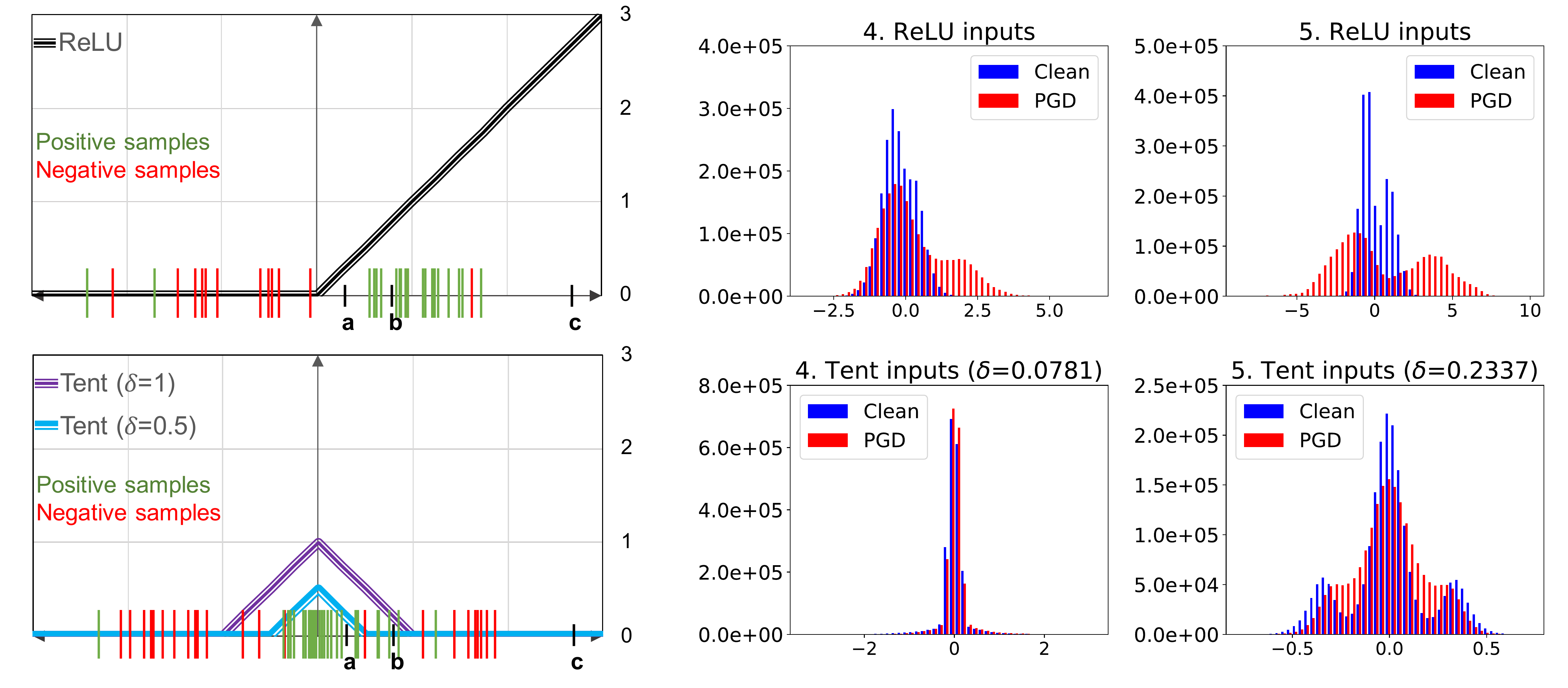}

\caption{The core hypothesis of this paper is that adversaries leverage the open space risk of activation functions to attack deep learning models.
While the rectified linear unit (ReLU\citep{nair2010rectified} upper left) has infinite open space risk, our novel tent activation (lower left) bounds open space risk.
Consider the visualization of hypothetical training samples shown as red (negative) or green (positive) lines at that layer; these represent layer specific features rather than classes, e.g., positive could be smooth curved strokes and negative could be anything else.
If inputs, such as adversarial examples, yield values that map to the right from the training data, e.g., point {\bf c}, the responses with ReLUs can be large even when inputs are far from training data.
With the tent activation, the same adversarial input {\bf c} would produce a zero response.
At the top on the right, using the 10k clean test examples and their adversarially perturbed counterparts via PGD we show the distributions of activation's inputs at the top two activation layers of a classifier trained on the MNIST dataset.
As we can see, for the regularly trained model with ReLUs, there is an increase in the absolute values of activation's inputs due to adversarial attacks. 
Below, we can see that with the classifier having tent activation functions there are smaller differences between these distributions and the range of inputs and outputs remains stable, which supports our open space risk hypothesis.
In Section~\ref{exp_section}, we show that tent activation functions increase robustness against adversaries.
}
  \label{fig:tent}
\end{figure}

One key insight of this paper is that non-linear activation functions used in deep neural networks (DNNs) -- e.g., ReLU, leaky ReLU, sigmoid, etc. -- act as local classifiers with some inputs being mapped to zero or below and others providing a positive response.
While people often think of the final layer of the network as ``the classifier," we argue that each activation is acting as a weak classifier and, therefore, the whole network acts as a cascade/ensemble of weak classifiers.
From that point of view, we can see that each activation function carries a potential open space risk and, hence, is a candidate for improvement in order to minimize open space risk.

We focus on training classifiers that learn deep features from normal inputs and extract internal feature representations that have bounded open space risk.
We do this with {\em tent activation functions}, see Figure~\ref{fig:tent}, which are computationally efficient, have bounded open space risk, and lead to classifiers that perform well on benign examples and against adversaries.

Since adversarial examples are formed by small, even imperceptible perturbations, at first glance, they might appear to be near the training samples and, hence, not in the open space.
But viewed from the actual classification results, we argue that they are in open space because the perturbed input is classified as an incorrect class whose training samples are far from the adversarial examples.

In this paper, we make the following contributions:
\begin{itemize}[noitemsep,topsep=0pt,parsep=0pt,partopsep=0pt]
\item We prove that standard monotonic activation functions -- including ReLU, leaky ReLU, sigmoid, or tanh -- all have unbounded open space risk.
\item We introduce the constraining tent activation function and prove that it has bounded open space risk. 
\item On the MNIST and CIFAR-10 dataset, we experimentally demonstrate that classifiers with tent activation functions possess significantly improved robustness against a broad range of adversaries.
Considering the overall performance against six adversarial white-box attacks, we present the new state of the art on the two dataset with classifiers having constraining tent activation functions.
\item This paper provides new insights and supporting experiments to suggest that reducing open space risk is one potential factor contributing to the improved robustness of classifiers.
\end{itemize}

\section{Approach}

There are several things to consider when we design our new element-wise activation function.
First and foremost, the activation function needs to have finite open space risk, which means limited range.
Second, we prefer all calculations associated with the implementation of the activation function to be simple, computationally efficient, and fast.
Last but not least, we must avoid the problem of exploding and vanishing gradients during gradient propagation.

\subsection{The tent activation function}

Consistent with the above requirements, we introduce the constraining tent activation function as 
\begin{equation}
  \label{eq:tent}
  f(x;\delta) = \max(0,\delta-\lvert x \rvert) .
\end{equation}
Our tent activation function is derived from the widely used ReLU: it is basically built from two ReLUs where a ReLU and a horizontally flipped ReLU share the same distance ($\delta$) from the origin. This distance, which also defines the height of the tent, is a learnable parameter. Tent activations are visualized in Figure~\ref{fig:tent}.

Similar to the ReLU, the tent activation function is differentiable almost everywhere -- it is simply not differentiable at a finite set of points. 

We define its partial derivative with respect to the input $x$, and parameter $\delta$ to be
\begin{equation}
{  \label{eq:tent_part_der_x}
  { \frac{\partial f}{\partial x}}=
    \begin{cases}
     -\text{sgn}(x) & \text{if~} 0<\lvert x \rvert \leq \delta \\
     0  & \text{otherwise,}\\
    \end{cases}
    }
     \ \hbox{ \quad and \quad  }  \ 
{
  { \frac{\partial f}{\partial \delta}}=
    \begin{cases}
     1 & \text{if~} \lvert x \rvert \leq \delta \\
     0  & \text{otherwise, }\\
    \end{cases}
    }
\          \hbox{ respectively.}  
\end{equation}

In summary, the introduced tent activation function is bounded with respect to both its inputs and outputs.
It is computationally efficient and its gradient can be defined everywhere.
However, it has zero response on both sides and, therefore, can easily become inactive to inputs depending on the size of the tent defined by $\delta$.
Consequently, this activation function is sensitive to initialization.

\subsection{Initialization and training}

The element-wise tent activation function has a single parameter that defines its size and, eventually, its active (non-zero) response range.
Obviously, if tents are initialized to be too small with respect to their inputs, most of them will fall into saturated regions and the machine learning model will not be able to learn due to the limited gradient propagation.
Similarly, if the distribution of inputs radically changes between iterations, training becomes more difficult, or even impossible, due to the saturated non-linearities.
Fortunately, we can take advantage of batch normalization \citep{ioffe2015batch} to normalize the inputs of our tent activation functions and keep the distributions of inputs more even during training.

We apply this constraining activation function on both convolutional and linear layers -- simply replacing each ReLU in the model with a combination of batch normalization feeding into a tent.
Based on heuristics we initialize tents with $\delta=1$.
Our implementation allows tents to be shared among channels and applied independently for each channel.
Optionally, sizes of tents can be limited by defining lower and/or upper bounds for the $\delta$ parameters. 

\newcommand{\argmin}{\operatornamewithlimits{argmin}}
\section{Formal Open Set Risk Analysis}

First, we review some definitions from~\cite{openset14}.
Let $f$ be a measurable recognition function over an input space $x\in X$ where $f_y(x)>0$ for recognition of the class $y$ of interest and $f_y(x)=0$ when $y$ is not recognized.
To formalize open space, \cite{openset14} defines $r$ to be a problem-specific parameter and defines open space as the space greater than a distance $r$ from any known training sample $x_i\in {\cal K}, i = 1 \ldots N$.
The known space ${B_r(x_i)}$ is a closed ball with radius $r$ centered around training sample $x_i$. 
They define $S_o$ to be the smallest ball that includes all ${B_r(x_i)}$.
Finally, the authors define open space ${\cal O} = X - \bigcup_{i \in N}B_r(x_i)$, {\em Open Space Risk} $R_{\cal O}(f)$ and {\em Open Set Risk} ${\cal R}(f)$ as
\begin{equation}
R_{{\cal O}}(f) = \frac{\int_{{\cal O}} f_y(x) dx}{\int_{S_o} f_y(x) dx}
\quad \hbox { and } \quad
{\cal R}(f) = R_{\cal O}(f) + \lambda_r R_{\cal E}(f({\cal V; \cal K})) ,
\label{eq:opensetrisk}
\end{equation}
where $R_{\cal E}$ is the empirical error, $\cal V$ is validation data, and $\lambda_r$ is a constant balancing empirical and open space risk.
They define optimal open set recognition as finding $f$ that minimizes ${\cal R}$, which requires $f$ to have finite risk.

We can now analyze the risk of classic monotonic activations and our tent activation.
Without loss of generality, we consider non-decreasing activation function $a(x)$ with $a(x)\le 0$ for $x \le 0$ and $a(x) > 0$ when $x > 0$. 
These are the activations widely being used today including hard and noisy versions of sigmoid and tanh activations \cite{gulcehre2016noisy}, the more recently developed ReLU \cite{nair2010rectified} and its variants such as leaky \cite{maas2013rectifier} and parametric forms \cite{he2015deep}, as well as the self-normalizing network activation function \cite{klambauer2017self}. 
There are, of course, other activations that are not monotonic, such as Gaussian (aka RBF), cosine, or even conic activations.
While there are some specific applications of these, they are uncommon because they are difficult to train, expensive to use, and in many experimental settings, have been shown not to be as effective as tanh or simple ReLUs \cite{karlik2011performance,goodfellow2014explaining,bircanouglu2018comparison}.
For simplicity, we let $a^+(x) = max(0,a(x))$ be our measurable recognition function for the monotonic activation. 

\begin{theorem}
Classic monotonic activation functions act as weak classifiers with unbounded open space risk and unbounded open set risk. 
\end{theorem}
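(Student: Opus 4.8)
The plan is to work in the per-activation, one-dimensional picture the excerpt has already set up: the recognition function is $f_y = a^{+}$ with $a^{+}(x) = \max(0, a(x))$ on the input space $X = \mathbb{R}$, and the training samples $x_1,\dots,x_N \in \mathbb{R}$ are the pre-activation scalars reaching the unit, with known balls $B_r(x_i) = [x_i - r,\, x_i + r]$. I would first dispose of the ``acts as a weak classifier'' clause, which is essentially definitional: since $a$ is non-decreasing with $a(x) \le 0$ for $x \le 0$ and $a(x) > 0$ for $x > 0$, the map $a^{+}$ is measurable, satisfies $a^{+}(x) > 0$ exactly on $\{x > 0\}$, and $a^{+}(x) = 0$ elsewhere; it therefore fires on one side of a threshold and abstains on the other, which is precisely a recognition function in the sense of \cite{openset14}. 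Only these three stated properties of $a$ will be used, so the argument covers the whole class of classic monotonic activations at once.

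Next I would pin down the geometry of open space for such a unit. With $m = \min_i x_i$ and $M = \max_i x_i$, every $B_r(x_i)$ lies in the bounded interval $[m - r,\, M + r]$, so the smallest ball $S_o$ enclosing all of them is bounded, whereas the open space $\mathcal{O} = \mathbb{R} \setminus \bigcup_i B_r(x_i)$ contains the entire ray $(M + r, \infty)$. This is the heart of the matter: for a one-dimensional monotone feature, open space is an unbounded half-line lying on exactly the side where the activation responds positively.

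I would then estimate the two integrals defining $R_{\mathcal{O}}(f)$. For the numerator, choose $x_0 = \max(M + r,\, 0) + 1 > 0$; by monotonicity $a^{+}(x) = a(x) \ge a(x_0) > 0$ for every $x \ge x_0$, and $(x_0, \infty) \subseteq \mathcal{O}$, so $\int_{\mathcal{O}} a^{+}(x)\, dx \ge \int_{x_0}^{\infty} a(x_0)\, dx = +\infty$. For the denominator, $a^{+}$ is non-negative, non-decreasing, hence bounded above by $a^{+}(\sup S_o) < \infty$ on the compact set $S_o$ and Riemann-integrable there, so $\int_{S_o} a^{+}(x)\, dx \le \lvert S_o \rvert\, a^{+}(\sup S_o) < \infty$. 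Hence $R_{\mathcal{O}}(a^{+}) = +\infty$, i.e., unbounded open space risk, for every fixed problem parameter $r$. Since $\mathcal{R}(f) = R_{\mathcal{O}}(f) + \lambda_r R_{\mathcal{E}}(f(\mathcal{V}; \mathcal{K}))$ and the empirical-error term is finite, $\mathcal{R}(a^{+}) = +\infty$ as well, giving unbounded open set risk.

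The step that needs the most care is the degenerate case in which all training samples fall in $(-\infty, 0]$: then $a^{+} \equiv 0$ on $S_o$, the denominator vanishes, and $R_{\mathcal{O}}$ becomes $\infty / 0$ rather than $\infty / (\text{finite})$. I would handle this by noting that a unit none of whose training inputs elicit a positive response is not recognizing its feature at all, so either we restrict to the meaningful case where some $x_i > 0$ --- which makes the denominator strictly positive, because $a^{+} > 0$ on a neighbourhood of $x_i$ contained in $S_o$ --- or we simply observe that an infinite numerator already precludes finite risk regardless of the denominator. Everything else (integrability of monotone functions on compact intervals, the identity $a^{+} = a$ on the positive ray) is routine.
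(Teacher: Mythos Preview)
Your argument is correct and follows essentially the same route as the paper's proof sketch: both work in the one-dimensional per-activation setting, observe that the denominator of $R_{\mathcal{O}}$ is finite for non-degenerate training while the numerator dominates an integral of the form $\int_{M+r}^{\infty} a^{+}(x)\,dx$ that diverges by monotonicity, and then conclude that $\mathcal{R}$ is unbounded as well. Your version is simply more detailed --- you make the ``weak classifier'' clause explicit, bound the denominator on the compact $S_o$, and address the degenerate case where all training inputs are non-positive --- but there is no difference in strategy.
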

\begin{proof}
Sketch of the many proofs: Without loss of generality, let us assume the activation is monotonically non-decreasing as above.
Looking at Eq.~\ref{eq:opensetrisk}, we see that for non-degenerate training, the denominator is non-zero, so the risk is dominated by the numerator. In this case letting $v$ be the maximum value seen in training and $r$ is the ball size in the definition of open space,
we have $\lim_{z\rightarrow\infty} \int_{v+r}^{z} a^+(x)dx \rightarrow\infty $.
Since this integral in the numerator of Eq.~\ref{eq:opensetrisk} is not finite, the open space risk $R_{\cal O}$ is not bounded, and it follows that the open set risk $\cal R$ in Eq.~\ref{eq:opensetrisk} is also unbounded.
\end{proof}

We now formalize some of the properties of the tent activation function.   
\begin{theorem}
Tent activation functions act as weak classifiers which manage open space risk.
\end{theorem}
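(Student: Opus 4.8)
The plan is to mirror the structure of the proof of Theorem~1 but to exploit the \emph{compact support} of the tent. First I would observe that, exactly as for the monotonic activations, the tent $f(x;\delta)=\max(0,\delta-\lvert x\rvert)$ partitions its input domain into an ``accepted'' region $\{x:\lvert x\rvert<\delta\}$ on which $f_y(x)>0$ and a ``rejected'' region $\{x:\lvert x\rvert\ge\delta\}$ on which $f_y(x)=0$; hence it qualifies as a measurable weak recognition function in the sense of~\cite{openset14}, and by the same cascade argument used above the whole network acts as an ensemble of such weak classifiers. So the only substantive claim is that the tent \emph{manages}, i.e.\ keeps finite (indeed uniformly bounded), the open space risk.

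Next I would bound the numerator of $R_{\mathcal O}(f)$ in Eq.~\ref{eq:opensetrisk}. Because $f(\cdot;\delta)$ vanishes outside the compact interval $K=[-\delta,\delta]$, we have $\int_{\mathcal O} f_y(x)\,dx=\int_{\mathcal O\cap K} f_y(x)\,dx\le\int_{-\delta}^{\delta}(\delta-\lvert x\rvert)\,dx=\delta^2<\infty$, in sharp contrast with the divergent integral $\int_{v+r}^{\infty} a^+(x)\,dx$ that appeared for monotonic activations. In the layer-wise setting, where a tent is applied to each of $d$ coordinates, the same observation gives the finite bound $\delta^{2d}$ for the product recognition function. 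I would then handle the denominator under the same non-degenerate-training hypothesis used in Theorem~1: at least one training sample $x_i$ produces a positive response, i.e.\ $\lvert x_i\rvert<\delta$, and since $B_r(x_i)\subseteq S_o$ and $f$ is continuous and strictly positive on a neighborhood of $x_i$ inside $K$, the denominator $\int_{S_o} f_y(x)\,dx$ is strictly positive. Combining the two estimates yields $R_{\mathcal O}(f)\le\delta^2/\int_{S_o} f_y(x)\,dx<\infty$, a bound depending only on $\delta$ and the training configuration, not on how far a (possibly adversarial) input lies from the data. Finally, since the empirical error $R_{\mathcal E}$ is a rate in $[0,1]$ and $\lambda_r$ is a fixed constant, ${\cal R}(f)=R_{\cal O}(f)+\lambda_r R_{\cal E}(f({\cal V};{\cal K}))$ is finite as well, so the tent meets the finite-risk requirement for optimal open set recognition.

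The main obstacle I anticipate is not the integral estimate --- which is elementary once compact support is noted --- but making the ``weak classifier / ensemble'' part precise: strictly speaking, what one ultimately cares about is the composition of many bounded-support activations through linear layers and batch normalization, so one should check that (a) the non-degeneracy assumption on the denominator is legitimate at each layer, which is exactly what batch normalization is used to guarantee (Section~\ref{exp_section}'s setup and the discussion in Section on initialization), and (b) boundedness of the accepted region is preserved under composition. I would therefore state the theorem's conclusion at the level of a single activation, as Theorem~1 does, and add a remark that the cascade inherits a bounded active range layer by layer, leaving the network-level statement at the same informal level as the corresponding remark for ReLU.
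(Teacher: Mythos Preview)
Your argument is correct. The key observation --- that $f(\cdot;\delta)$ has compact support $[-\delta,\delta]$, so the numerator of $R_{\mathcal O}$ is bounded by $\int_{-\delta}^{\delta}(\delta-\lvert x\rvert)\,dx=\delta^2$ while the denominator is positive under the same non-degeneracy hypothesis used in Theorem~1 --- is exactly what is needed, and your treatment of the weak-classifier part and of $\mathcal R(f)$ is fine.

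The paper, however, does not carry out this direct integral estimate. Instead it normalizes the tent to $\hat f(x;\delta)=f(x;\delta)/\delta^2$, observes that $\hat f$ has unit area and is therefore a probability distribution, identifies $\hat f$ as a \emph{compact abating probability model} in the sense of~\cite{openset14} (with center $x_i=0$ and radius $\tau=\delta$), and then invokes Theorem~1 of~\cite{openset14} as a black box to conclude that open space risk is managed; finally it notes that rescaling by $\delta^2$ does not affect whether the risk is zero or finite. So the paper's proof is a two-line reduction to an external result, whereas yours is a self-contained computation. Your route is more elementary and more informative (it produces the explicit bound $\delta^2$ on the numerator, which even motivates the weight-decay regularization on $\delta$ used in Section~\ref{exp_section}); the paper's route is shorter on the page and situates the tent within an existing open-set framework, at the cost of requiring the reader to consult~\cite{openset14}. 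Your extra remarks on the layer-wise product bound and on batch normalization go beyond what the paper proves and are reasonable but not required for the theorem as stated.
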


\begin{Proof}
To simplify the proof, initially consider a normalized tent activation, ${\hat f}(x;\delta)= f(x;\delta)/\delta^2$, which has unit area and, hence, can be viewed as a probability distribution.
Normalized tent activations are compact probability abating models in the sense of \cite{openset14} with $x_i=0$ and $\tau=\delta$.
Hence, by Theorem 1 of that paper, the tent activation manages open space risk.
To complete the proof, note that scaling back to original tent does not impact if a value is zero, finite or not.
\end{Proof}

The astute reader might be thinking that the actual open set risk of any network activation is practically bounded.
All layers are finite combinations with finite network weights.
With actual input data, one can estimate a more precise open space risk bound using interval analysis similar to the provable robustness analysis in \cite{wang2018efficient}.
Even without such interval analysis, our experiments show that learned $\delta$ parameters are usually small with ($\delta < 1$).
In contrast, the maximum potential value for any monotonic activation, given even finite inputs, is still quite large.
Thus, our tent activation functions provide a very significant reduction in open space risk.

\newcommand\p{\phantom{0}}
\newcommand\z[1]{\textbf{#1}}

\section{Experiments}
\label{exp_section}

We evaluate DNNs with tent activation functions and test their impact on robustness.
To improve reproducibility, we start with a detailed description of our experiments.
Researchers can extend this work by building upon our publicly available code base. \footnote{\url{anonymized_url}} 

\subsection{Experimental Setup}

We perform experiments on both the MNIST \citep{lecun1998mnist} and CIFAR-10 \citep{krizhevsky2009learning} dataset using the tent activation function.
For comparison, we train regular deep learning models utilizing ReLUs, and we perform the adversarial training of Madry\etal{madry2018towards} which, as of today, produces the most robust classifiers.
We use training and validation sets (55k/5k images for MNIST and 45k/5k for CIFAR-10), and after the corresponding training completes, we pick the best model based on its performance on the validation set.
All evaluations occur on the test sets (10k examples for both MNIST and CIFAR-10).

For each classifier, we report the accuracies on clean test samples and on test images that were perturbed by the different adversaries. 
To evaluate and compare the robustness of the trained models, we use a broad range of adversaries
\ifarxiv
(for visualized examples see Appendix~A)
\hspace{-4pt}
\fi.
The selected white-box adversarial example generation techniques are the following:

\begin{itemize}[noitemsep,topsep=0pt,parsep=0pt,partopsep=0pt]
\item the fast gradient sign method (FGSM) of Goodfellow\etal{goodfellow2014explaining},
\item the basic iterative method (BIM) by Kurakin\etal{kurakin2017adversarial}, 
\item the projected gradient descent (PGD) approach of Madry\etal{madry2018towards},
\item the DeepFool (DF) method by Moosavi-Dezfooli\etal{moosavi2016deepfool},
\item the $l_2$ and $l_\infty$ versions of Carlini and Wagner attacks \citep{carlini2017towards}, in short, CW $l_2$ and CW $l_\infty$.
\end{itemize}

We also conduct black-box attacks by testing the transferability of adversarial examples formed on regular models to classifiers that we obtain via adversarial training or by applying the tent activation function.
To perform these adversarial attacks on trained models and obtain robust classifiers via adversarial training, we utilize the adversarial robustness toolbox (ART v0.10.0) \citep{art2018}.
Unless otherwise specified, we use the default parameters of ART.

\textbf{MNIST.}
For the MNIST dataset, we used the architecture from Carlini and Wagner in \citep{carlini2017towards}, which was introduced by Papernot\etal{papernot2016distillation}.
For simplicity, we refer to this classifier as MNIST-Net.
This model consists of four convolutional layers -- they have $3\times3$ kernels with 32, 32, 64, and 64 filters, respectively -- and two linear layers with a width of $200$.
Each of these six layers is followed by ReLUs; spatial dimensions are reduced after the second and fourth convolutional layers via max-pooling.
For regularization, dropout is applied after both linear layers with dropout ratio of $0.5$.
We seek to perform a fair comparison between regular classifiers utilizing ReLUs and the modified architectures replacing the six ReLUs with our constraining tent activation functions. Since we use tents coupled with batch normalization, we modify the architecture and apply batch normalization to the inputs of ReLUs as well.
Throughout our experiments on this architecture, we use the Adam optimizer with a fixed $0.001$ learning rate, the batch size is set to $100$, and we train classifiers for 40 epochs.
Image pixels are scaled to be in the range $\left[0,1\right]$; no further data augmentation is used.
To squeeze the size of tents and, thus, reduce open space risk, we apply weight-decay to $\delta$ parameters.
For adversarial training, we use the same settings as Madry\etal{madry2018towards}.
Each training sample in every mini-batch is perturbed by the PGD approach with the following parameters: the number of iterations is limited to 40, the $l_\infty$ of perturbations is maximized to $0.3$, and the step-size for each iteration is set to $0.1$ in pixel values.
For evaluation, we use the default settings of ART for applicable adversaries: $l_\infty$ limit for perturbations is $0.3$, and step-size is $0.1$, by default.
Note that only four adversaries (FGSM, BIM, PGD, and CW $l_\infty$) have $l_\infty$ upper bounds for perturbations. Similarly, the step-size can only be defined for two adversarial example generation algorithms (BIM and PGD).
Last but not least, note that the default settings of ART v0.10.0 define $100$ iterations for BIM, PGD, and DF adversaries.
 
\textbf{CIFAR-10.} For our experiments on the CIFAR-10 dataset, we use two variants of the wide residual networks (WRNs) \citep{zagoruyko2016wide} that were designed to improve over the original residual networks (ResNets) \citep{he2015deep}.
While ResNets require incrementally increased depths for every performance improvement, wide residual networks can achieve the same level of performance with shallower but wider architectures.
Wide residual networks are denoted as WRN-$d$-$k$ where $d$ defines their depth, and $k$ specifies their width.
With $k=1$, the WRN has the same width as a standard ResNet.
To evaluate and compare the effect of tent activation functions and adversarial training on classifiers having different capacities, we experiment with WRN-$28$-$1$ and WRN-$28$-$10$ models.
We obtain WRNs with tent activation functions by simply replacing the ReLUs of residual units with tents.
To train regular WRNs and for adversarial training, we use stochastic gradient descent (SGD) with momentum of $0.9$ for 200 epochs with a weight-decay of $0.0005$.
We decrease the initial learning rate of $0.1$ by a factor of $0.2$ after 60, 120, and 160 epochs, respectively.
For WRNs with tents, we use the Adam optimizer, which is generally inferior to SGD on residual networks but yields smoother training experience with our constraining activation functions.
Furthermore, for regularization, we only use weight-decay on $\delta$ parameters of tents.
We use an initial learning rate of $0.001$ and apply the same, previously described policy to adjust it.
The batch size is $100$ for all experiments on WRNs, and we apply random cropping and horizontal flipping on images having pixels in $\left[0,1\right]$.
Again, for adversarial training, we follow Madry\etal{madry2018towards} as closely as possible.
Each training sample is perturbed by PGD using the following parameters: the adversary is limited to 7 iterations, the maximum $l_\infty$ of perturbations is $0.032$ ($=8.16$ in $\left[0,255\right]$ pixel-value range) while the step-size for iterations is set to $0.008$.
As before, for evaluation, we use the default settings of ART, however, we change the $l_\infty$ limit for perturbations to $0.032$ and the step-size to $0.008$ for applicable adversaries.

The following settings for the constraining tent activation functions are shared throughout all our experiments.
To avoid significantly increasing the capacity of classifiers compared to models having ReLUs, we use tents with shared $\delta$ parameters among input channels.
Tents are initialized with $\delta=1$, and we apply both a lower and upper bound to $\delta$ parameters with $0.05$ and $1.0$, respectively.
While the lower bound is useful to prevent tents from becoming inactive, with the upper bound we aim at avoiding inflating tents that would compromise our goals to constrain inputs and outputs of these activation functions.

\subsection{Results}

\begin{table}
  \caption{MNIST: Performance of MNIST-Net classifiers on clean test samples and against different adversaries conducting white-box and black-box attacks. We show accuracies (\%) for the original architecture (org) \citep{carlini2017towards}, its modified version with added batch normalization layers (bn), the adversarially trained network (adv) with batch normalization via PGD \citep{madry2018towards}, and several classifiers with tent activation functions trained with different weight-decays on $\delta$ parameters as indicated by the numbers in parentheses. At the bottom, we show the performance of black-box attacks where the perturbed examples originate from the regularly trained network (bn).}
  \label{results_mnist}
  \centering
  \setlength{\tabcolsep}{7pt}
  \begin{tabular}{p{0pt}lccccccc}
    \toprule
 	&Model				 	& Clean		& FGSM		& BIM		& PGD		& DF			& CW $l_2$	& CW $l_\infty$ \\
    \midrule
	\multirow{8}{*}{\rotatebox[origin=c]{90}{\small{White-box}}} &    
    
	MNIST-Net org		 	& 99.38		& 27.89		&\p0.50		&\p0.50		&\p5.60		&\p5.28		& 37.19 \\
	&MNIST-Net bn			&\z{99.50}	& 11.82		&\p0.39		&\p0.39		&\p7.49		&\p5.82		& 40.00 \\
	&MNIST-Net adv		 	& 99.36		&\z{96.13}	&\z{91.46}		&\z{91.46}	&\p2.09		& 85.17		& 94.58 \\
	&MNIST-Net tent (0)	  	& 99.29		& 68.13		&\p2.39		&\p2.39		&11.78		& 41.25		& 57.43 \\
	&MNIST-Net tent (0.01)	& 99.46		& 55.52		&\p1.97		&\p1.97		& 29.78		& 61.93		& 75.80 \\
	&MNIST-Net tent (0.10)	& 99.15		& 80.34		& 75.20	    & 75.20		& 88.57		& 94.81		& 97.09 \\
	&MNIST-Net tent (0.12)	& 99.20		& 87.56		& 88.38	    & 88.37		& 92.92		&\z{95.81}	&\z{97.77} \\
	&MNIST-Net tent (0.14)	& 98.95		& 64.05		& 67.59	    & 67.59		&\z{95.02}	& 94.90		& 96.89 \\
    \midrule
	\multirow{5}{*}{\rotatebox[origin=c]{90}{\small{Black-box}}} &    
	MNIST-Net org			&\z{99.38}	& 58.26		& 67.37		& 67.48		& 98.77		& 94.15		& 97.01 \\
	&MNIST-Net adv			& 99.36		&\z{97.86}	& \z{98.55}		&\z{98.57}	&\z{99.12}	&\z{98.32}	&\z{98.66} \\
	&MNIST-Net tent (0)		& 99.29		& 70.23		& 72.37		& 72.25		& 98.36		& 94.64		& 96.48 \\
	&MNIST-Net tent (0.10)	& 99.15		& 70.94		& 80.06		& 79.95		& 98.74		& 98.04		& 98.24 \\
	&MNIST-Net tent (0.12)	& 99.20		& 73.90		& 83.82	& 83.88		& 98.82		& 98.11		& 98.25 \\
    \bottomrule
  \end{tabular}
\end{table}

On both the MNIST and CIFAR-10 dataset, we focus on discovering and analyzing the effects of tent activation functions on robustness rather than trying to find the optimal hyper-parameters to achieve the best possible performance.
Of course, improving robustness of classifiers while, in parallel, dramatically decreasing their performance on clean examples is undesirable.
Therefore, we present trained models where the application of our constraining activation functions yields comparable accuracies on the clean test dataset with the adversarial training of Madry\etal{madry2018towards}.
After discussing our experimental results on both datasets separately, we highlight our key findings related to tent activation functions.

\textbf{MNIST -- WHITE-BOX.}
The results of white-box attacks are summarized in Table~\ref{results_mnist}.
To address the potential impact of our added batch normalization layers, we evaluate and compare with both the original one (MNIST-Net org) and the modified classifier (MNIST-Net bn).
Considering their overall lack of robustness to different adversaries, there is no relevant difference between the two.
Next, we analyze the adversarially trained classifier via PGD \citep{madry2018towards}, (MNIST-Net adv), which is our benchmark.
This computationally expensive approach yields significantly improved robustness against five of the six adversaries.
Interestingly, the accuracy on test samples that we perturbed via DeepFool (DF) is lower than the regular training produces.
Finally, we can evaluate the impact of tent activation functions on robustness.
As a vanilla approach, we have a classifier MNIST-Net tent (0) with tents where we do not penalize the model for having larger tents, which means greater open space risk. 
To achieve this, we do not apply weight-decay to the $\delta$ parameters.
While MNIST-Net tent (0) shows some signs of improvement over regular training, it is not comparable to adversarial training.
However, as we increasingly squeeze tents and constrain both the input and output regions of these activation functions via weight-decay, the performance of the trained classifiers against all adversaries improve.
While we cannot match the benchmark on FGSM, BIM, and PGD, our tent activation functions yield more general robustness.
The trained model obtained with weight-decay of $0.12$ (MNIST-Net tent (0.12) in Table~\ref{results_mnist}) achieves the best overall performance among classifiers having tents.
We find that further increased penalties on sizes of tents lead to performance degradation on benign examples and against some adversaries as well.

\textbf{MNIST -- BLACK-BOX.}
We evaluate the robustness of classifiers against black-box attacks and compare their performance with the original network and the benchmark in Table~\ref{results_mnist}.
For these attacks, we use the perturbed examples formed on the regular classifier (denoted as bn).
Note that while adversaries do not have direct access to the targeted model, the source network shares the architecture and training set with the targeted classifiers (one might call these ``grey-box'' attacks).
Similar to white-box attacks, we find that adversarial training clearly outperforms the classifiers we have trained with tent activation functions against FGSM, BIM, and PGD adversaries.
Considering the other adversaries, networks with tents can reach the same level as adversarial training.  

\begin{table}
  \caption{CIFAR-10: Performance of the 28-layer Wide-ResNets on clean test samples and against different adversaries conducting white-box and black-box attacks. We present accuracies (\%) for two variants of the original architecture \citep{zagoruyko2016wide} with widen-factor 1 and 10, the adversarially trained models (adv) via PGD \citep{madry2018towards}, and several classifiers with tent activation functions that we trained with different weight-decays on $\delta$ parameters as indicated by the numbers in parentheses. We show the performance of black-box attacks where the perturbed examples originate from the regularly trained networks denoted as WRN-28-1 and WRN-28-10, respectively.}
  \label{results_wrn}
  \centering
  \setlength{\tabcolsep}{6.5pt}
  \begin{tabular}{p{0pt}lccccccc}
    \toprule
 	&Model				 	& Clean		& FGSM		& BIM		& PGD		& DF			& CW $l_2$	& CW $l_\infty$ \\
    \midrule
	\multirow{6}{*}{\rotatebox[origin=c]{90}{\small{White-box}}} &    
	WRN-28-1					&\z{93.05}	& 17.92		&\p4.39		&\p4.39		& 40.05		&\p4.53		&\p4.59 \\
	&WRN-28-1 adv			& 73.26		& 59.09		& 53.38		& 53.38		& 17.26		& 13.91		& 55.80 \\
	&WRN-28-1 tent (0.01)	& 85.16		& 66.36		& 17.97		& 17.97		&\z{83.71}	& 13.25		& 62.49 \\
	&WRN-28-1 tent (0.02)	& 81.71		&\z{79.11}	& 74.18		& 74.18		& 81.40		& 44.77		&\z{80.78} \\
	&WRN-28-1 tent (0.03)	& 81.26		& 78.39		&\z{75.04}	&\z{75.04}	& 80.88		&\z{46.68}	& 79.67 \\
	&WRN-28-1 tent (0.04)	& 78.71		& 75.04		& 71.87		& 71.87		& 78.19		& 44.22		& 76.39 \\
    \midrule
	\multirow{5}{*}{\rotatebox[origin=c]{90}{\small{Black-box}}} &WRN-28-1 adv				& 73.26		& 72.27		& 72.67		& 72.67		& 73.17		& 73.15		& 73.14 \\
	&WRN-28-1 tent (0.01)	&\z{85.16}	& 65.88		& 73.86		& 73.86		&\z{84.44}	&\z{84.53}	&\z{84.08} \\
	&WRN-28-1 tent (0.02)	& 81.71		& 77.04		& 79.42		& 79.42		& 81.39		& 81.54		& 81.45 \\
	&WRN-28-1 tent (0.03)	& 81.26		&\z{78.03}	&\z{79.47}	&\z{79.47}	& 81.01		& 81.04		& 81.08 \\
	&WRN-28-1 tent (0.04)	& 78.71		& 75.35		& 76.91		& 76.91		& 78.62		& 78.68		& 78.42 \\
    \midrule
	\multirow{6}{*}{\rotatebox[origin=c]{90}{\small{White-box}}} &WRN-28-10				&\z{95.58}	& 42.37		&\p2.98		&\p2.98		& 25.52		&\p3.58		& \p4.11 \\
	&WRN-28-10 adv			& 86.72		& 64.95		& 51.56		& 51.56		&\p9.78		& 17.91		& 54.91 \\
	&WRN-28-10 tent (0.001)	& 87.87		& 74.84		& 22.98		& 23.06		& 65.39		&\p9.03		& 72.36 \\
	&WRN-28-10 tent (0.002)	& 87.52		& 77.22		& 54.78		& 54.73		& 69.62		& 12.44		& 71.38 \\
	&WRN-28-10 tent (0.003)	& 86.04		&\z{84.58}	& 83.27		& 83.27		&\z{83.59}	&\z{24.13}	&\z{82.73} \\
	&WRN-28-10 tent (0.004)	& 85.27		& 83.62		&\z{84.70}	&\z{84.63}	& 83.41		& 23.65		& 81.23 \\
    \midrule
	\multirow{5}{*}{\rotatebox[origin=c]{90}{\small{Black-box}}} &WRN-28-10 adv			& 86.72		&\z{85.03}	&\z{85.87}	&\z{85.86}	& 86.63		& 86.63		&\z{86.53} \\
	&WRN-28-10 tent (0.001)	&\z{87.87}	& 74.15		& 80.77		& 80.76		&\z{86.72}	&\z{86.96}	& 85.99 \\
	&WRN-28-10 tent (0.002)	& 87.52		& 73.26		& 80.70		& 80.72		& 86.50		& 86.47		& 85.80 \\
	&WRN-28-10 tent (0.003)	& 86.04		& 82.66		& 84.53		& 84.52		& 85.70		& 85.63		& 85.55 \\
	&WRN-28-10 tent (0.004)	& 85.27		& 82.48		& 83.86		& 83.86		& 85.09		& 84.92		& 84.84 \\
    \bottomrule
  \end{tabular}
\end{table}

\textbf{CIFAR-10 -- WHITE-BOX.}
The results of our experiments using two (WRNs) architectures on the CIFAR-10 dataset are displayed in Table~\ref{results_wrn}.
We train regular classifiers, adversarially trained networks via PGD \citep{madry2018towards} to obtain the benchmark, and a few models with tent activation functions using different weight-decays on $\delta$ parameters.
For both WRN architectures, we observe that adversarial training is less effective compared to the results achieved on the MNIST dataset.
Against white-box DeepFool attacks, compared to the regularly trained model, the robustness is reduced by adversarial training. 
Also, the accuracy on benign test examples for the lower capacity network (WRN-28-1) is dramatically decreased by adversarial training. 
Madry\etal{madry2018towards} argued that this behavior is driven by the fact that there is a trade-off between accuracy and robustness.
Contrarily, the narrow classifiers with tents (WRN-28-1 tent) better maintain their performance on benign examples and, in parallel, outperform the benchmark by a large margin.
On the large capacity architecture (WRN-28-10) adversarial training delivers approximately the same level of robustness to adversaries as on the narrow network, but it performs better on clean examples.
Compared to the benchmark, classifiers with tents demonstrate increased robustness on this higher capacity architecture except for the $l_2$ version of the Carlini and Wagner attack (CW $l_2$).

\textbf{CIFAR-10 -- BLACK-BOX.}
On this dataset, we adapt the black-box protocol we described for the MNIST dataset.
We simply take the examples adversaries perturbed on the regular classifier while conducting white-box attacks and test those images on the targeted trained models. 
Considering the results of black-box attacks listed in Table~\ref{results_wrn}, we can observe that classifiers with tent activation functions outperform the benchmark on the low capacity classifier and match its performance on the higher capacity model.

\textbf{SUMMARY.}
Training DNNs with our novel tent activation functions provides greatly improved adversarial robustness with no additional computational cost compared to regular training.
Note that the adversarial training via PGD of Madry\etal{madry2018towards} requires extra forward/backward passes over regular training.
On the MNIST and CIFAR-10 dataset there are $40$ and $7$ additional passes, respectively, to obtain the perturbed examples for training.
While experimenting with low and high capacity WRNs, we have found that low capacity DNNs with tent activation functions better maintain their performance on clean test examples than adversarial training. 

\section{Conclusion}

One of our primary hypothesis, strongly supported by the theory and experimental results of this paper, is that open space risk is a major contributing factor to the success of adversarial attacks.
We proved that all monotonic activation functions have unbounded risk.
Our experimental results are consistent with our hypothesis; the standard activation functions allow adversaries to exploit the open space risk with adversarially perturbed images, see the right side of Figure~\ref{fig:tent}
\ifarxiv \ and Appendix~B\fi.
We revealed that the prior state of the art, the PGD-based adversarial training of Madry\etal{madry2018towards}, is effective against all but one of the selected adversaries. 
Notably, the computationally expensive approach yields decreased robustness against the DeepFool attack compared to regular training.

The open space risk model herein explains many issues for adversarial example generation.
Gradients -- real or approximated -- provide an understanding of the influential pixels in the image with respect to both the output of the classifier and the response of each activation.
For each activation, adversaries can hill climb to push those pixels farther into their high response regions.
With a monotonic activation, there is always a direction in which one can move to increase the response.
With a tent activation that is no longer the case as there is a limit beyond which increasing the input to the layer will cause the activation output to decrease and eventually become zero.

This paper introduced the tent activation function that is efficiently computed and which we proved has bounded open space risk. 
We emphasize that, unlike adversarial training, the application of our novel tent activation function does not increase the computational cost over regular training.
We showed that networks using tents yield significantly improved adversarial robustness compared to the state of the art.
On MNIST, the network having tent activation functions -- trained with a weight-decay of 0.12 on $\delta$ parameters of tents -- produce an average accuracy of 91.8\% on examples perturbed via six adversarial white-box attacks, compared to 76.8\% achieved by the state-of-the-art adversarial training via PGD.
On the CIFAR-10 dataset, our approach -- WRN-28-10 having tents and trained with a weight-decay of 0.004 on $\delta$ parameters -- yields an average accuracy of 73.5\% against the six adversarial white-box attacks compared to 41.8\% of the state of the art.
Combined, these significant improvements further support our hypothesis about the importance of open space risk, which we believe has great potential in helping to address the problem of adversarial robustness.


{\small
\bibliographystyle{unsrt}
\bibliography{neurips_2019}
}

\ifarxiv
\newpage
\section*{APPENDIX A: Visualization of Adversarial Examples}
\label{sec:appendixa}

In the paper, similar to Madry\etal{madry2018towards}, we focus on measuring adversarial robustness quantitatively as we evaluate the robustness of classifiers by calculating accuracies on test samples that have been perturbed by different adversaries.
Here, we present a few of those examples to highlight the qualitative properties of various adversaries.
Also, we show the difference between regular classifiers and the best models that we have trained with the novel tent activation function with respect to adversarial perturbations.

We use the same six adversaries using the adversarial robustness toolbox (ART v0.10.0) \citep{art2018} as before to conduct white-box attacks.
These adversarial example generation techniques are:
\begin{itemize}[noitemsep,topsep=0pt,parsep=0pt,partopsep=0pt]
\item the fast gradient sign method (FGSM) of Goodfellow\etal{goodfellow2014explaining},
\item the basic iterative method (BIM) by Kurakin\etal{kurakin2017adversarial},
\item the projected gradient descent (PGD) approach of Madry\etal{madry2018towards},
\item DeepFool (DF) method by Moosavi-Dezfooli\etal{moosavi2016deepfool},
\item the $l_2$ and $l_\infty$ versions of Carlini and Wagner attacks \citep{carlini2017towards}, in short, CW $l_2$ and CW $l_\infty$.
\end{itemize}

Note that after presenting the clean test image, we follow the same order for adversaries when we display the adversarially perturbed examples in Figures~\ref{fig:mnist_imgs} and \ref{fig:cifar10_imgs}. 

\begin{figure}

    \centering \includegraphics[width=.475\linewidth]{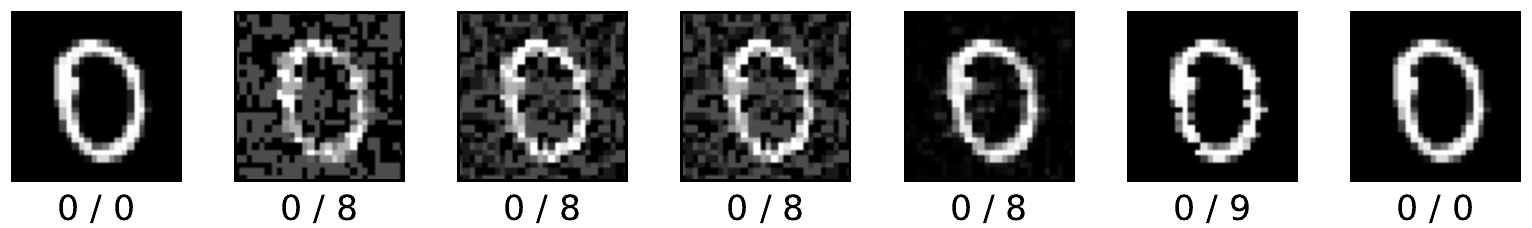} \hfill 	\hfill 			{\includegraphics[width=.475\linewidth]{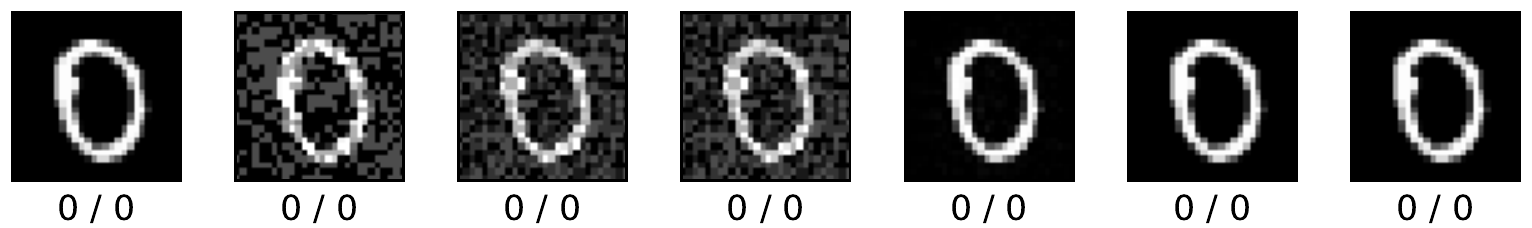}} \\

    \centering \includegraphics[width=.475\linewidth]{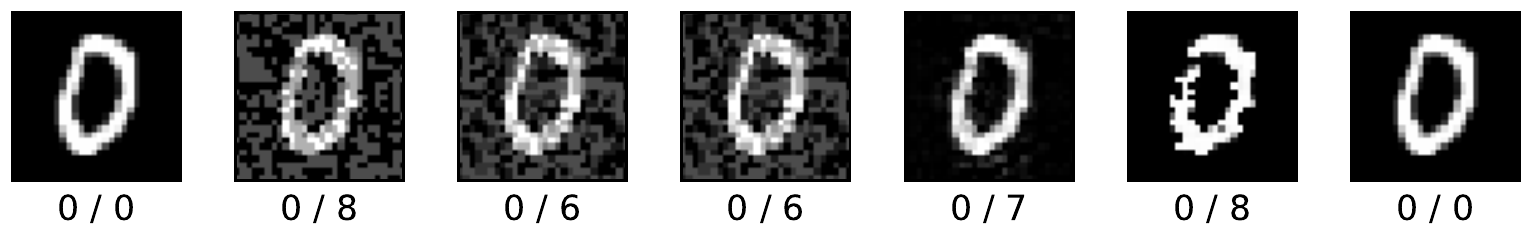} \hfill 	\hfill 			{\includegraphics[width=.475\linewidth]{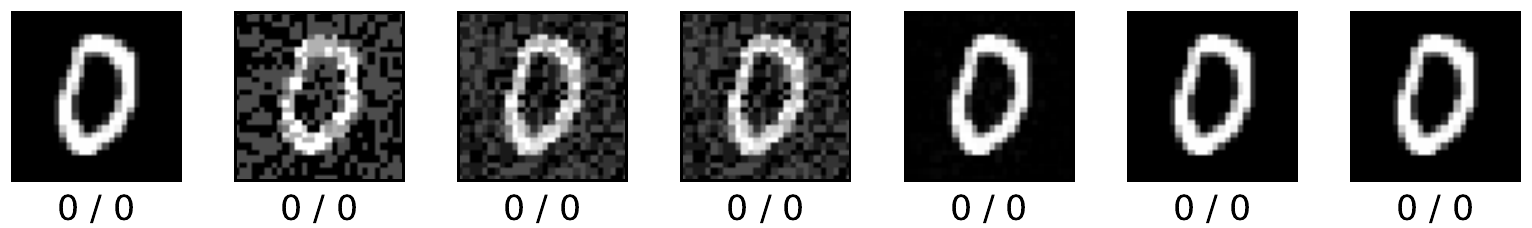}} \\

    \centering \includegraphics[width=.475\linewidth]{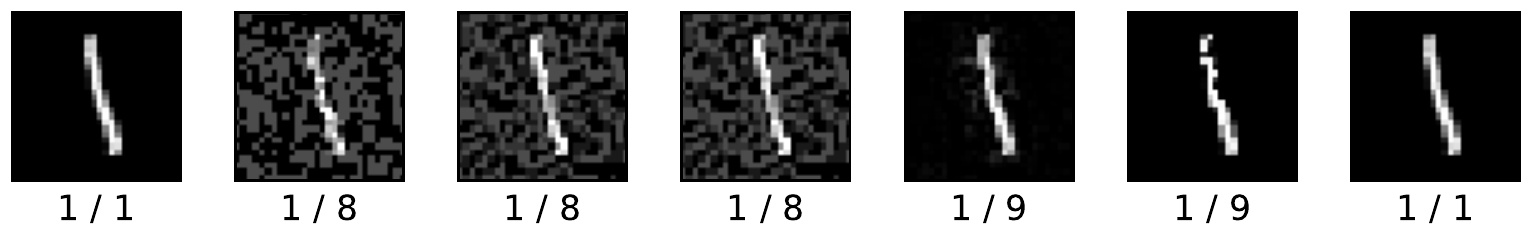} \hfill 	\hfill 			{\includegraphics[width=.475\linewidth]{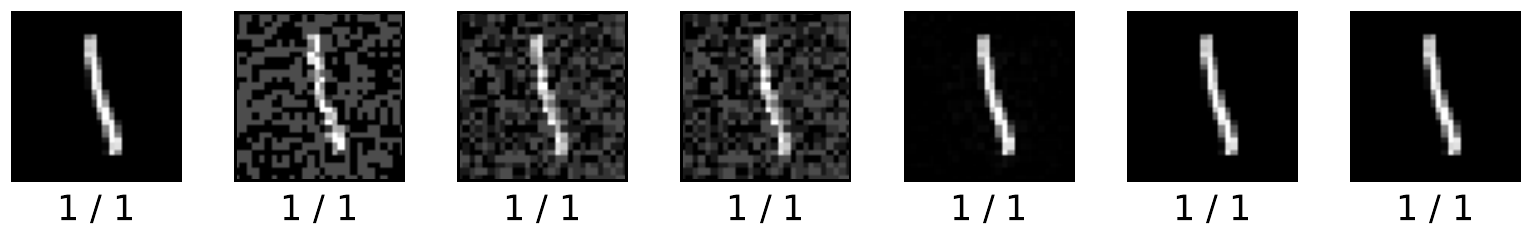}} \\

    \centering \includegraphics[width=.475\linewidth]{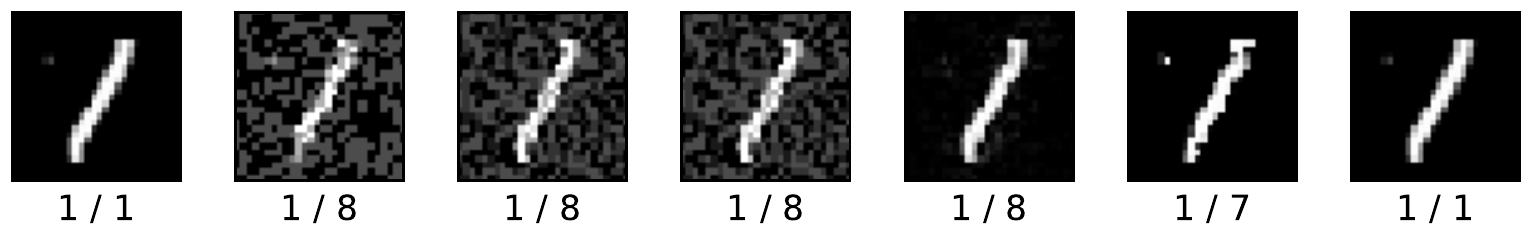} \hfill 	\hfill 			{\includegraphics[width=.475\linewidth]{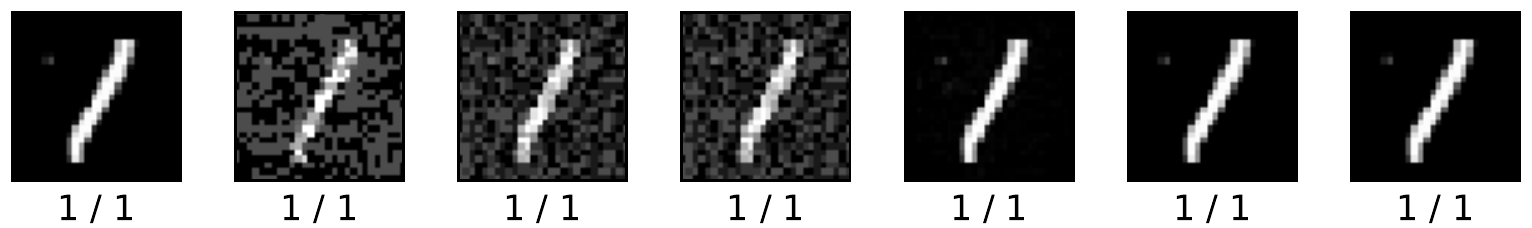}} \\

    \centering \includegraphics[width=.475\linewidth]{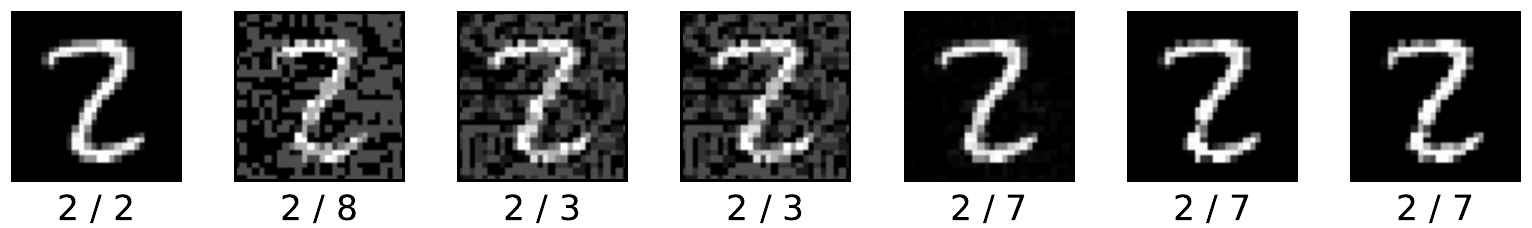} \hfill 	\hfill 			{\includegraphics[width=.475\linewidth]{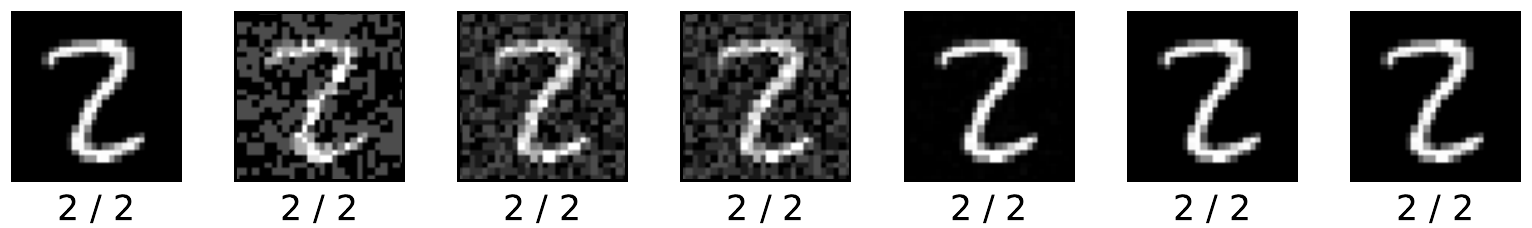}} \\

    \centering \includegraphics[width=.475\linewidth]{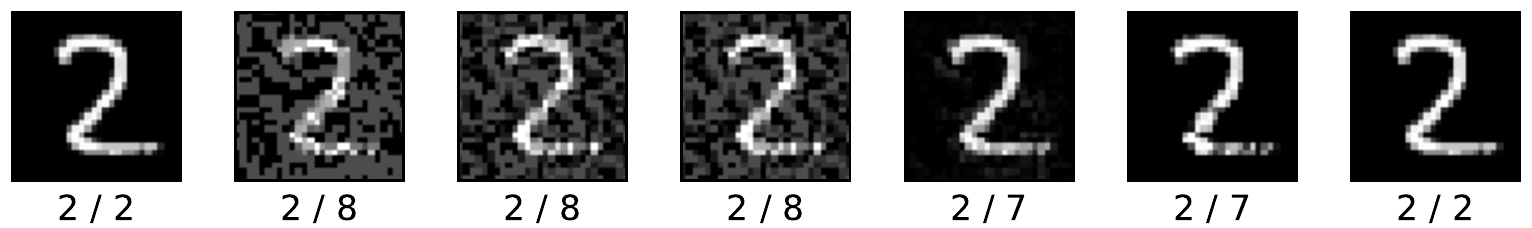} \hfill 	\hfill 			{\includegraphics[width=.475\linewidth]{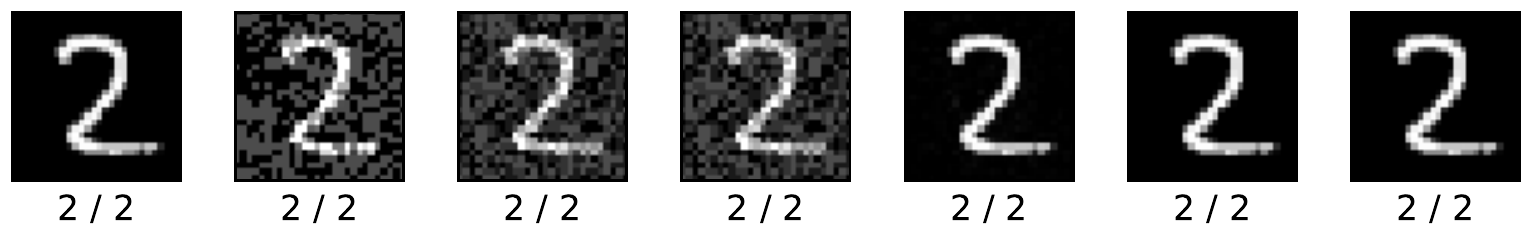}} \\

    \centering \includegraphics[width=.475\linewidth]{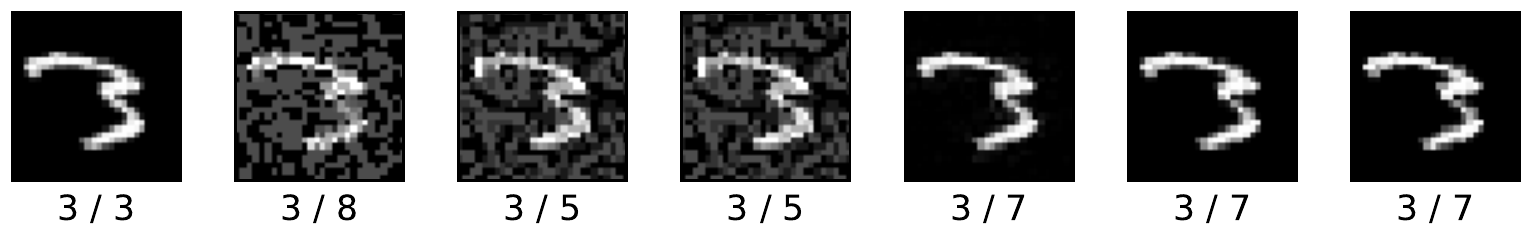} \hfill 	\hfill 			{\includegraphics[width=.475\linewidth]{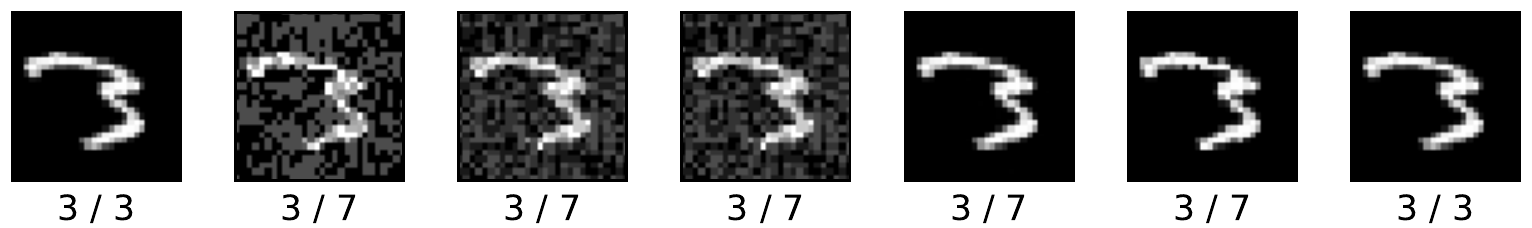}} \\
    
    \centering \includegraphics[width=.475\linewidth]{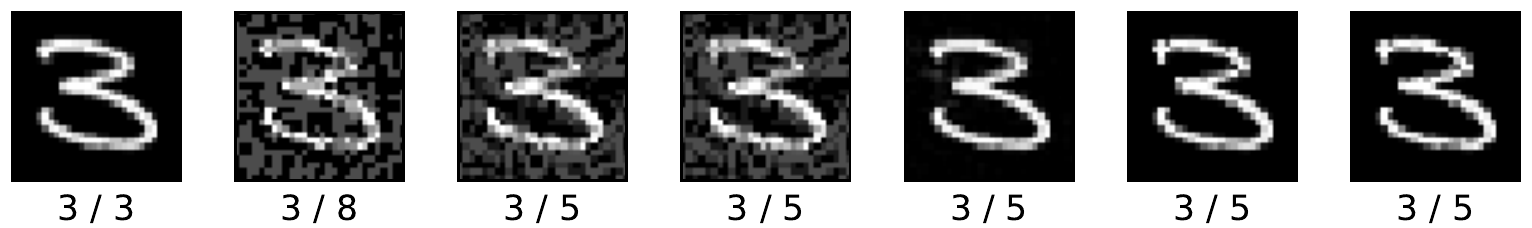} \hfill 	\hfill 			{\includegraphics[width=.475\linewidth]{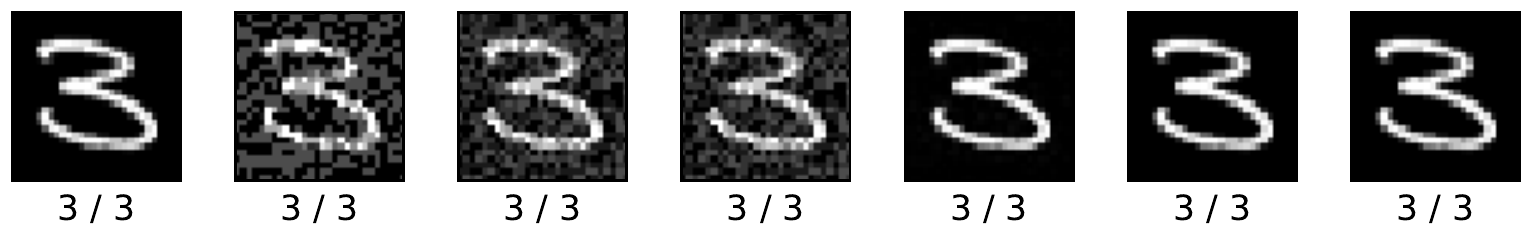}} \\

    \centering \includegraphics[width=.475\linewidth]{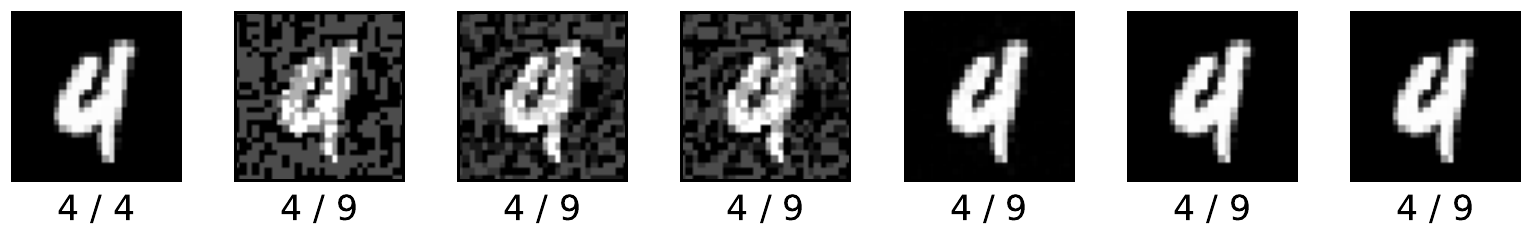} \hfill 	\hfill 			{\includegraphics[width=.475\linewidth]{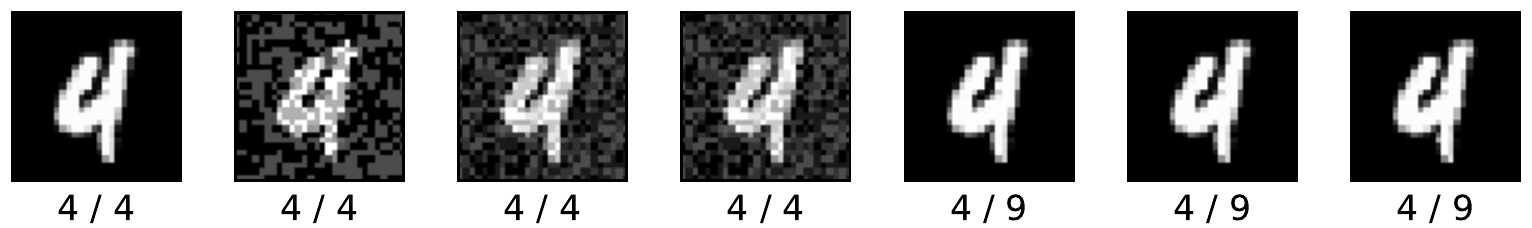}} \\

    \centering \includegraphics[width=.475\linewidth]{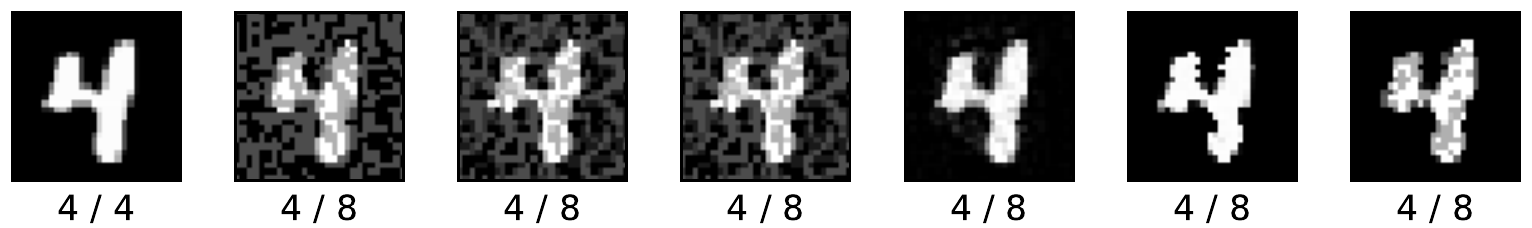} \hfill 	\hfill 			{\includegraphics[width=.475\linewidth]{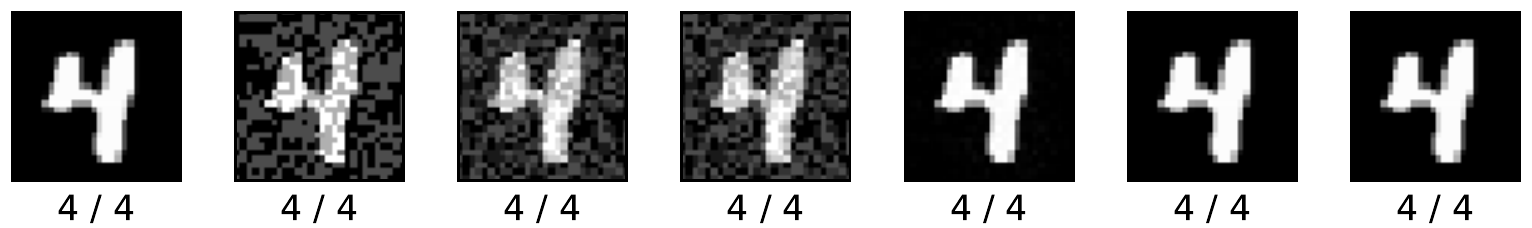}} \\

    \centering \includegraphics[width=.475\linewidth]{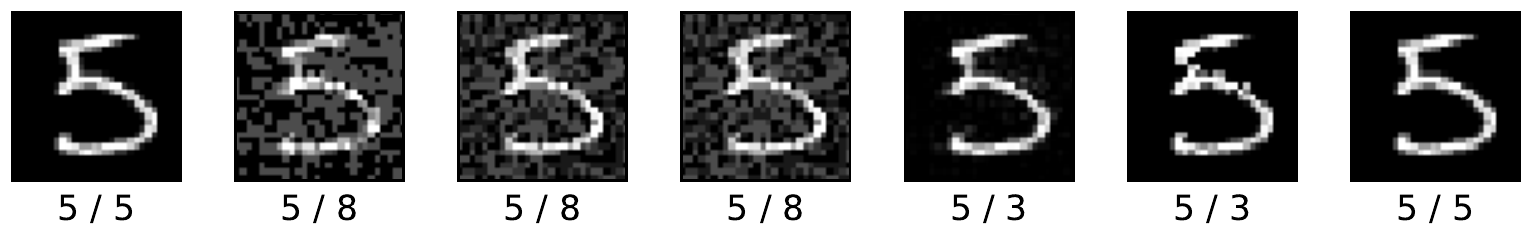} \hfill 	\hfill 			{\includegraphics[width=.475\linewidth]{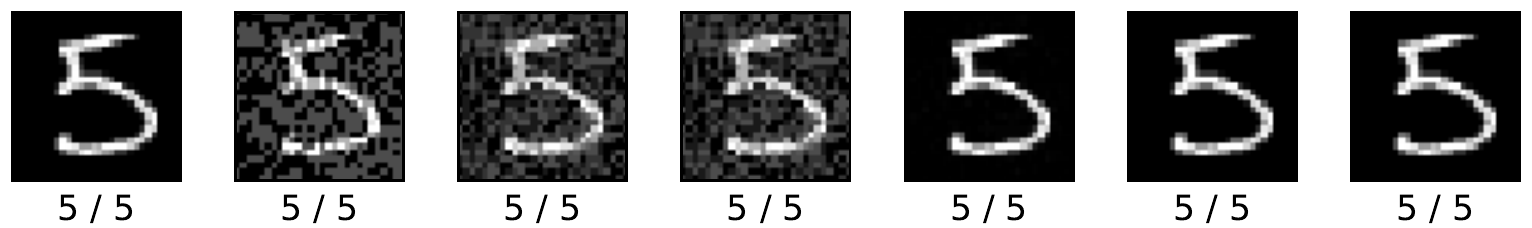}} \\

    \centering \includegraphics[width=.475\linewidth]{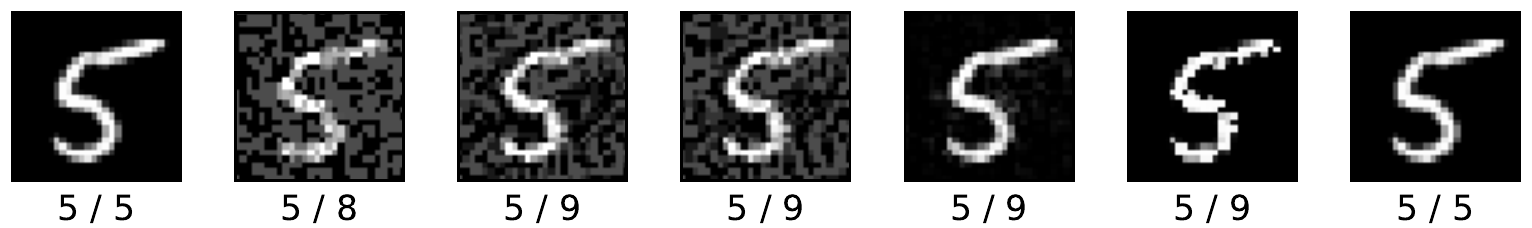} \hfill 	\hfill 			{\includegraphics[width=.475\linewidth]{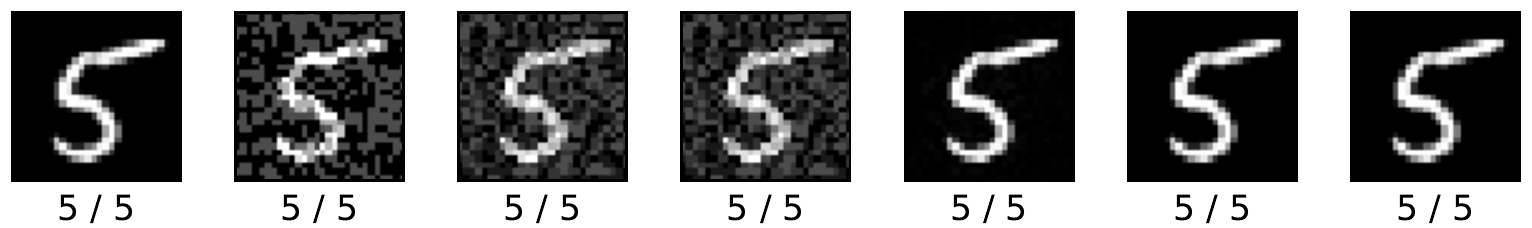}} \\

    \centering \includegraphics[width=.475\linewidth]{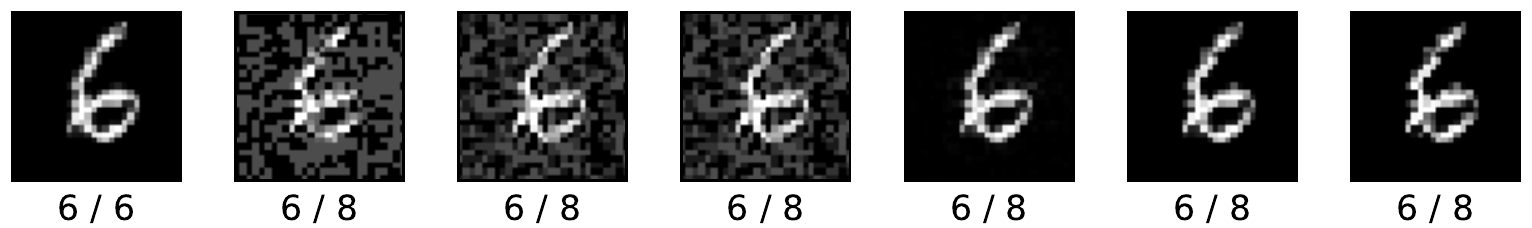} \hfill 	\hfill 			{\includegraphics[width=.475\linewidth]{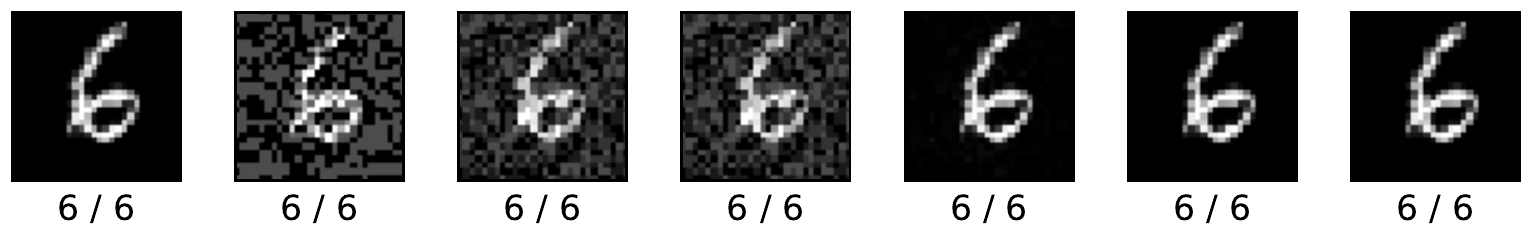}} \\

    \centering \includegraphics[width=.475\linewidth]{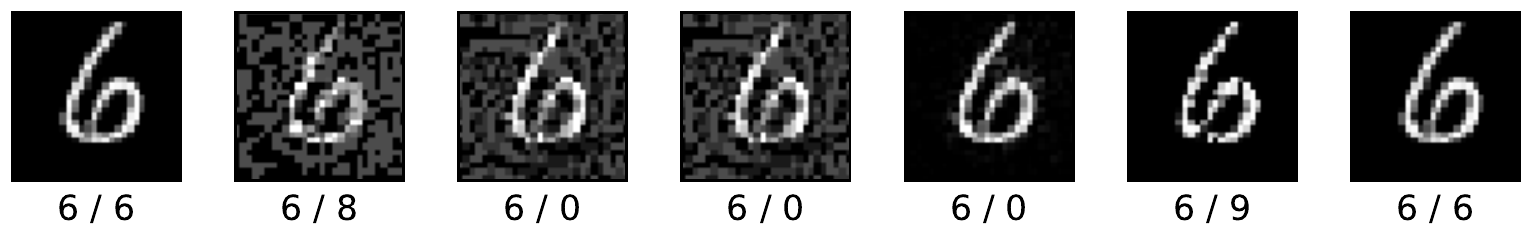} \hfill 	\hfill 			{\includegraphics[width=.475\linewidth]{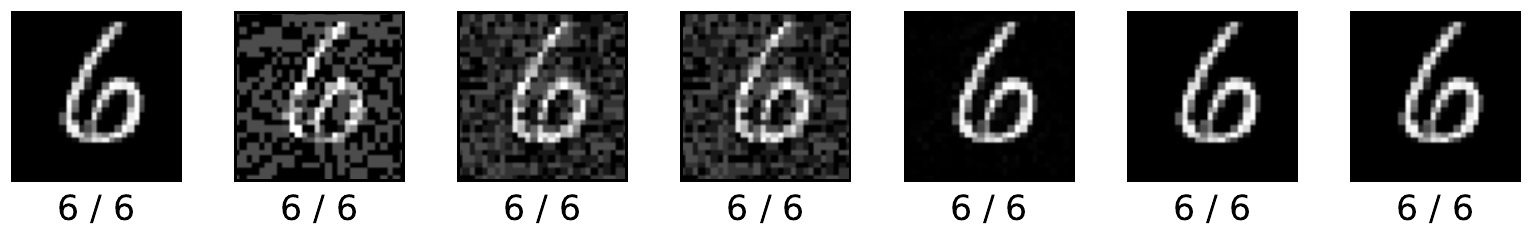}} \\

    \centering \includegraphics[width=.475\linewidth]{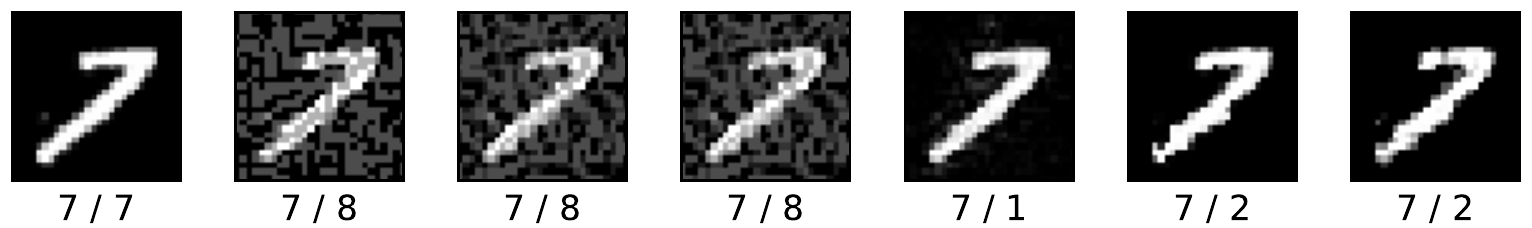} \hfill 	\hfill 		{\includegraphics[width=.475\linewidth]{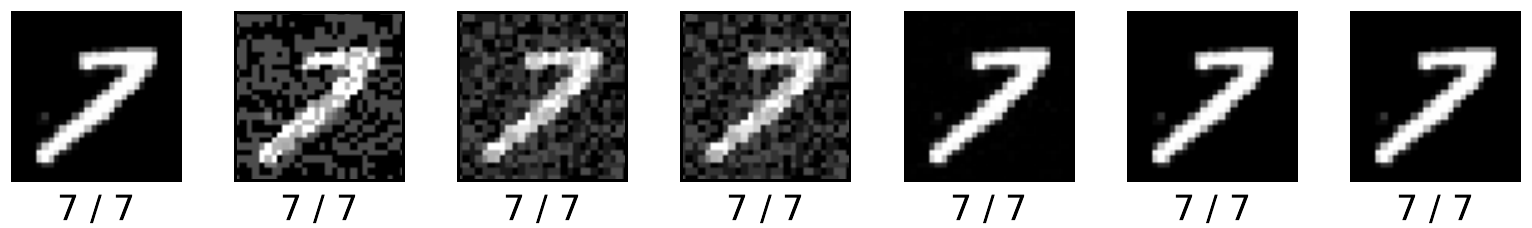}} \\

    \centering \includegraphics[width=.475\linewidth]{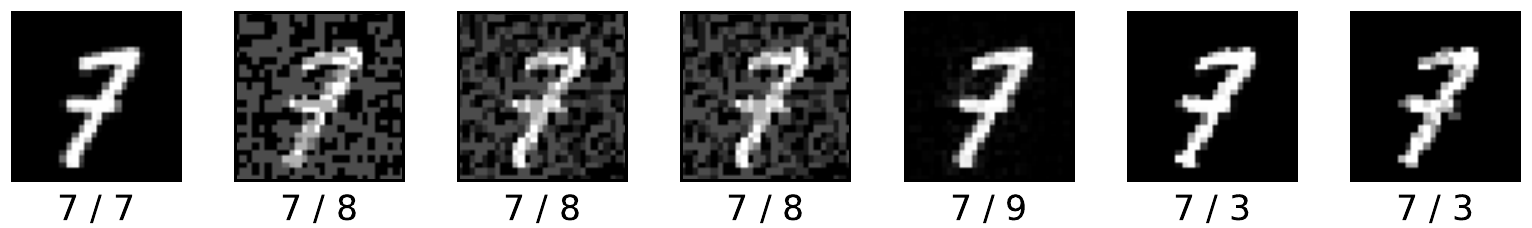} \hfill 	\hfill 			{\includegraphics[width=.475\linewidth]{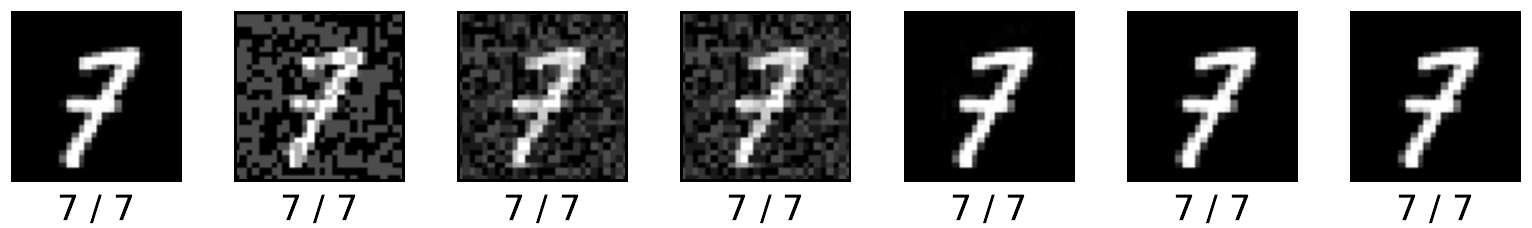}} \\

    \centering \includegraphics[width=.475\linewidth]{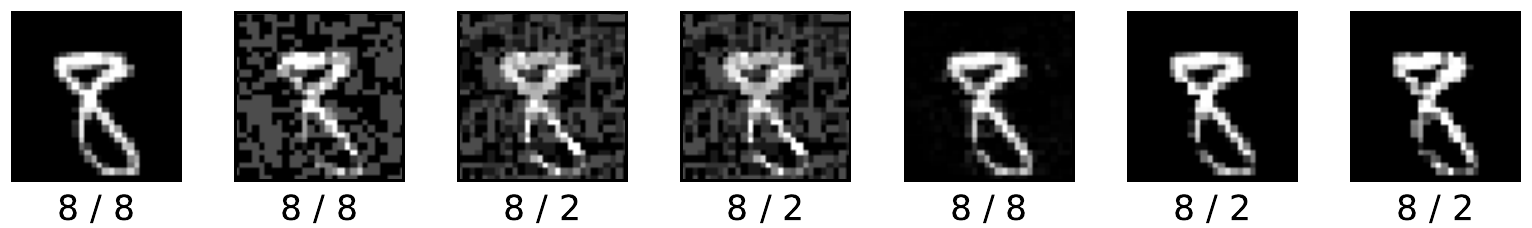} \hfill 	\hfill 			{\includegraphics[width=.475\linewidth]{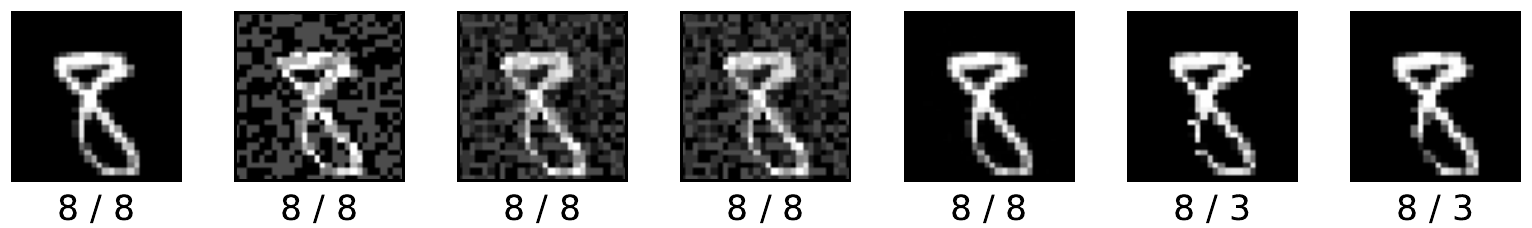}} \\
    
    \centering \includegraphics[width=.475\linewidth]{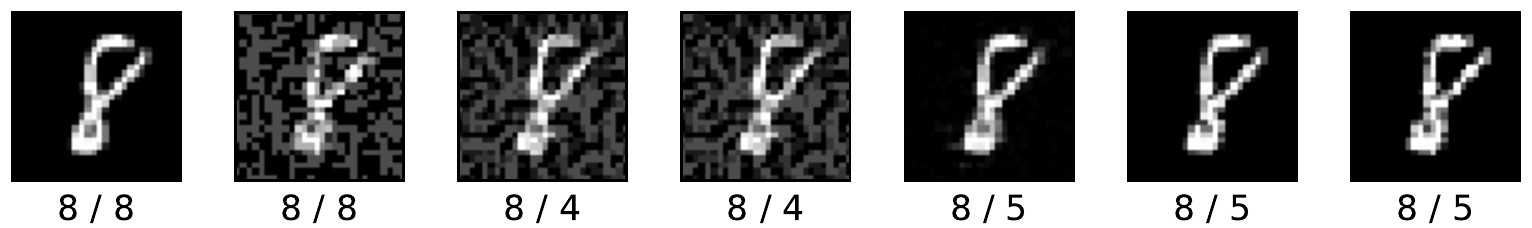} \hfill 	\hfill 			{\includegraphics[width=.475\linewidth]{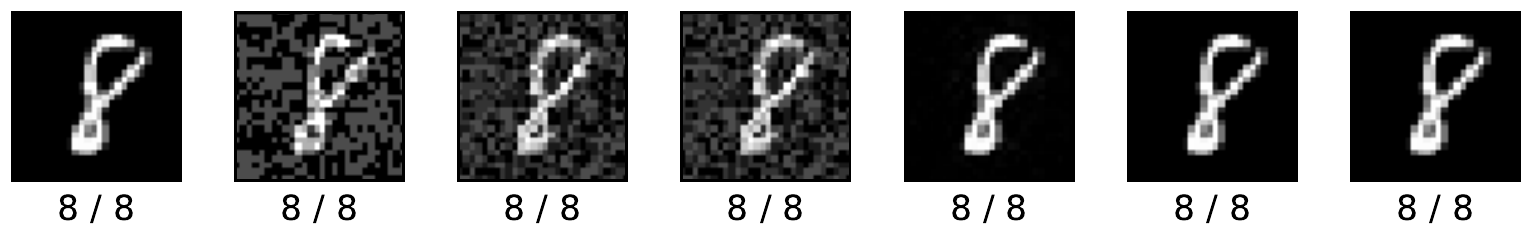}} \\

    \centering \includegraphics[width=.475\linewidth]{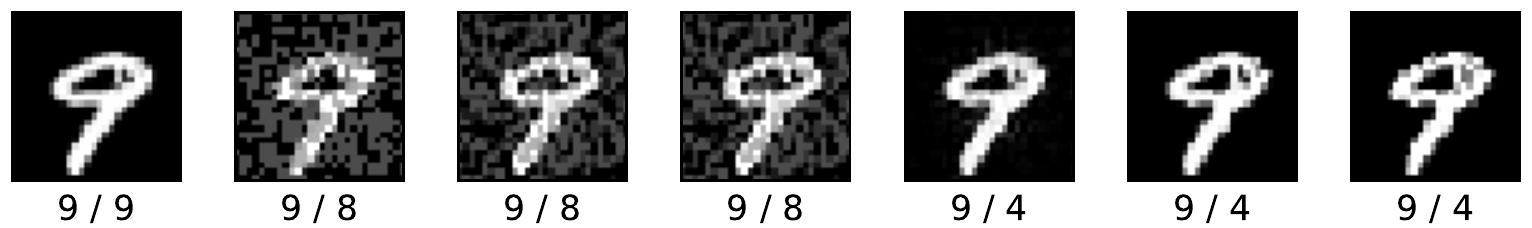} \hfill 	\hfill 			{\includegraphics[width=.475\linewidth]{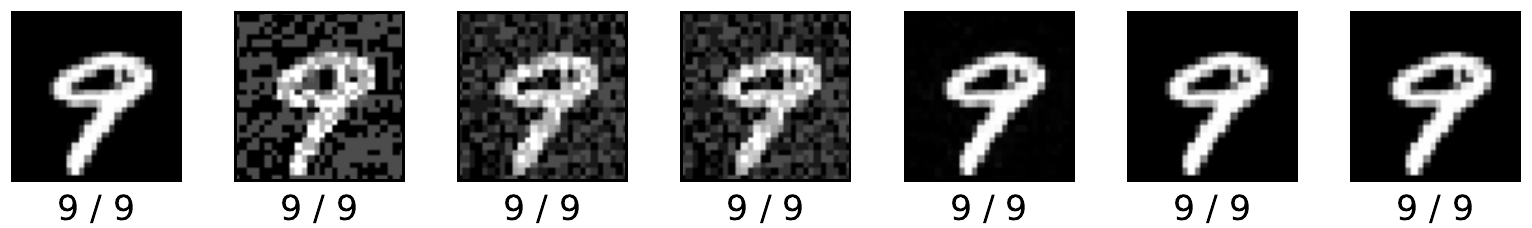}} \\

    \centering \includegraphics[width=.475\linewidth]{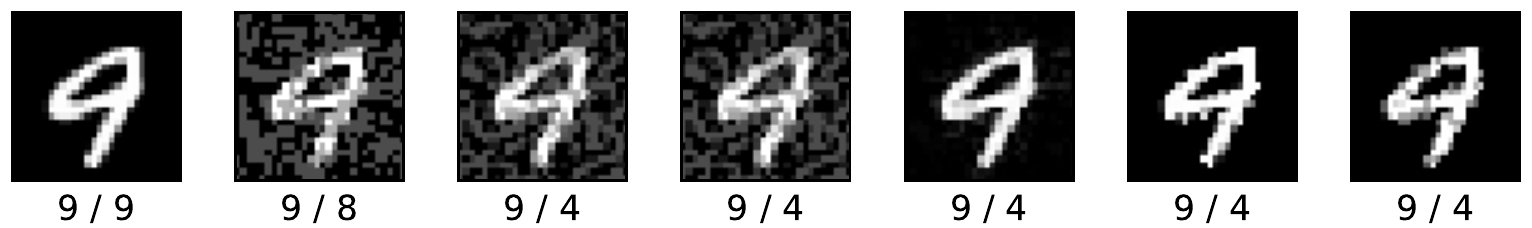} \hfill 	\hfill 		{\includegraphics[width=.475\linewidth]{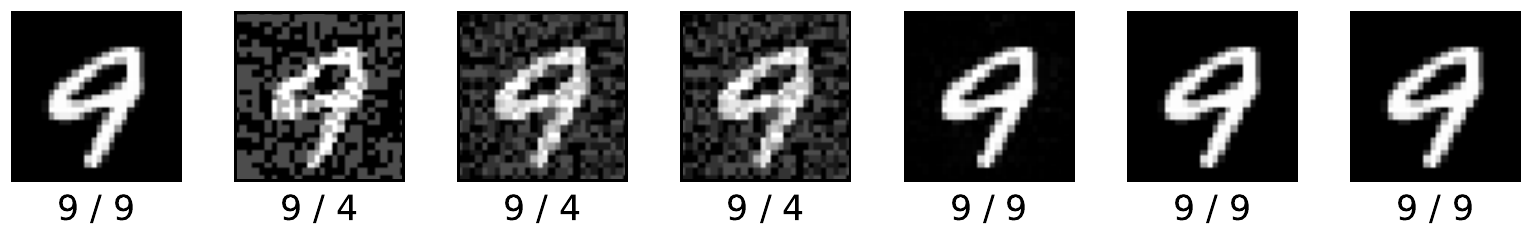}}

  \caption{Adversaries on MNIST. We show clean samples followed by their perturbed versions that we obtained with ART using FGSM, BIM, PGD, DeepFool, and the two Carlini and Wagner adversaries. For each example, we show the true and predicted class labels. In the left column, we display samples that originate from the regular network with ReLUs, denoted as MNIST-Net bn. On the right, we present images generated on the classifier with tent activations, MNIST-Net tent (0.12).}
  \label{fig:mnist_imgs}
\end{figure}

\subsection*{MNIST}

For the MNIST dataset, we present some examples for the regular classifier containing ReLUs and batch normalization -- we referred to this as MNIST-Net bn in Table 1 of the paper -- and for the model we have trained with a weight-decay of $0.12$ on $\delta$ parameters of tent activations that we denoted as MNIST-Net tent (0.12).

The clean test samples and their corresponding perturbed variants are shown in Figure~\ref{fig:mnist_imgs}.
We can observe that, in general, FGSM, BIM, and PGD produce very strong perturbations.
In other words, these methods appear to produce the worst adversarial perturbations qualitatively.
Since these perturbations are highly visible, one might not even consider them adversarial, by nature.

\subsection*{CIFAR-10}

Similar to MNIST, we compare adversarial examples qualitatively on two classifiers that we have trained on the CIFAR-10 dataset using the wide residual network (WRN) architecture: the regular WRN-28-10 with ReLUs and WRN-28-10 tent (0.004) from Table 2 of the paper.
The latter one has tent activations; a weight-decay of $0.004$ was applied to $\delta$ parameters to reduce open space risk via the constraining tent activation functions and, as the quantitative evaluation highlighted, improve the adversarial robustness of the classifier. 

The original test images and their corresponding perturbed variants are shown in Figure~\ref{fig:cifar10_imgs}.
We can observe that the quantitatively best performing adversary -- $l_2$-optimized version of Carlini and Wagner (CW $l_2$) attack -- produces some strong, quite visible perturbations on the classifier having tent activations.
This phenomenon demonstrates that the quantitatively improved adversarial robustness is further extended by qualitative enhancements; the adversary can still form perturbations that yield incorrect decisions by the classifier, but such perturbations need to be stronger and, hence, become more perceptible.

\section*{Summary}

The visual evaluation of adversarial examples presented in this section highlights that quantitative evaluation of adversarial robustness solely based upon accuracies may not be sufficient.
As we have seen, the application of tent activation functions yields classifiers that require adversaries to form stronger perturbations which are more perceptible to human observers.
Note that measuring perceptible/imperceptible changes with respect to the human visual system is not trivial; $l_p$ norms are not applicable \citep{sabour2016adversarial}.

\begin{figure}

    \centering \includegraphics[width=.475\linewidth]{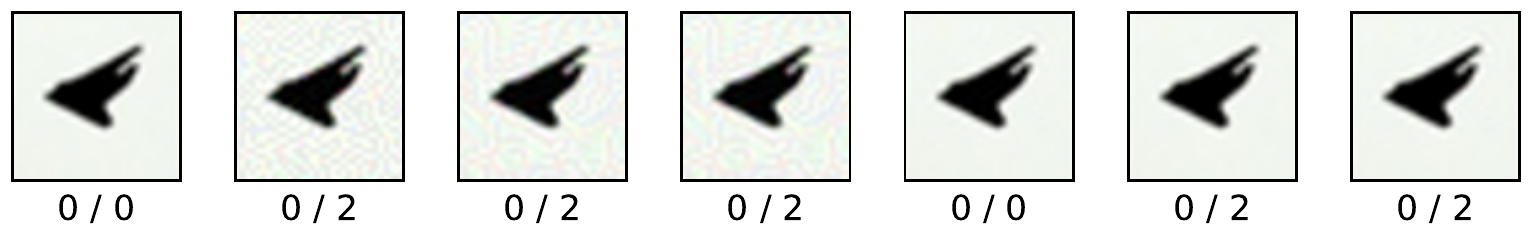} \hfill 	\hfill 			{\includegraphics[width=.475\linewidth]{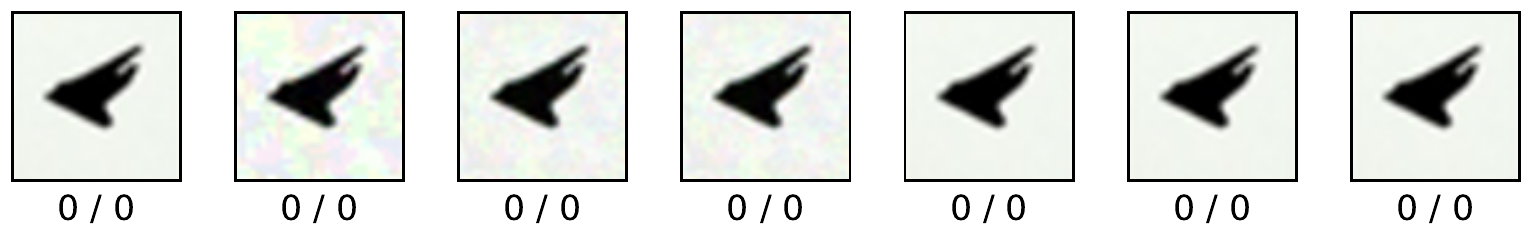}} \\

    \centering \includegraphics[width=.475\linewidth]{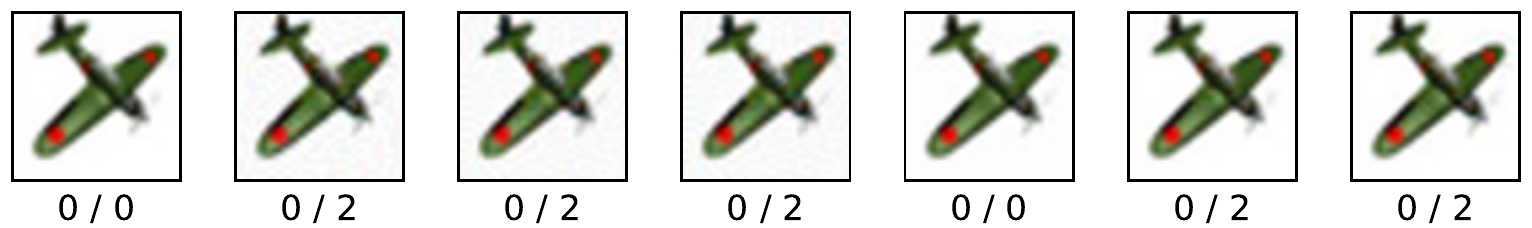} \hfill 	\hfill 			{\includegraphics[width=.475\linewidth]{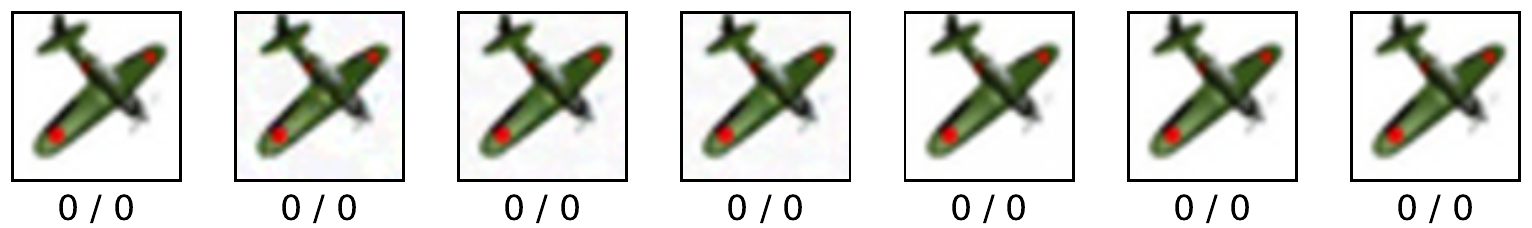}} \\

    \centering \includegraphics[width=.475\linewidth]{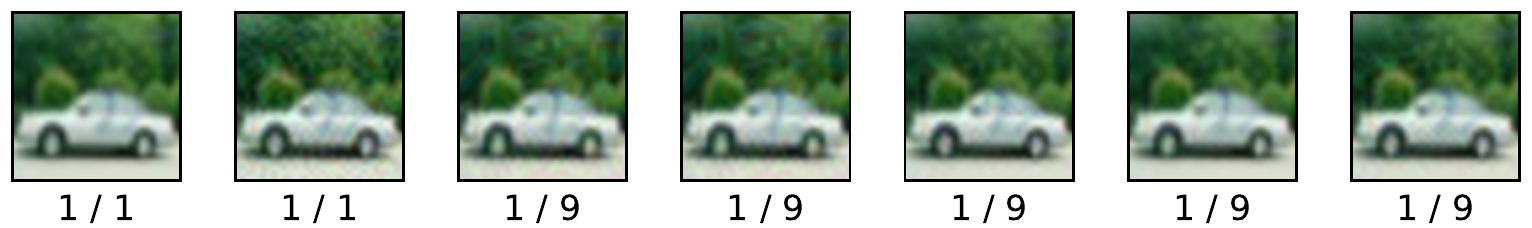} \hfill 	\hfill 			{\includegraphics[width=.475\linewidth]{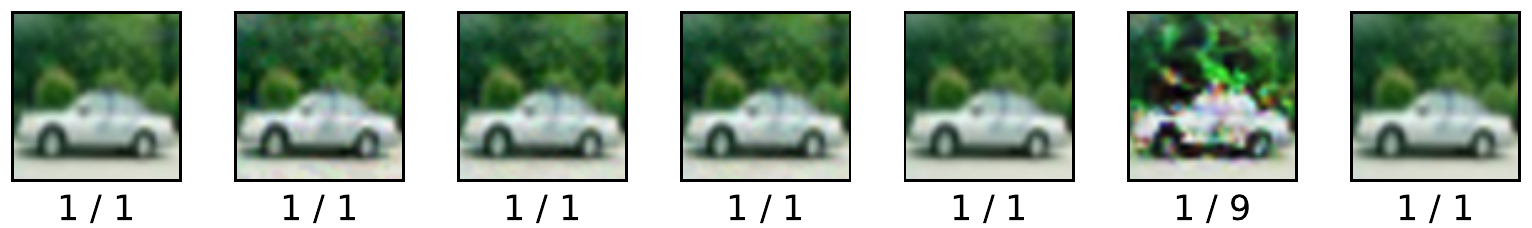}} \\

    \centering \includegraphics[width=.475\linewidth]{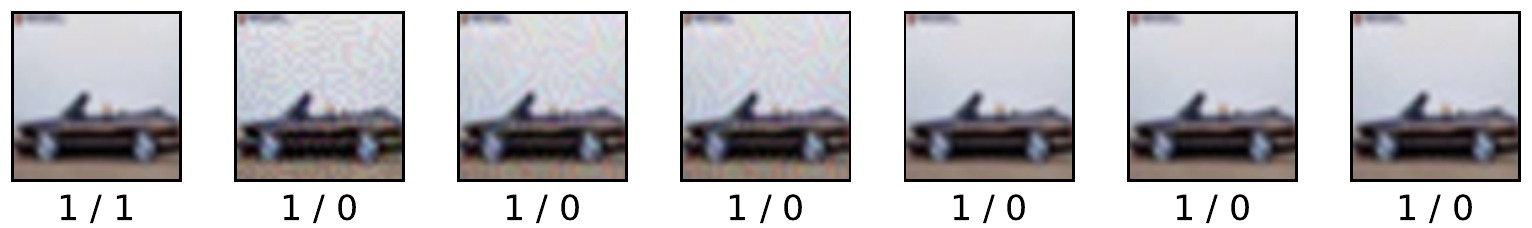} \hfill 	\hfill 			{\includegraphics[width=.475\linewidth]{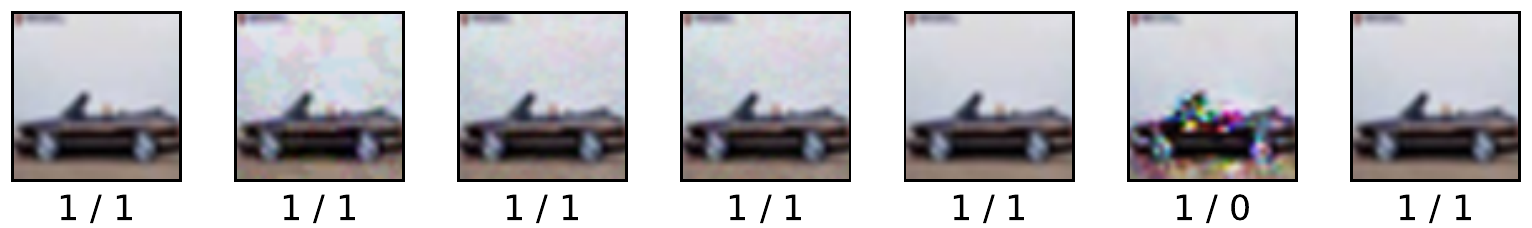}} \\

    \centering \includegraphics[width=.475\linewidth]{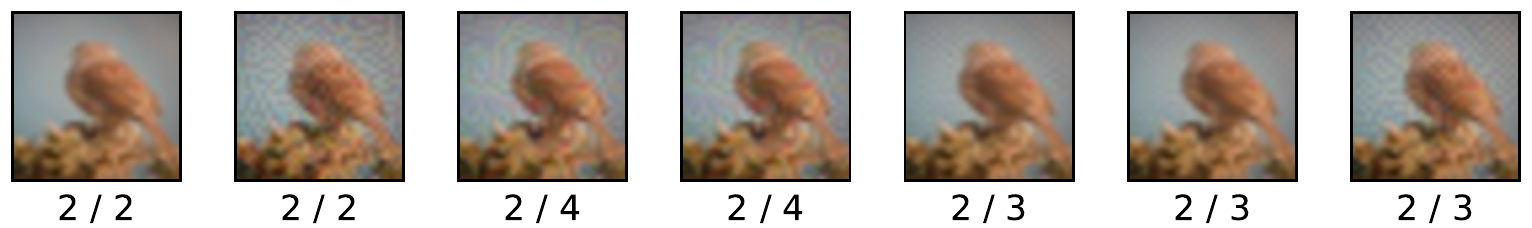} \hfill 	\hfill 			{\includegraphics[width=.475\linewidth]{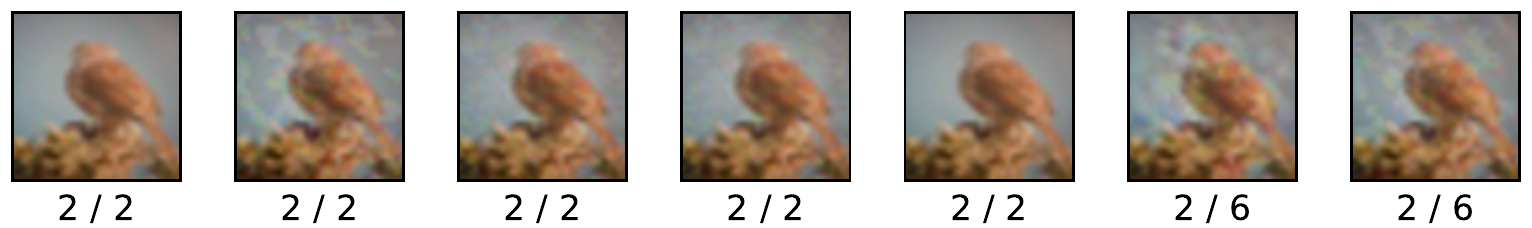}} \\

    \centering \includegraphics[width=.475\linewidth]{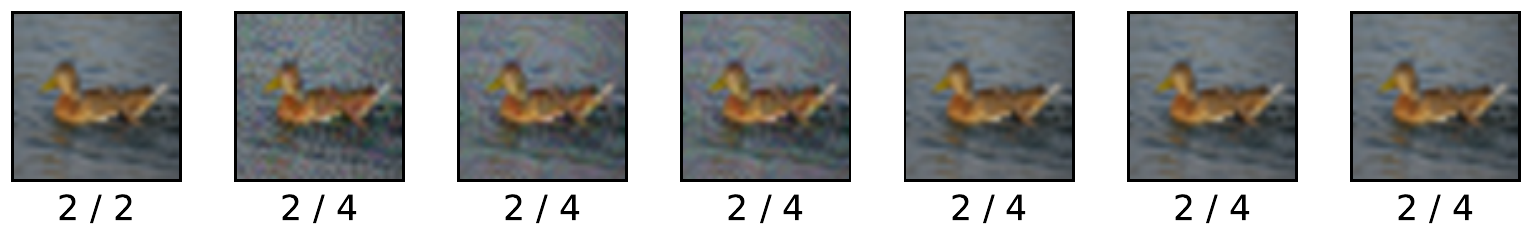} \hfill 	\hfill 			{\includegraphics[width=.475\linewidth]{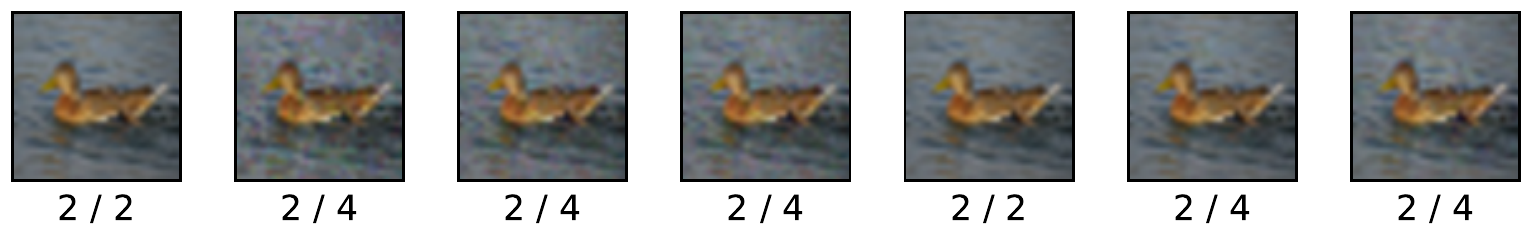}} \\

    \centering \includegraphics[width=.475\linewidth]{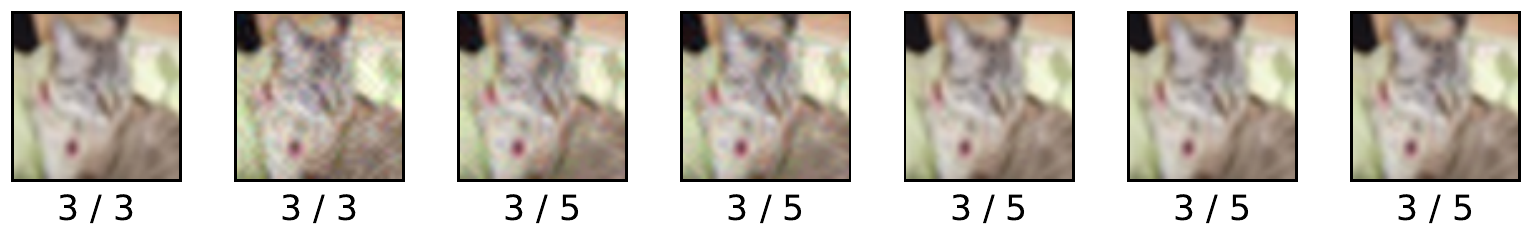} \hfill 	\hfill 			{\includegraphics[width=.475\linewidth]{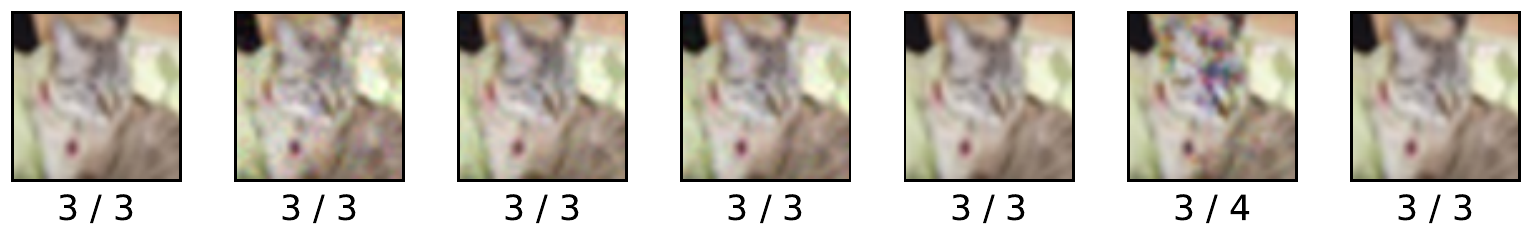}} \\

    \centering \includegraphics[width=.475\linewidth]{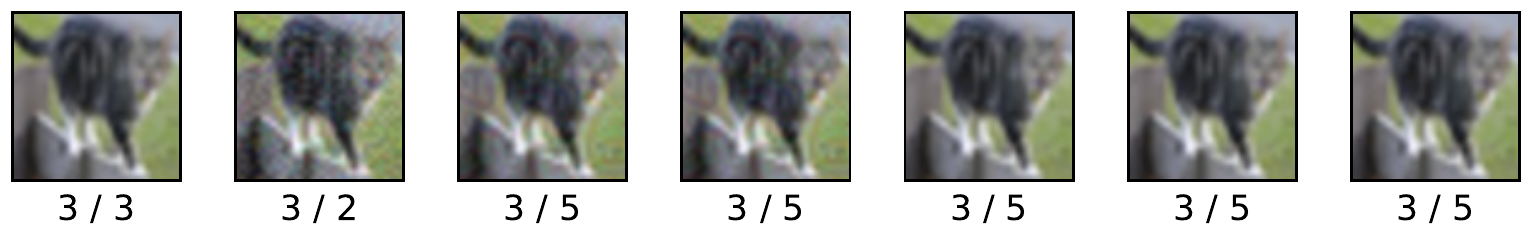} \hfill 	\hfill 			{\includegraphics[width=.475\linewidth]{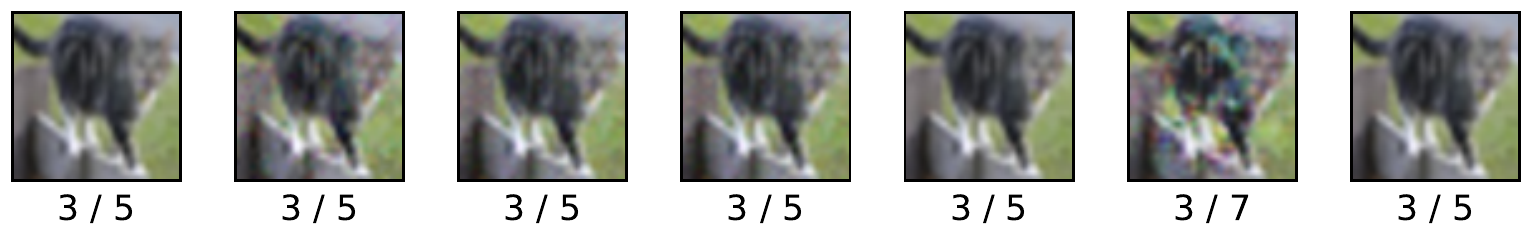}} \\

    \centering \includegraphics[width=.475\linewidth]{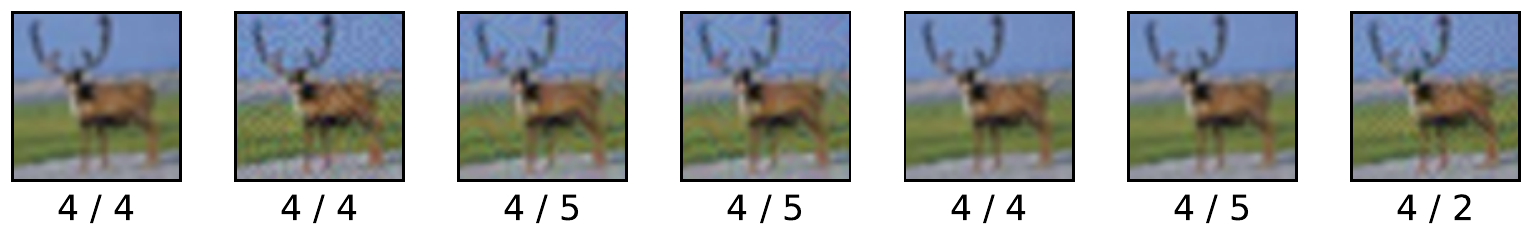} \hfill 	\hfill 			{\includegraphics[width=.475\linewidth]{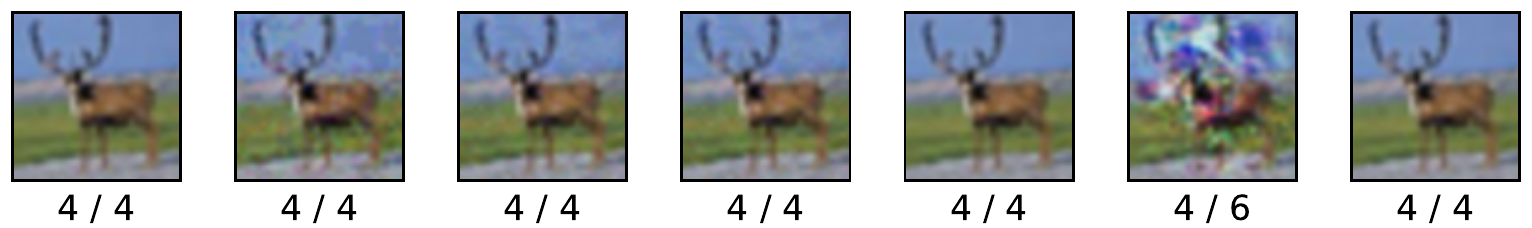}} \\

    \centering \includegraphics[width=.475\linewidth]{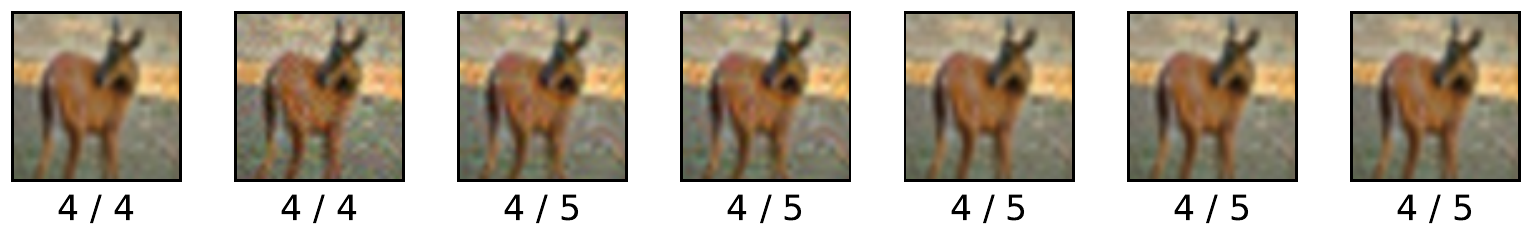} \hfill 	\hfill 			{\includegraphics[width=.475\linewidth]{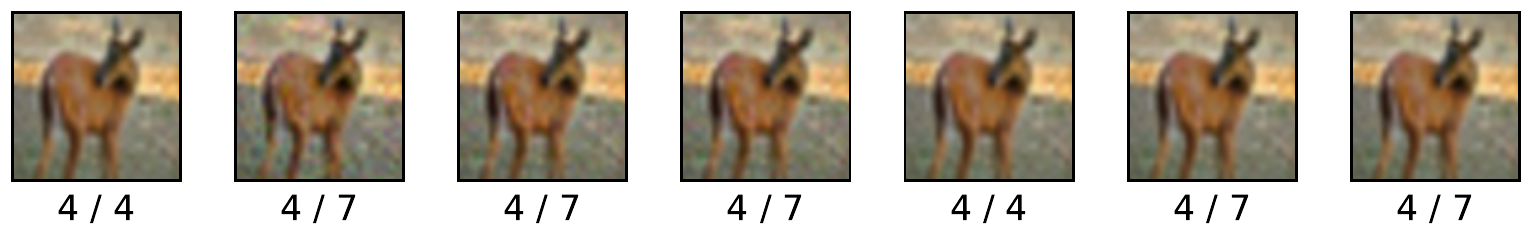}} \\

    \centering \includegraphics[width=.475\linewidth]{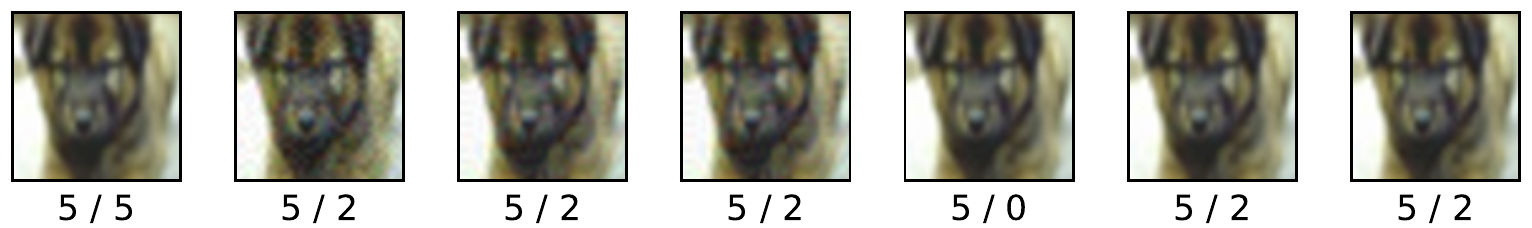} \hfill 	\hfill 			{\includegraphics[width=.475\linewidth]{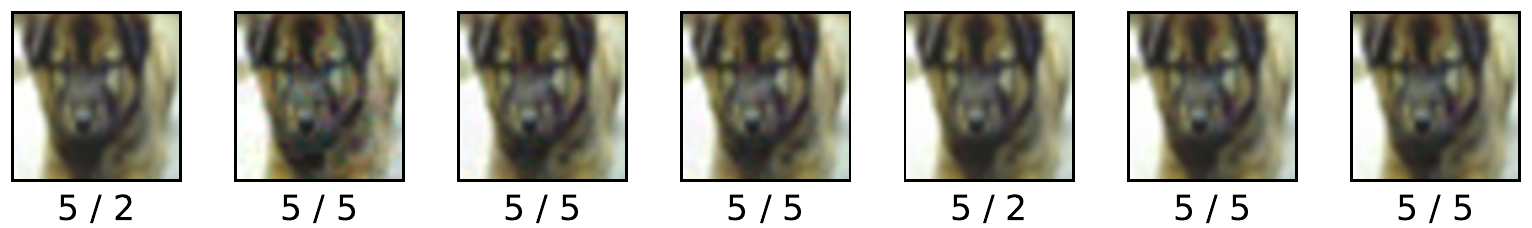}} \\

    \centering \includegraphics[width=.475\linewidth]{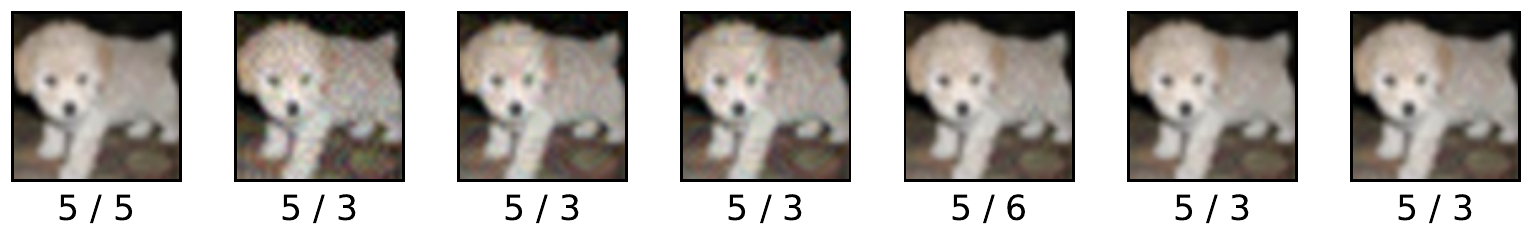} \hfill 	\hfill 			{\includegraphics[width=.475\linewidth]{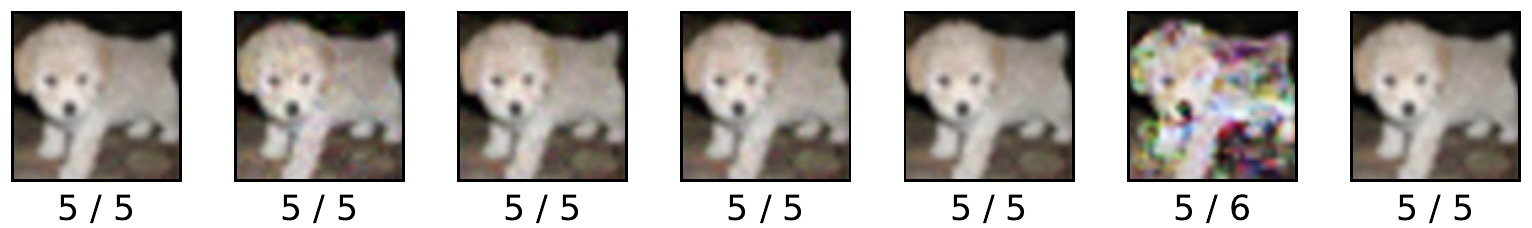}} \\

    \centering \includegraphics[width=.475\linewidth]{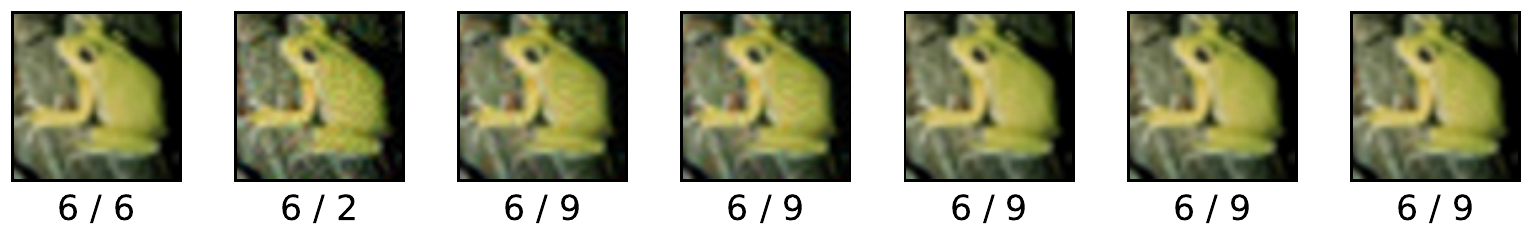} \hfill 	\hfill 		{\includegraphics[width=.475\linewidth]{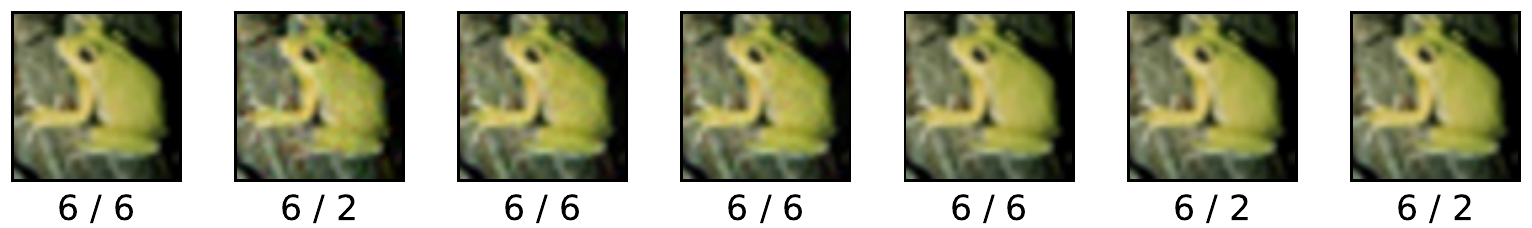}} \\

    \centering \includegraphics[width=.475\linewidth]{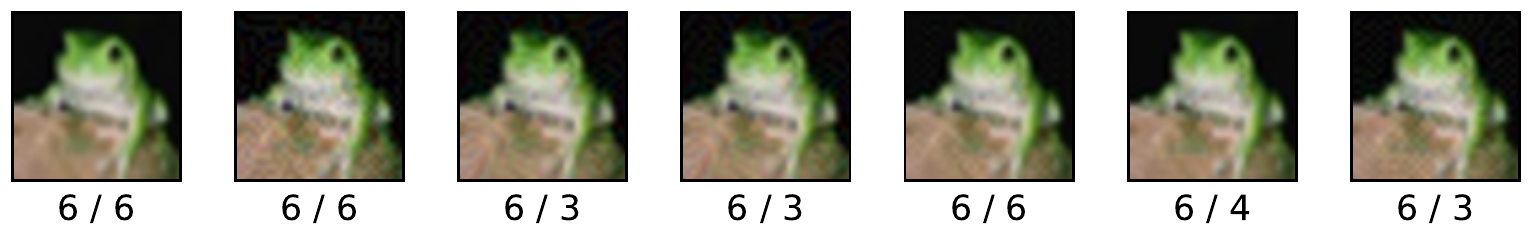} \hfill 	\hfill 			{\includegraphics[width=.475\linewidth]{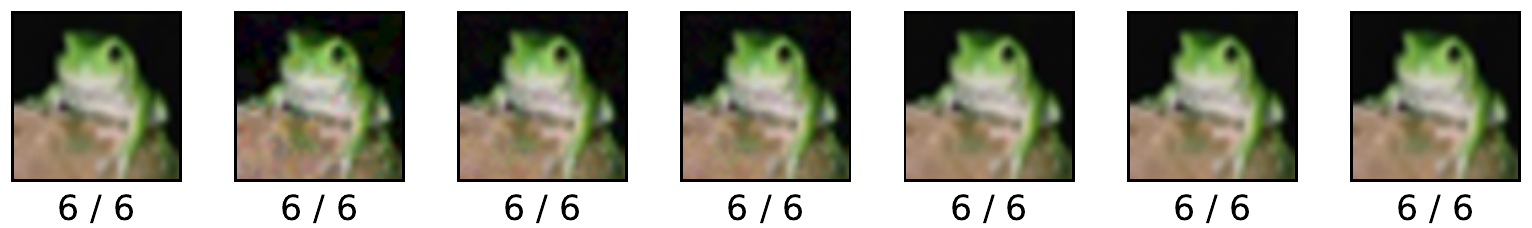}} \\

    \centering \includegraphics[width=.475\linewidth]{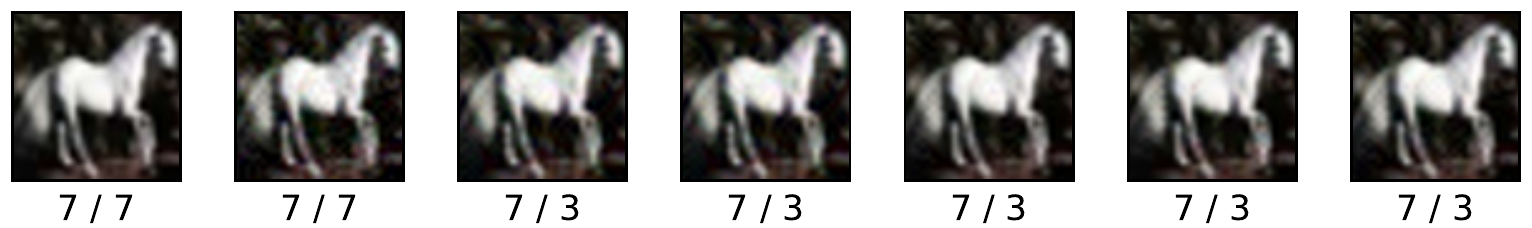} \hfill 	\hfill 			{\includegraphics[width=.475\linewidth]{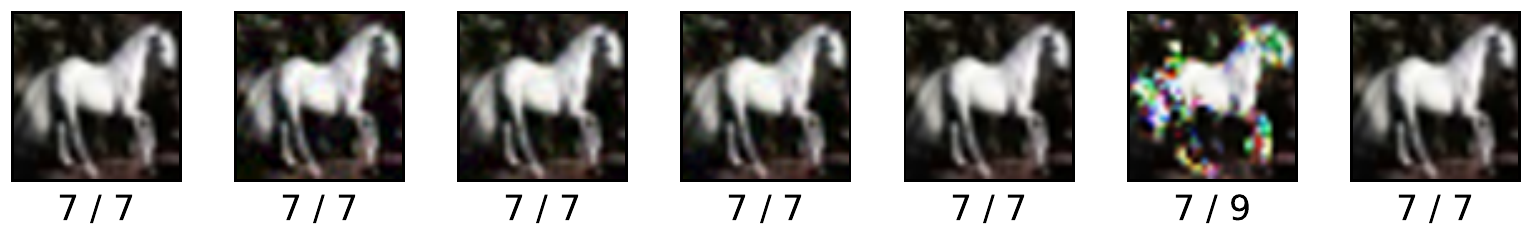}} \\

    \centering \includegraphics[width=.475\linewidth]{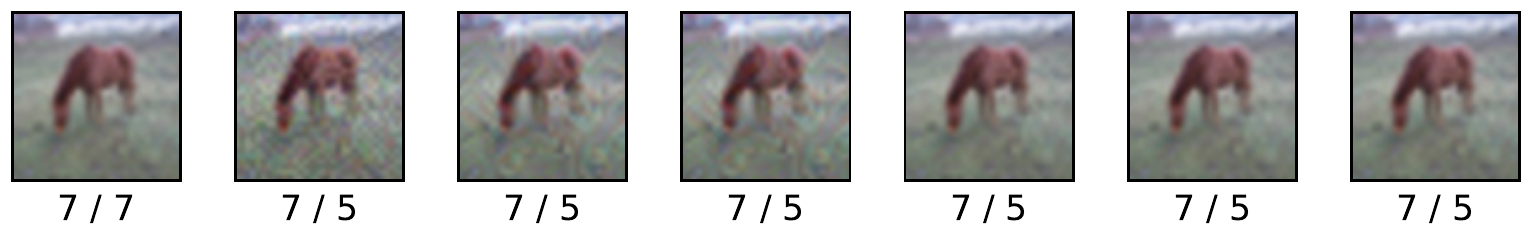} \hfill 	\hfill 		{\includegraphics[width=.475\linewidth]{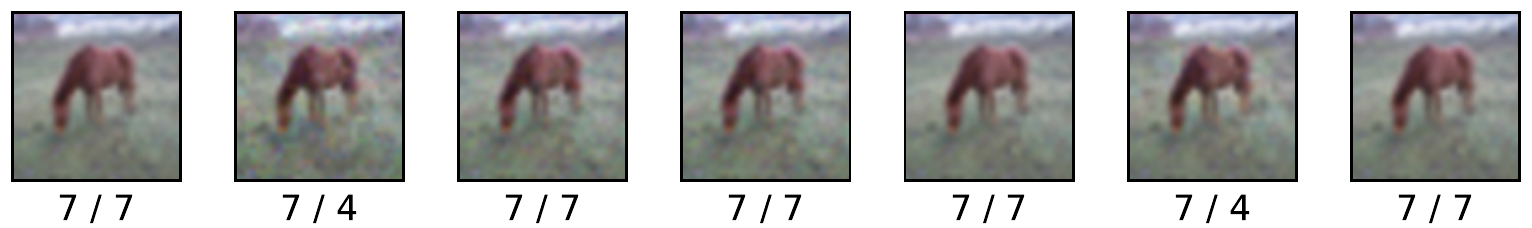}} \\

    \centering \includegraphics[width=.475\linewidth]{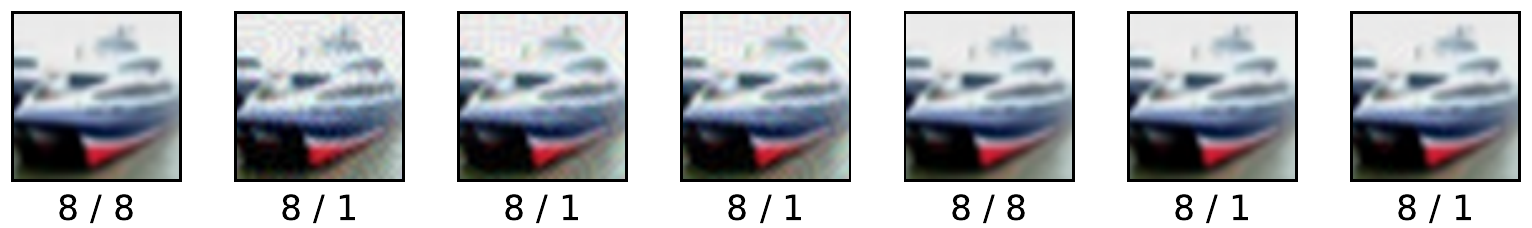} \hfill 	\hfill 			{\includegraphics[width=.475\linewidth]{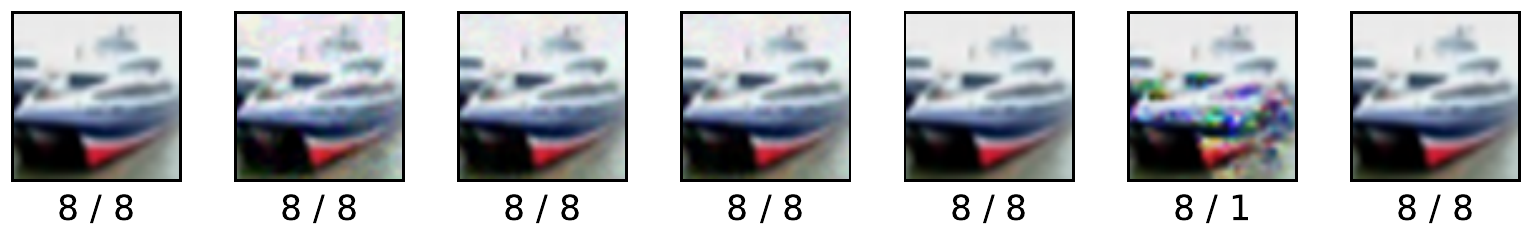}} \\

    \centering \includegraphics[width=.475\linewidth]{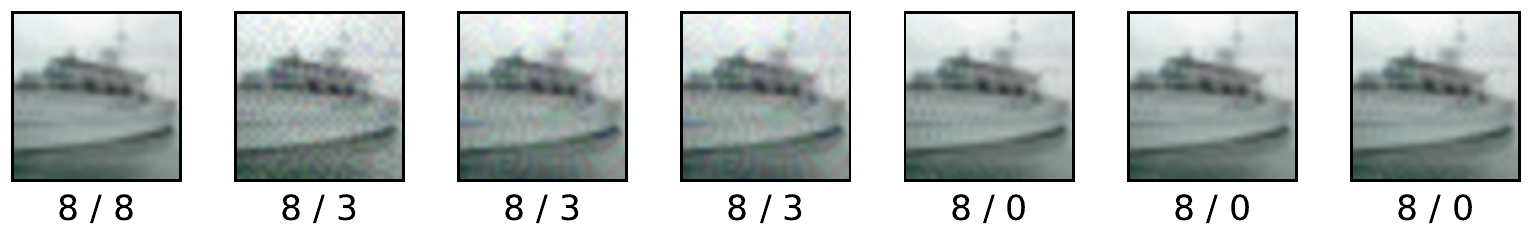} \hfill 	\hfill 			{\includegraphics[width=.475\linewidth]{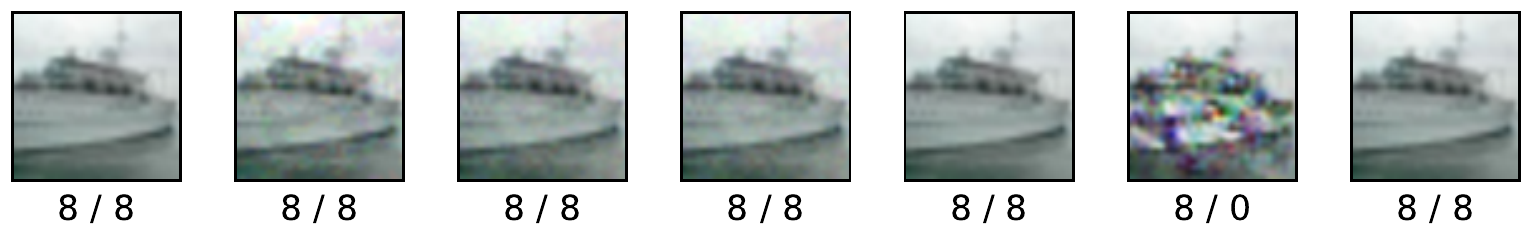}} \\

    \centering \includegraphics[width=.475\linewidth]{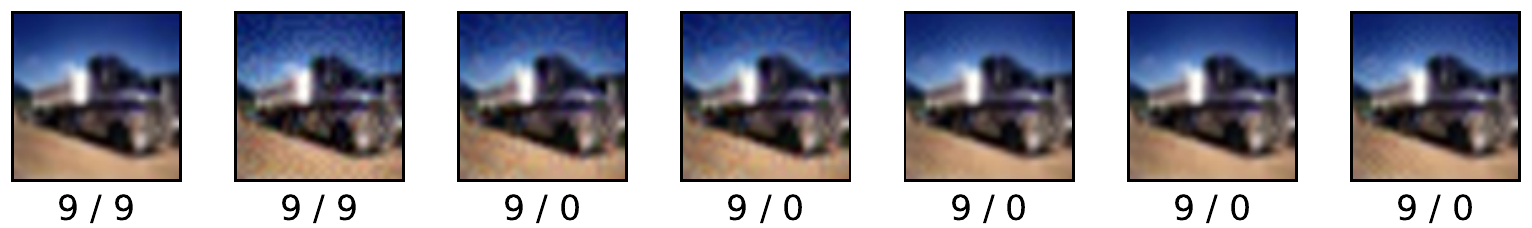} \hfill 	\hfill 			{\includegraphics[width=.475\linewidth]{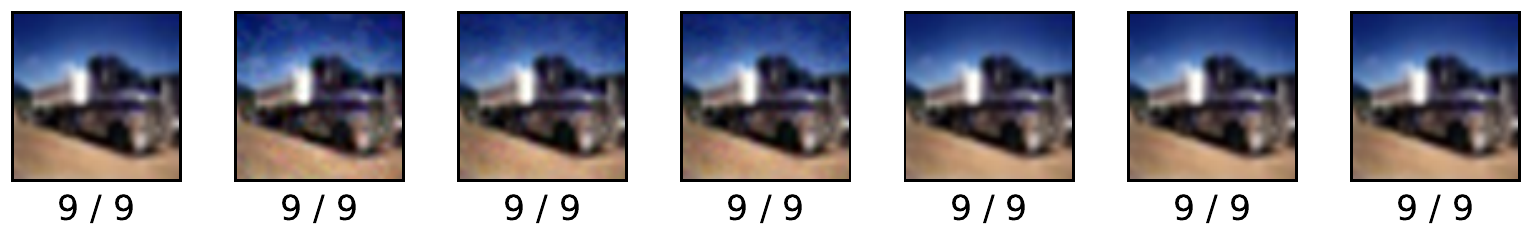}} \\

    \centering \includegraphics[width=.475\linewidth]{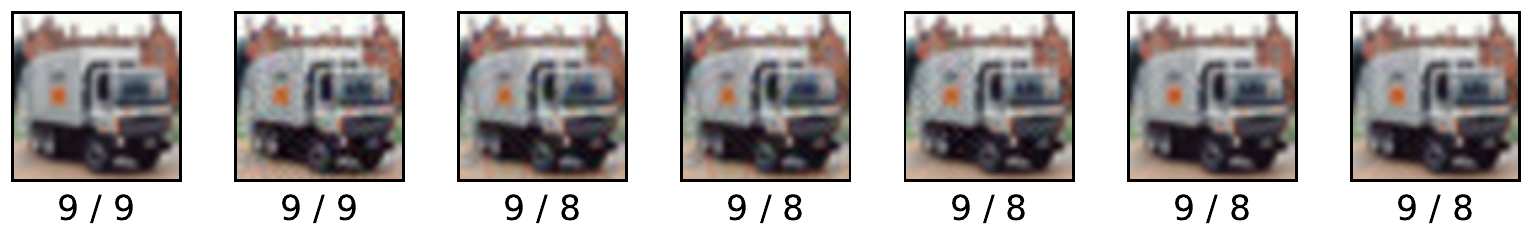} \hfill 	\hfill 			{\includegraphics[width=.475\linewidth]{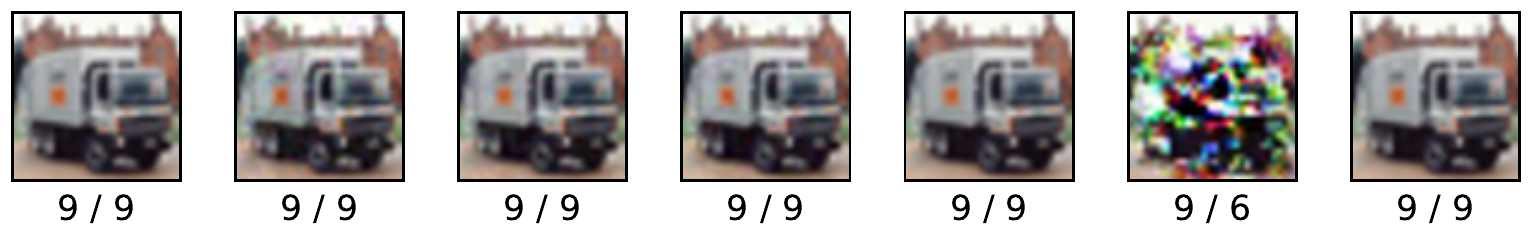}} \\

  \caption{Adversaries on CIFAR-10. We show clean samples followed by their perturbed versions that we obtained with ART using FGSM, BIM, PGD, DeepFool, and the two Carlini and Wagner adversaries. For each example, we show the true and predicted class indeces. In the left column, we display samples that originate from the regular network with ReLUs, denoted as WRN-28-10. On the right, we present the resulting images on the classifier with tent activations, WRN-28-10 tent (0.004).}
  \label{fig:cifar10_imgs}
\end{figure}

\section*{APPENDIX B: Adversarial Perturbations and the Open Space}
\label{sec:appendixb}

The core idea of our paper is that adversaries take advantage of deep neural networks (DNNs) by pushing feature representations into the open space via small perturbations.
In the right side of Figure \ref{fig:tent}, we have shown the distributions of activation's inputs for clean test examples and their perturbed counterparts obtained by using projected gradient descent (PGD) \cite{madry2018towards}.
We have highlighted that distributions of the top two activation layers' inputs of the DNN containing the widely used ReLUs significantly differ under the adversarial attack.
Here, we present additional results.

\subsection*{Distribution of Activation Inputs}

In Figure \ref{fig:activ_in}, we show the mixed distributions of activation inputs at each layer of three DNNs trained on the MNIST dataset:
\begin{itemize}[noitemsep,topsep=0pt,parsep=0pt,partopsep=0pt]
\item left column: the regularly trained model having ReLU activation functions, denoted as MNIST-Net bn in Table \ref{results_mnist},
\item middle column: the DNN with ReLUs we obtained using adversarial training via PGD as described in \cite{madry2018towards}, listed as MNIST-Net adv in the paper,
\item right column: the DNN containing tent activation functions that we trained with a weight-decay of $0.12$ on $\delta$ parameters of tents, denoted as MNIST-Net tent (0.12).
\end{itemize}

Note that these distributions are mixed regarding both dimensions and classes of the dataset.
Ideally, one would consider analyzing the feature representations of different classes separately to see whether adversarial perturbations drive those into the open space at different dimensions.
However, even these mixed distributions demonstrate with one of the strongest adversary, PGD, that the formed adversarial perturbations significantly change the range of activation inputs.

Considering the distributions of the regularly trained DNN represented in the left column of Figure \ref{fig:activ_in}, we can see  that, in general, there is an increase in activation's inputs due to the adversarial attack via PGD.
Note that this adversary is very successful on this trained model as only 39 out of the 10k perturbed test examples remain correctly recognized by the classifier.
For the top two activation layers, compared to the clean test examples the adversarial perturbations yield feature representations that were clearly ``unseen'' by the classifier before as the visualized distributions for PGD significantly widen.
Consequently, the examined perturbations push feature representations into the open space.

Since adversarial training improves the robustness of DNNs to adversarial attacks, it is natural to ask how activation's inputs evolve during attacks.
To answer that question we show the same distributions in the middle column of Figure \ref{fig:activ_in} for the classifier that we obtained with adversarial training via PGD.
Note that this classifier achieves 91.46\% accuracy on test examples that were perturbed via PGD.
While this DNN also contains ReLU activation functions, the obtained distributions for clean and PGD samples remain similar.
Taking a closer look at those histograms, we can notice changes but the range of activation's inputs remain unchanged.  

Finally, we show the distributions for the classifier having tent activation functions in the right column of Figure \ref{fig:activ_in}.
This DNN achieves 88.37\% accuracy on the 10k test examples that were perturbed via PGD.
For each tent activation layer, we present the learned $\delta$ parameter which specifies the size of the activation function and, eventually, represents the maximum output the particular tent activation can produce with a zero input.
Looking at the visualized distributions of clean and PGD examples, we can only notice a bigger difference at the top layer, the others remain fairly stable.
Interestingly, this classifier has the largest tent activation functions at the top layer; the size (0.2337) is nearly twice the size of the second largest tent (0.1279).

\subsection*{Summary}

In this section, we have demonstrated how activation's inputs evolve under adversarial attacks via PGD compared to clean test examples.
While our results presented in this paper are more than promising, we believe that the full potential of tent activation functions has not been revealed yet.
The naive approach that we used in our experiments -- namely, the application of the same weight-decay on all tent activation layers -- probably yields sub-optimal results.
Future work can explore other ways to limit the size of tents and apply different policies at the different activation layers, for example, focusing more on top layers.

\begin{figure}

    \centering
    \includegraphics[width=.3\linewidth]{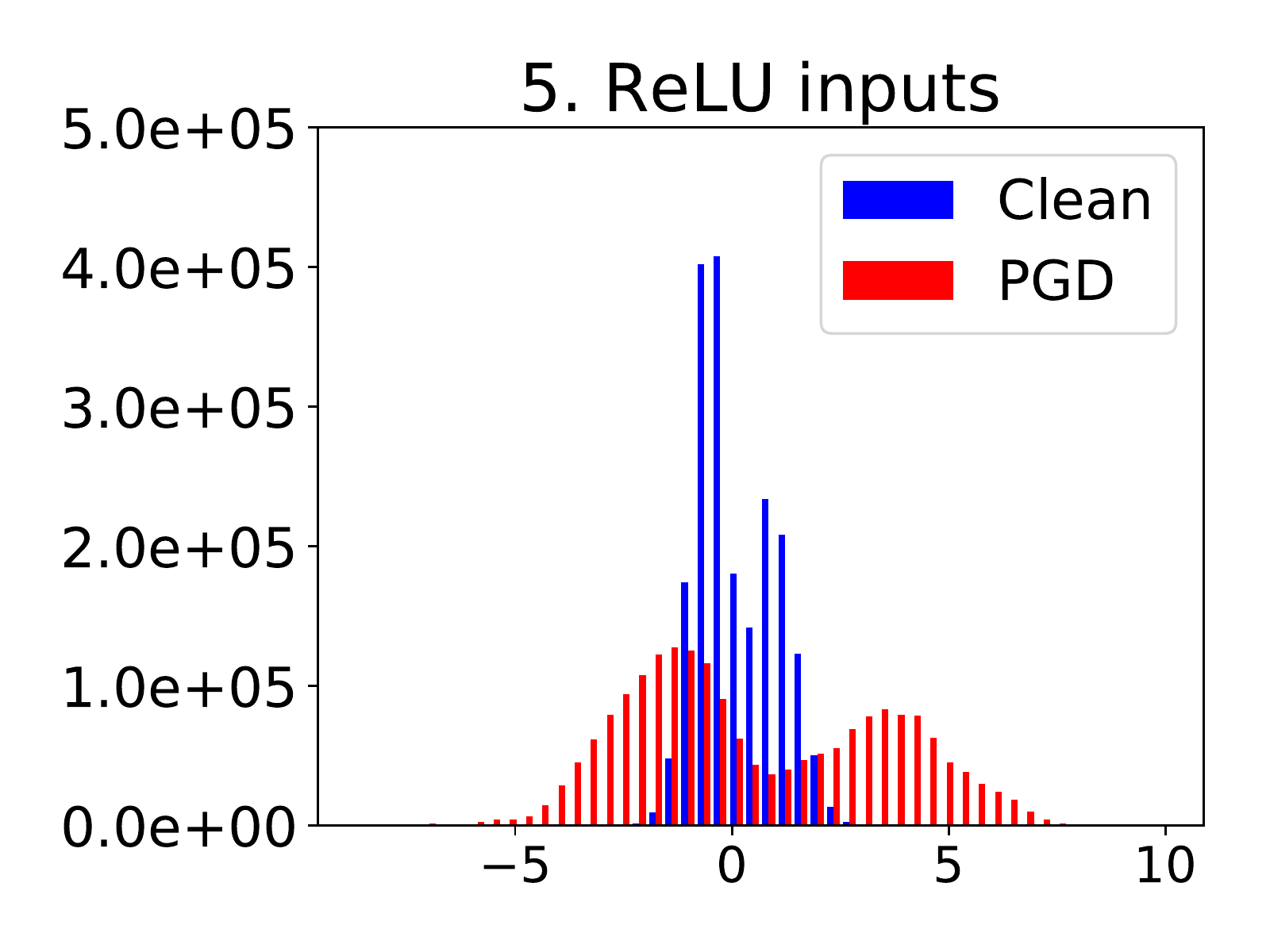} \hfill
    \includegraphics[width=.3\linewidth]{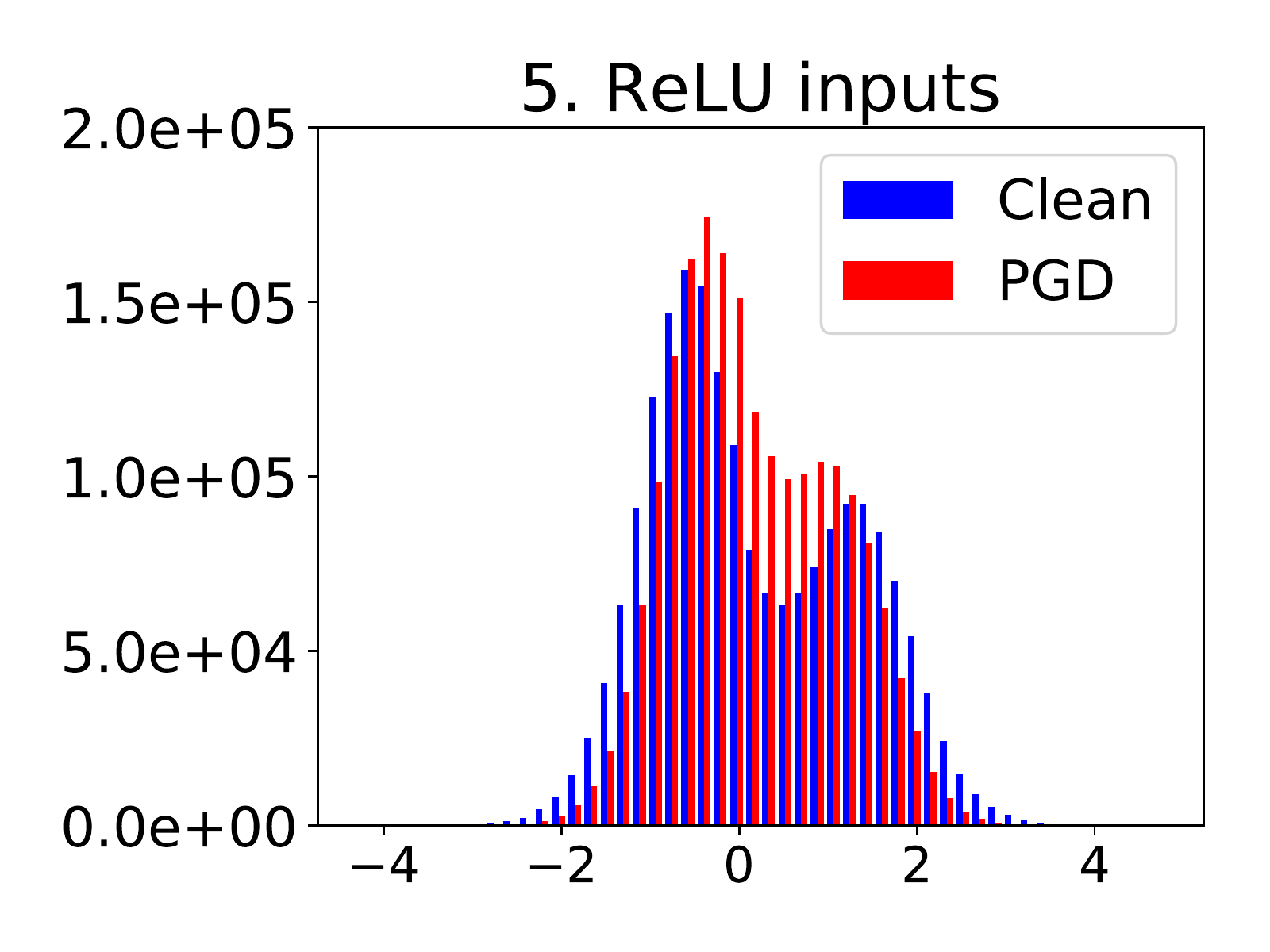} \hfill
    \includegraphics[width=.3\linewidth]{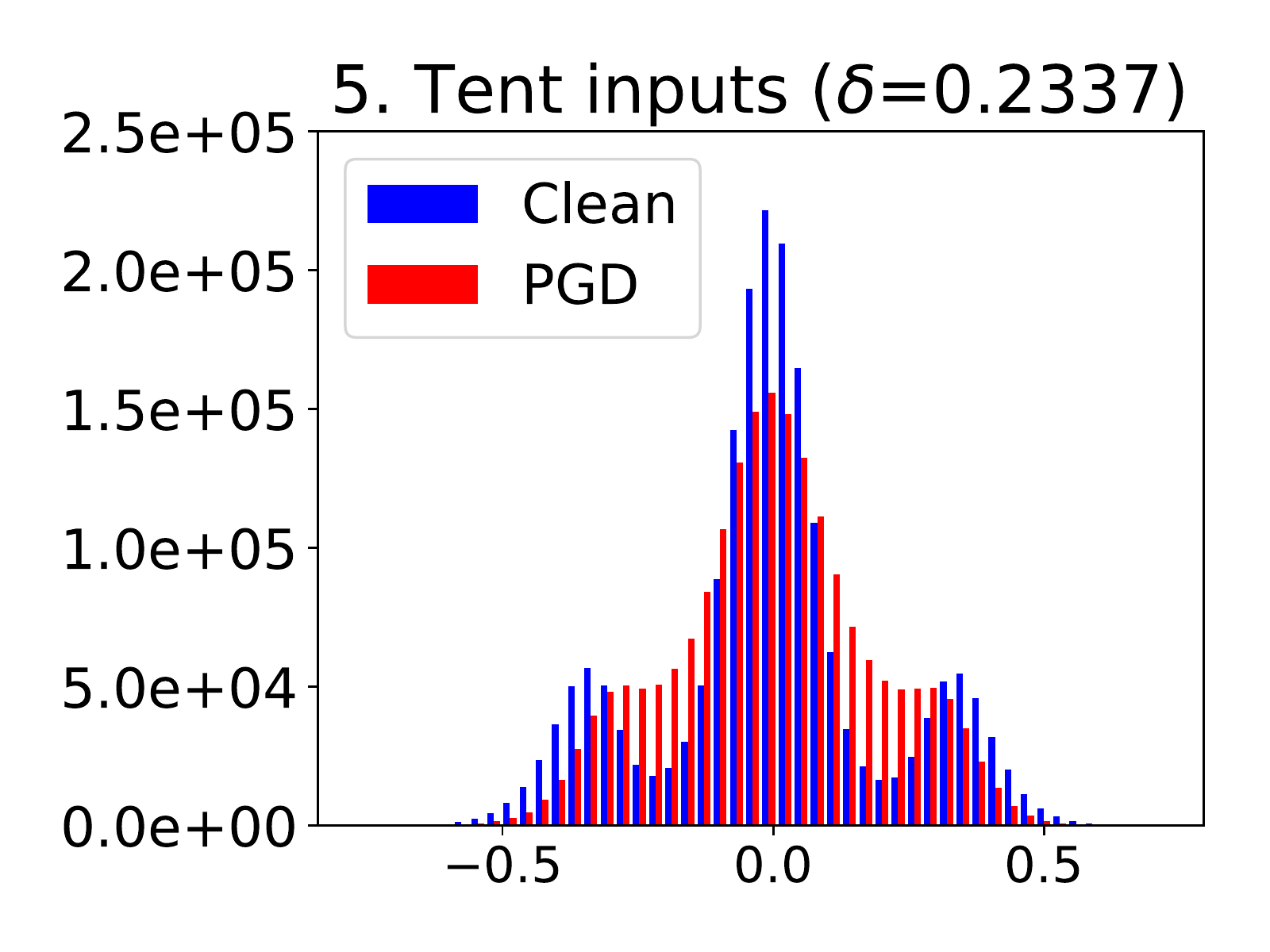} \\

    \centering
    \includegraphics[width=.3\linewidth]{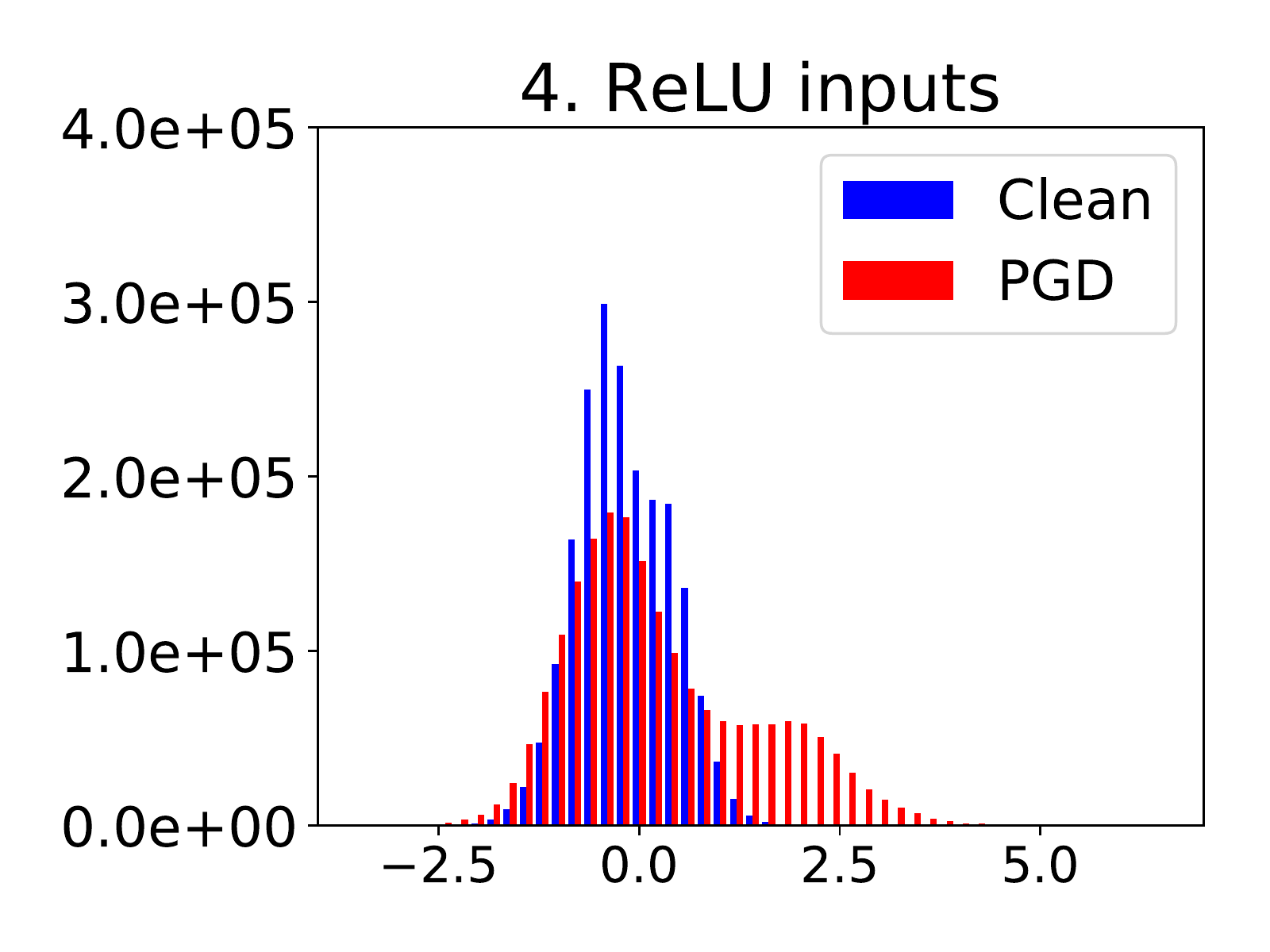} \hfill
    \includegraphics[width=.3\linewidth]{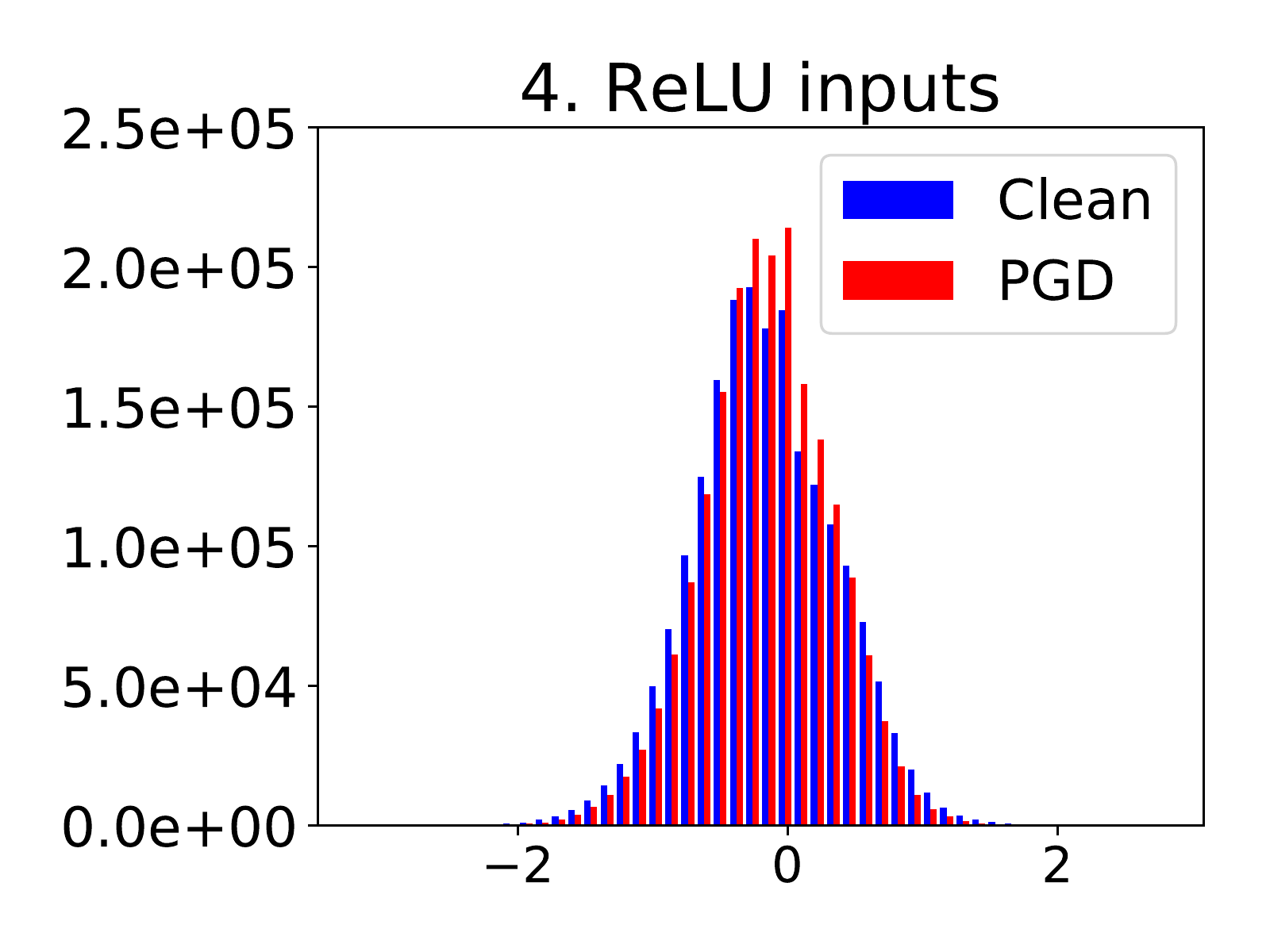} \hfill 
    \includegraphics[width=.3\linewidth]{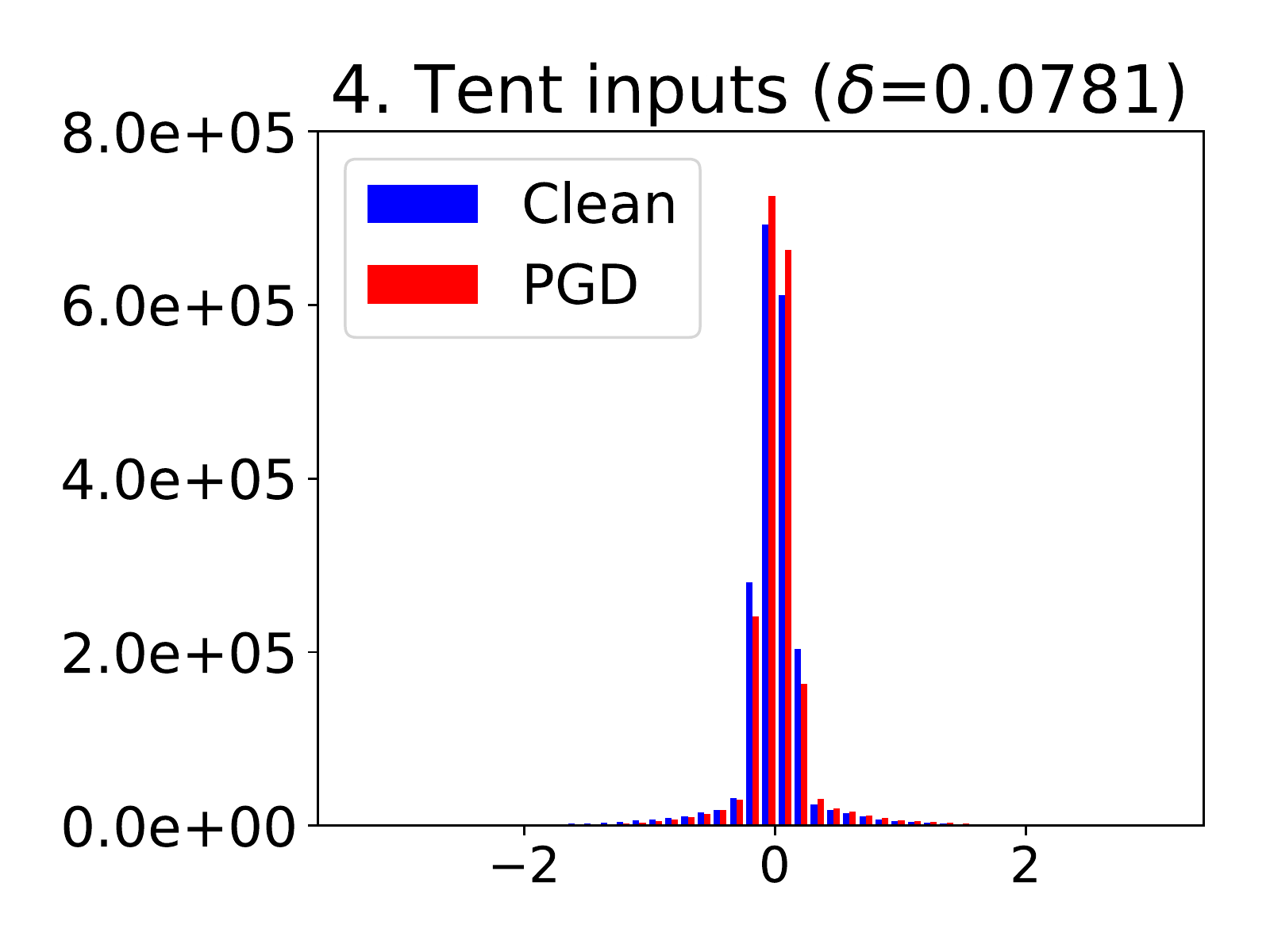} \\

    \centering
    \includegraphics[width=.3\linewidth]{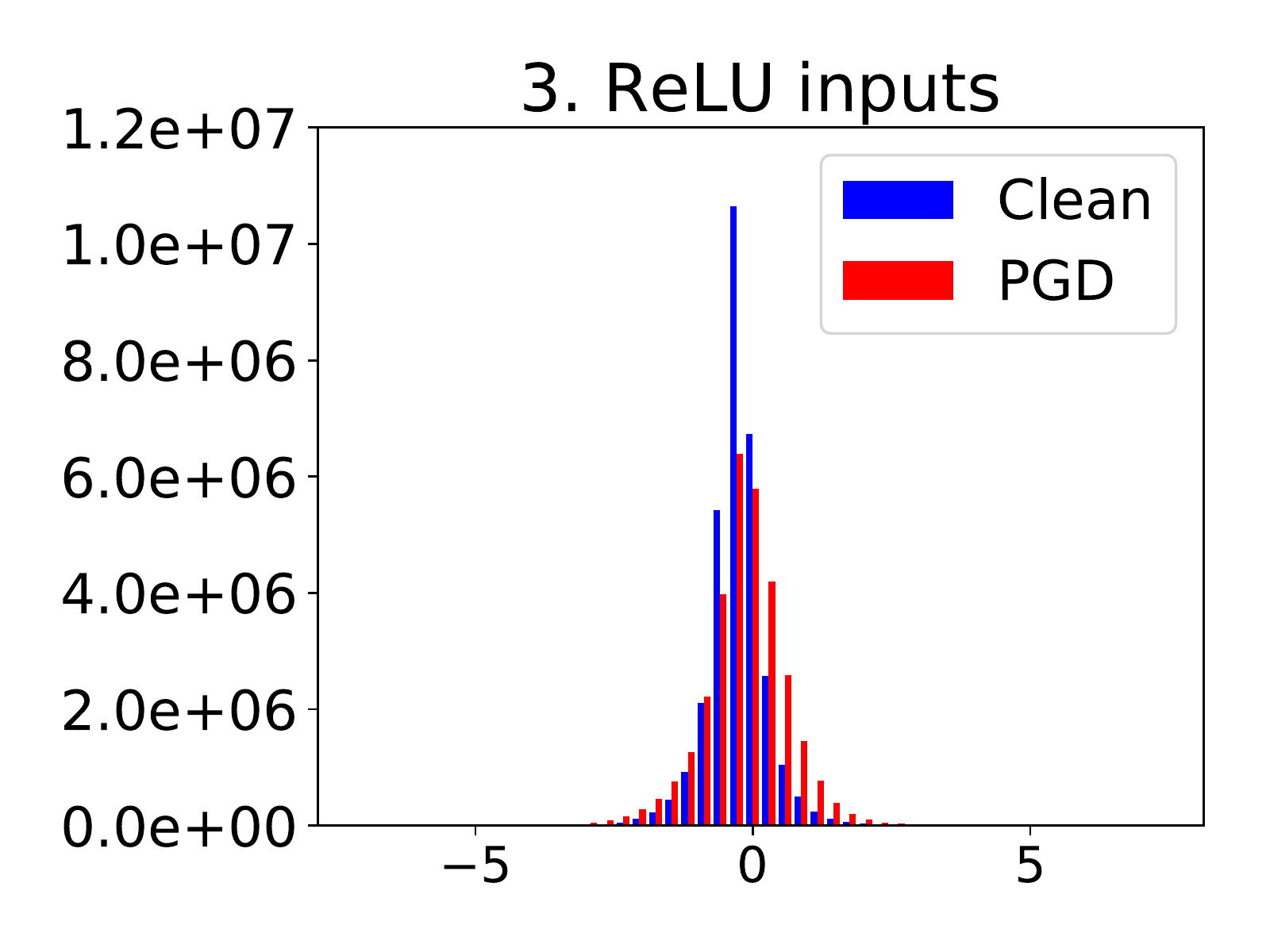} \hfill
    \includegraphics[width=.3\linewidth]{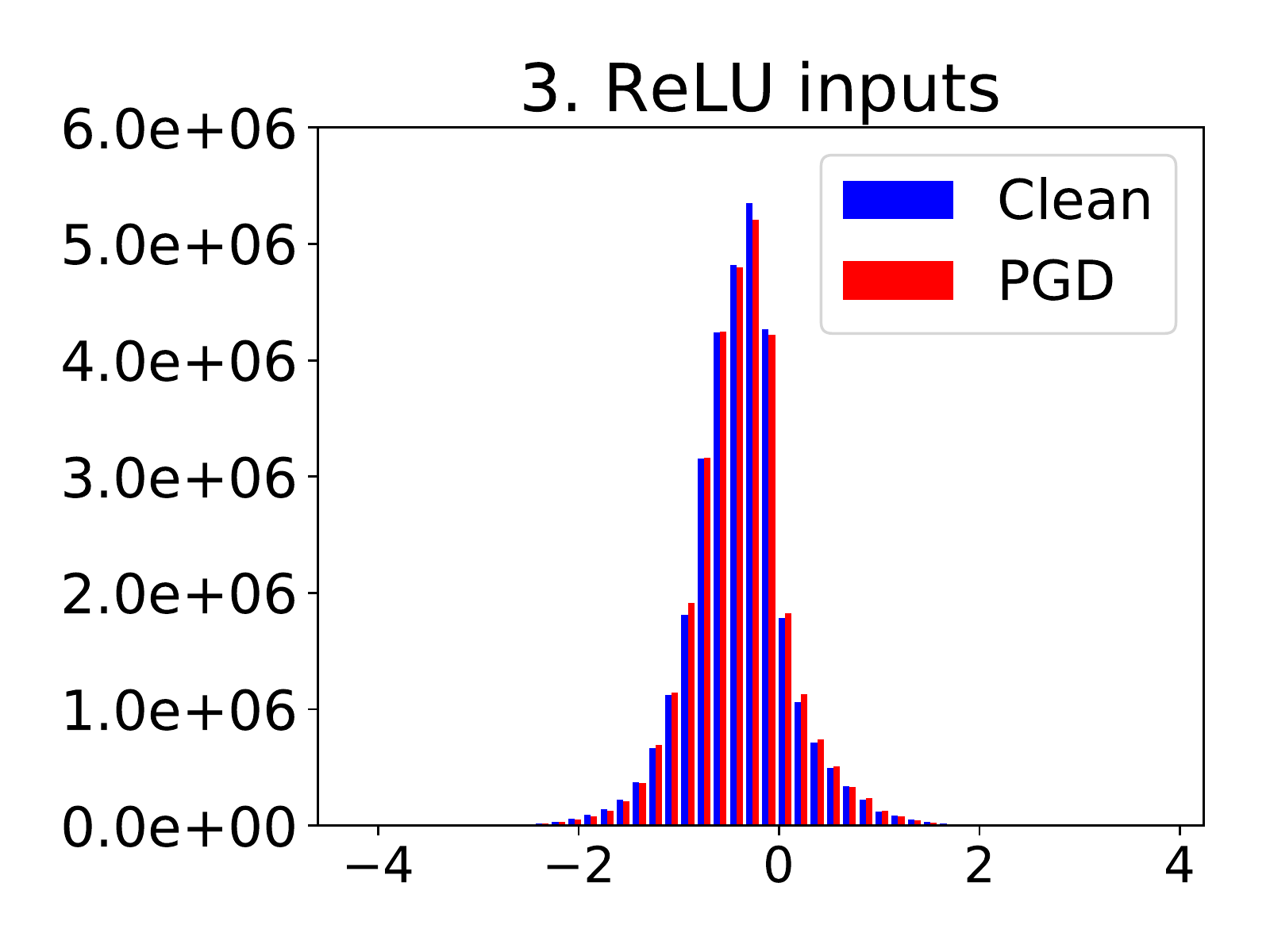} \hfill
    \includegraphics[width=.3\linewidth]{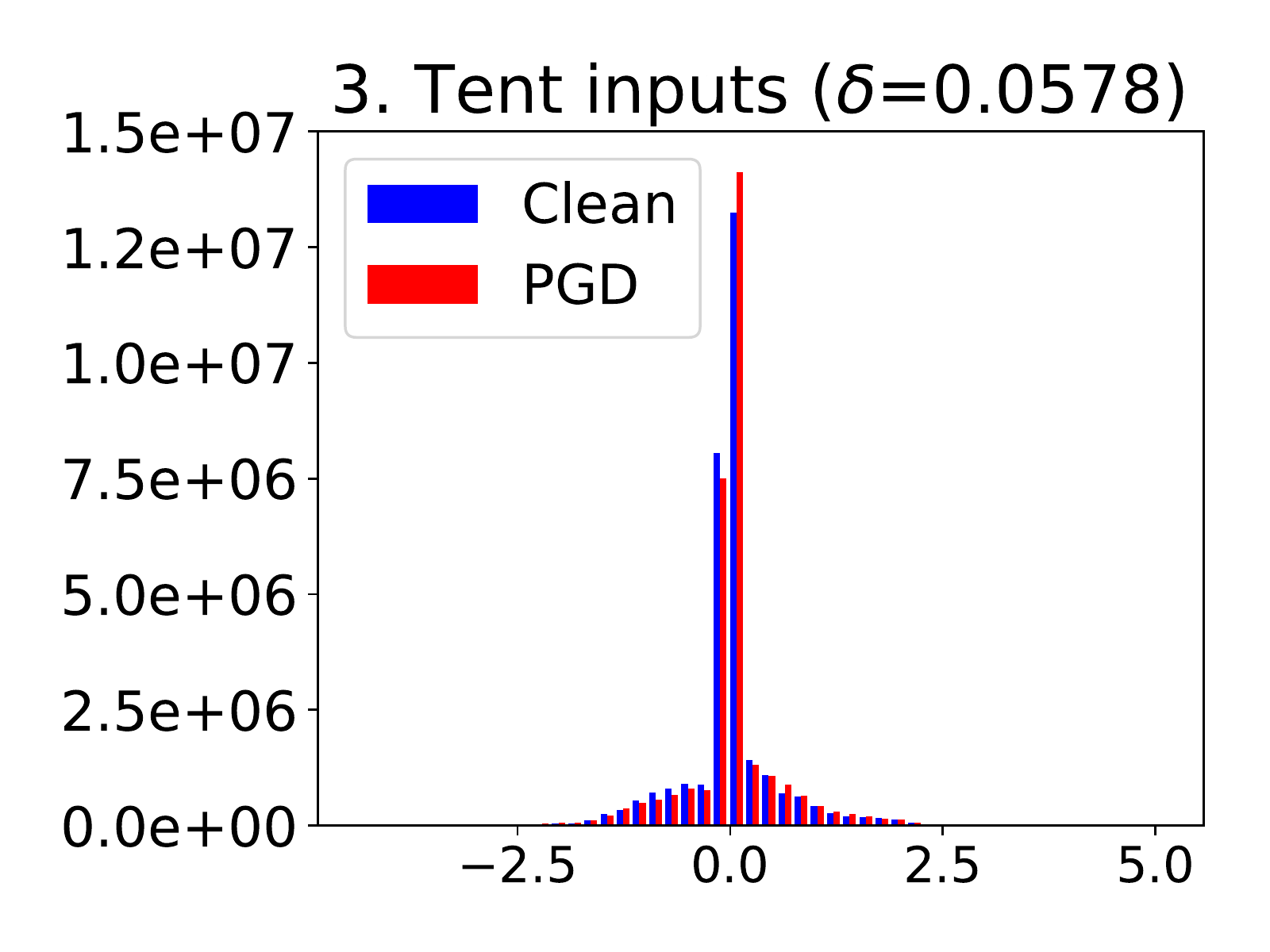} \\

    \centering
    \includegraphics[width=.3\linewidth]{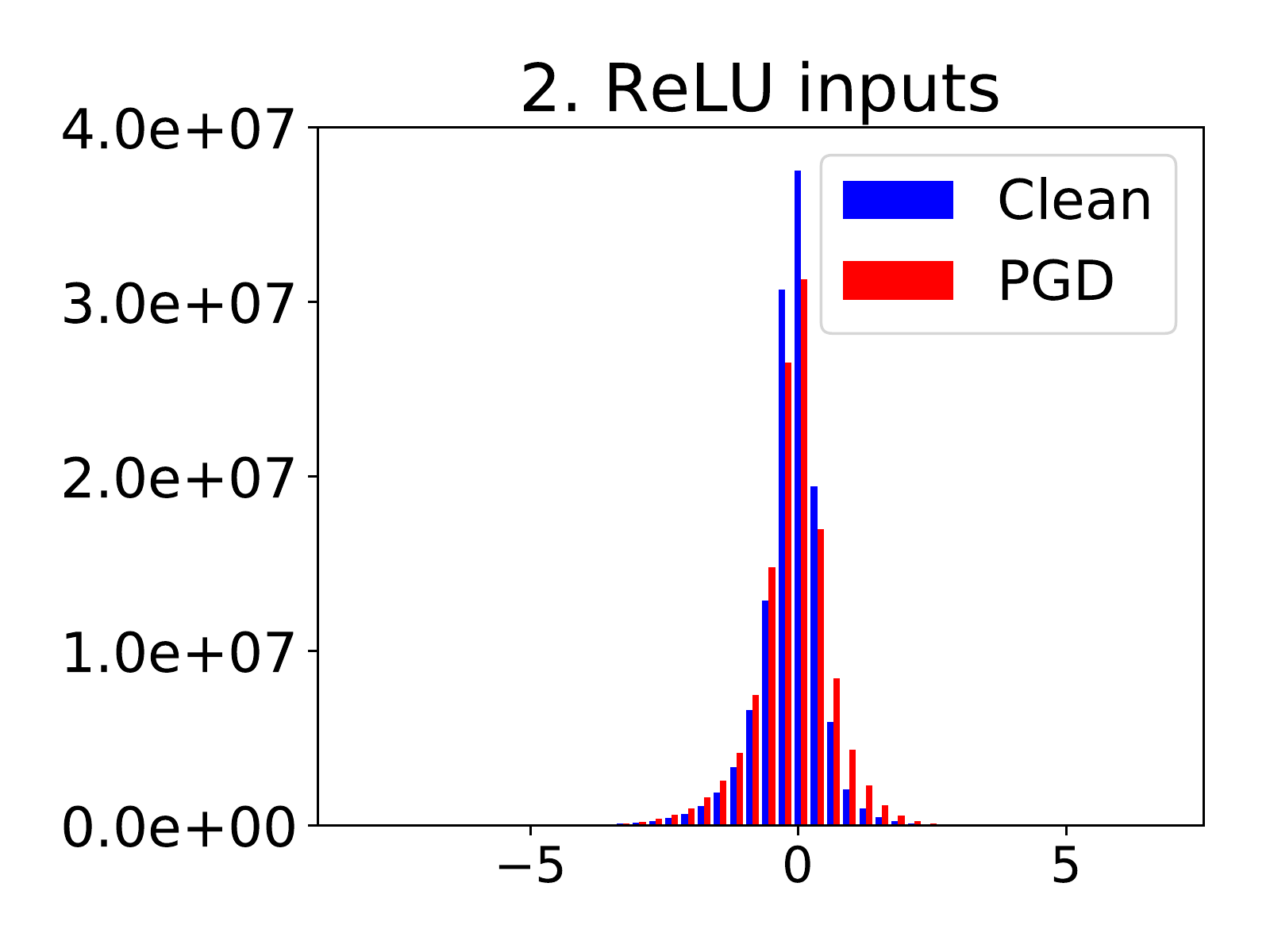} \hfill
    \includegraphics[width=.3\linewidth]{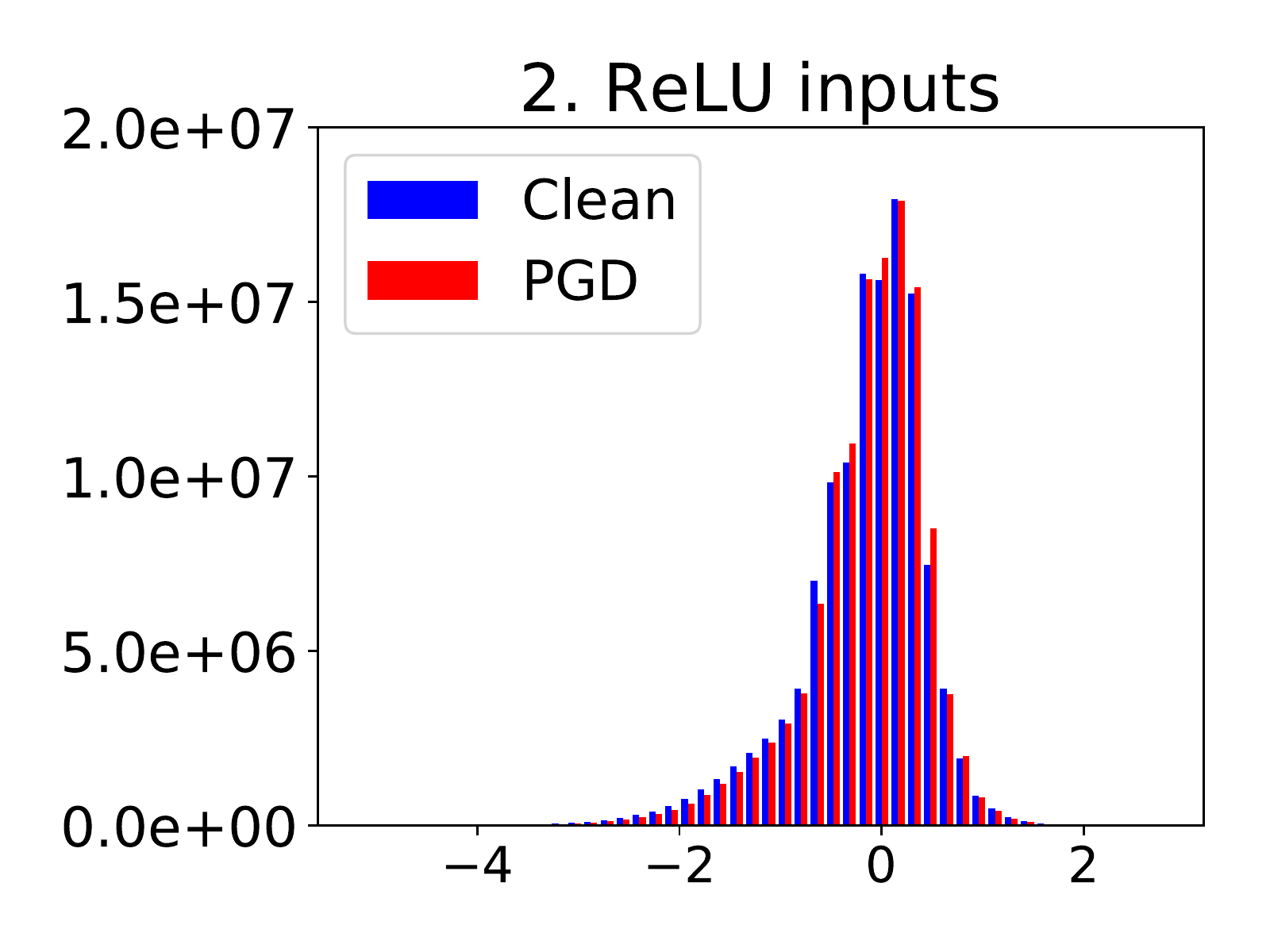} \hfill
    \includegraphics[width=.3\linewidth]{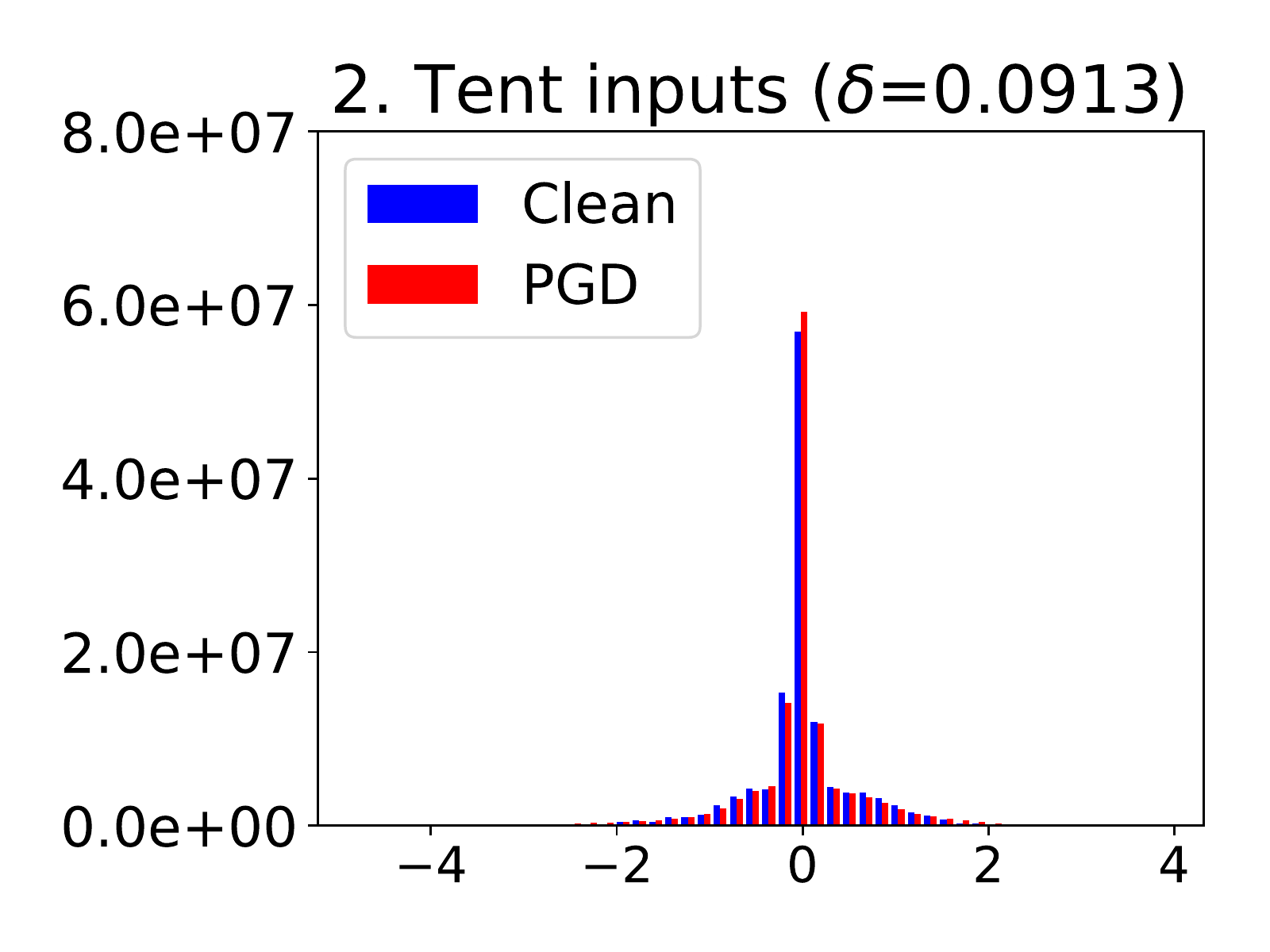} \\

    \centering
    \includegraphics[width=.3\linewidth]{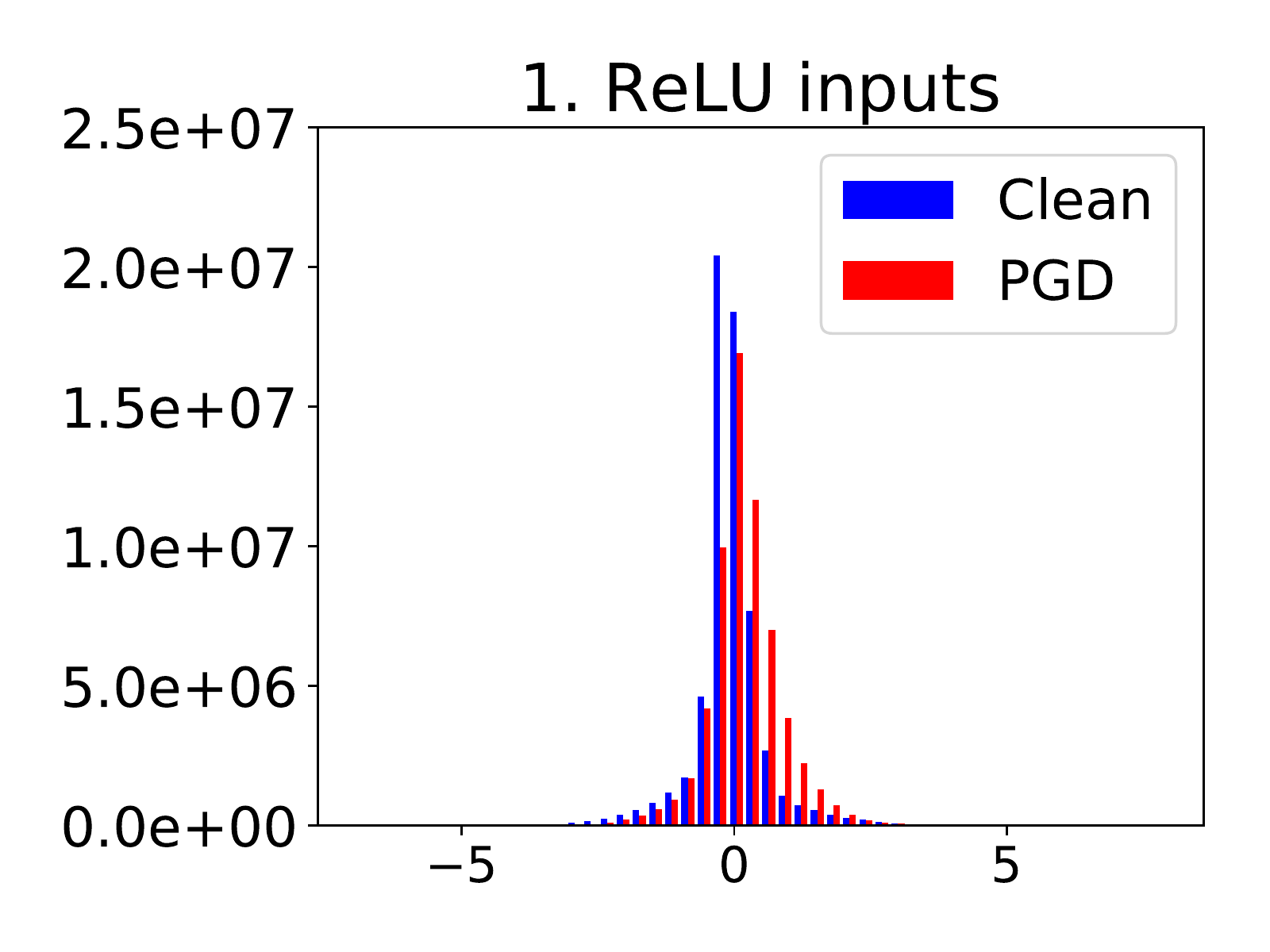} \hfill
    \includegraphics[width=.3\linewidth]{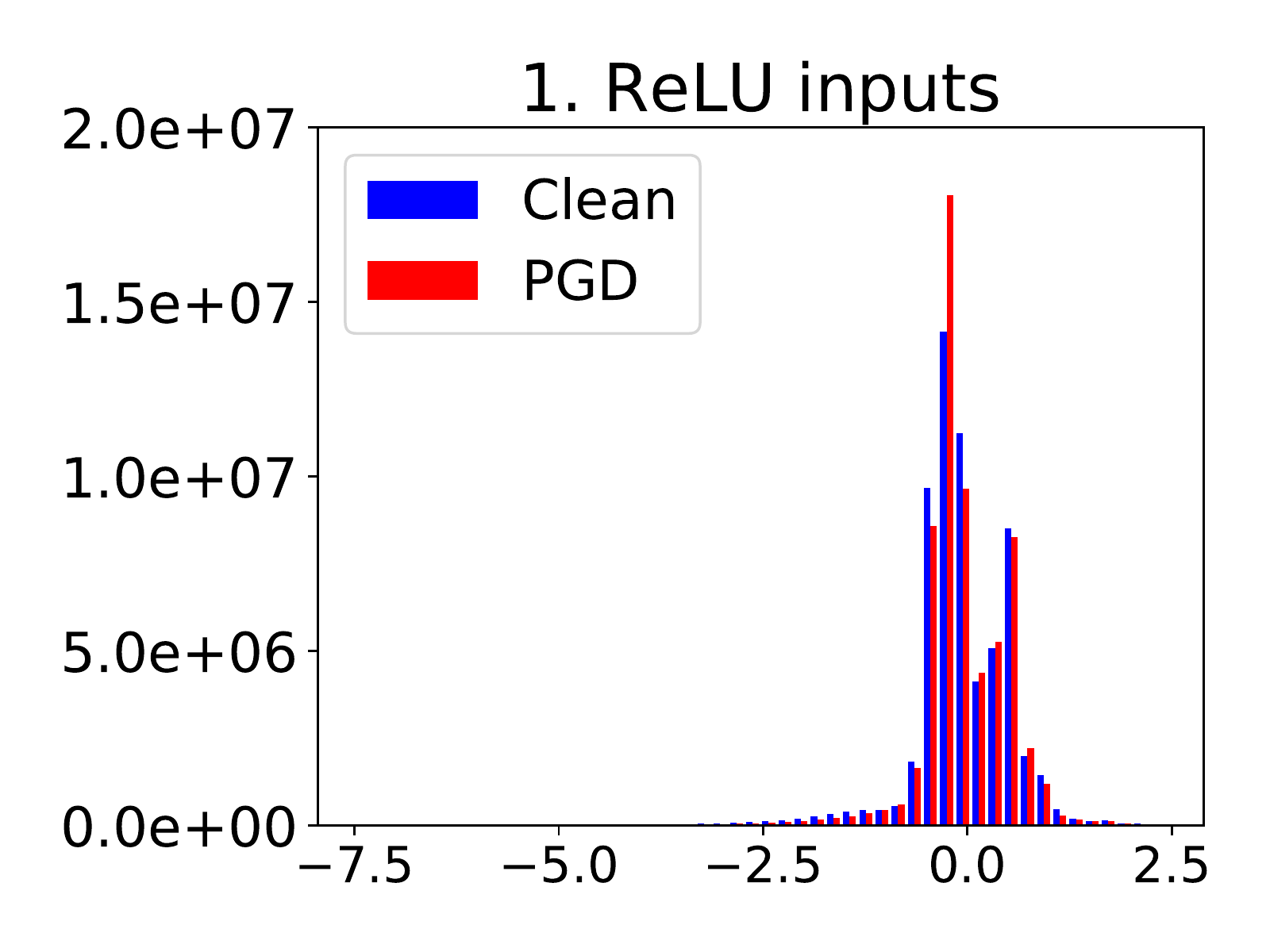} \hfill
    \includegraphics[width=.3\linewidth]{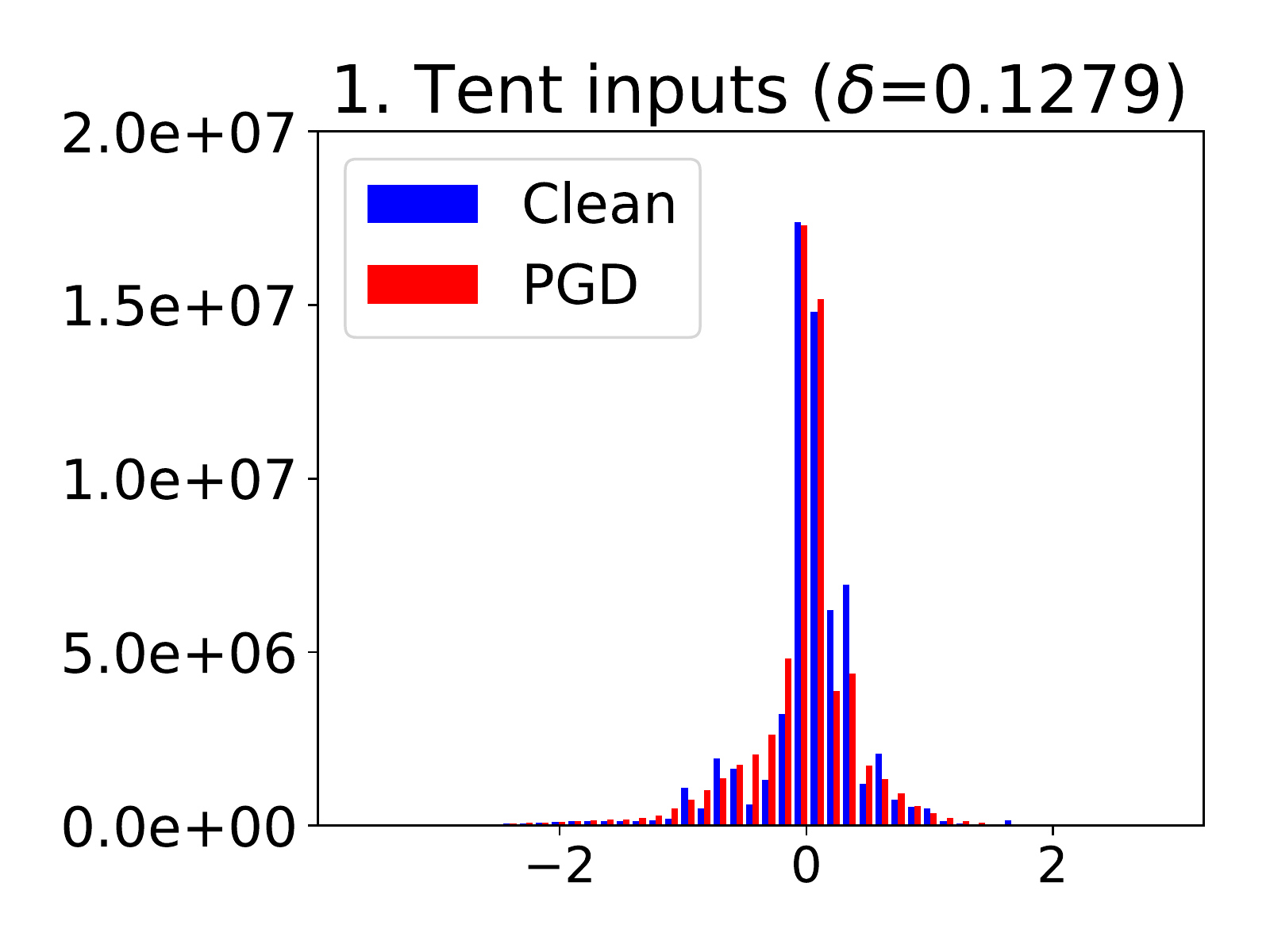} \\

    \centering
    \includegraphics[width=.3\linewidth]{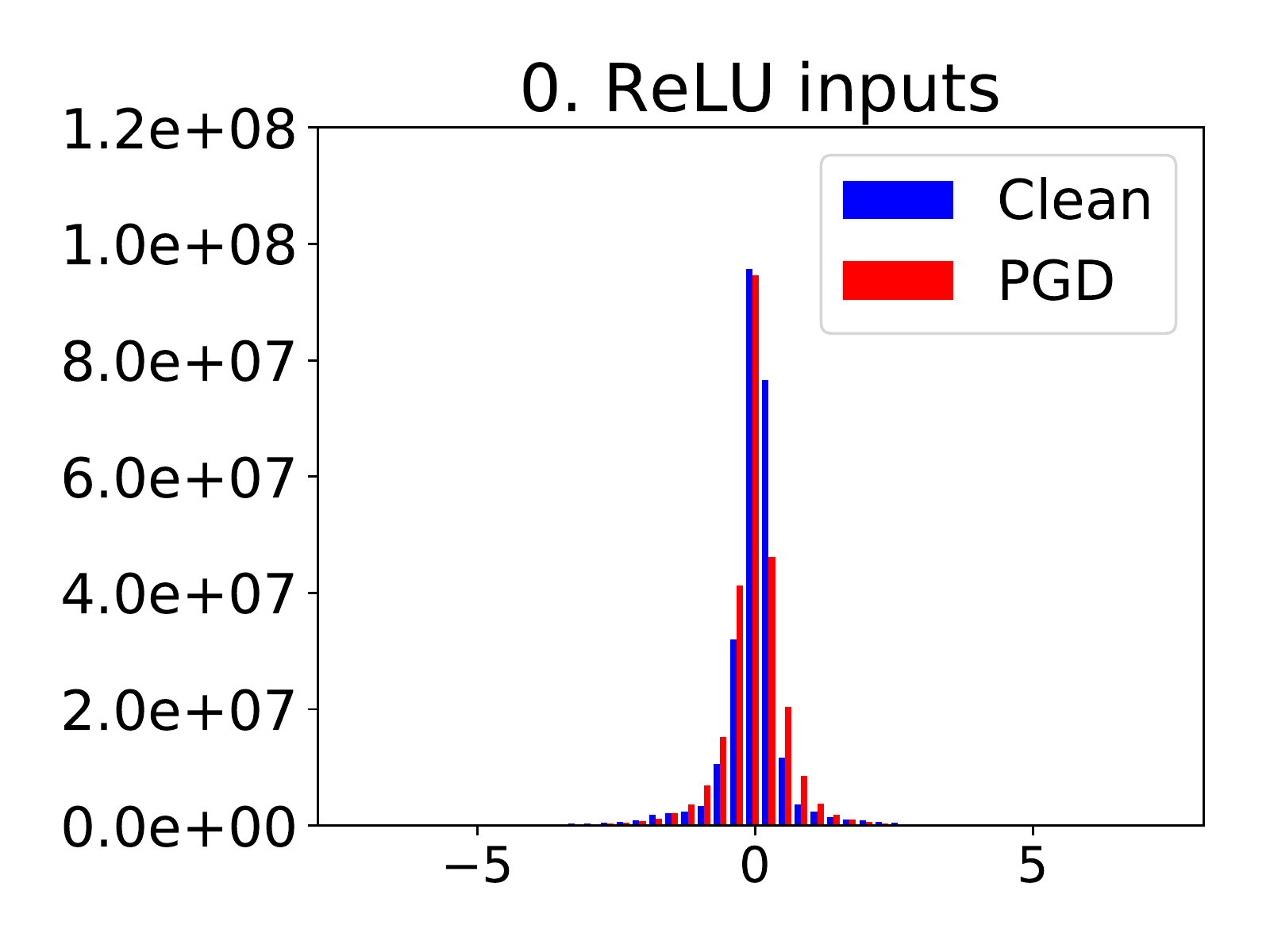} \hfill
    \includegraphics[width=.3\linewidth]{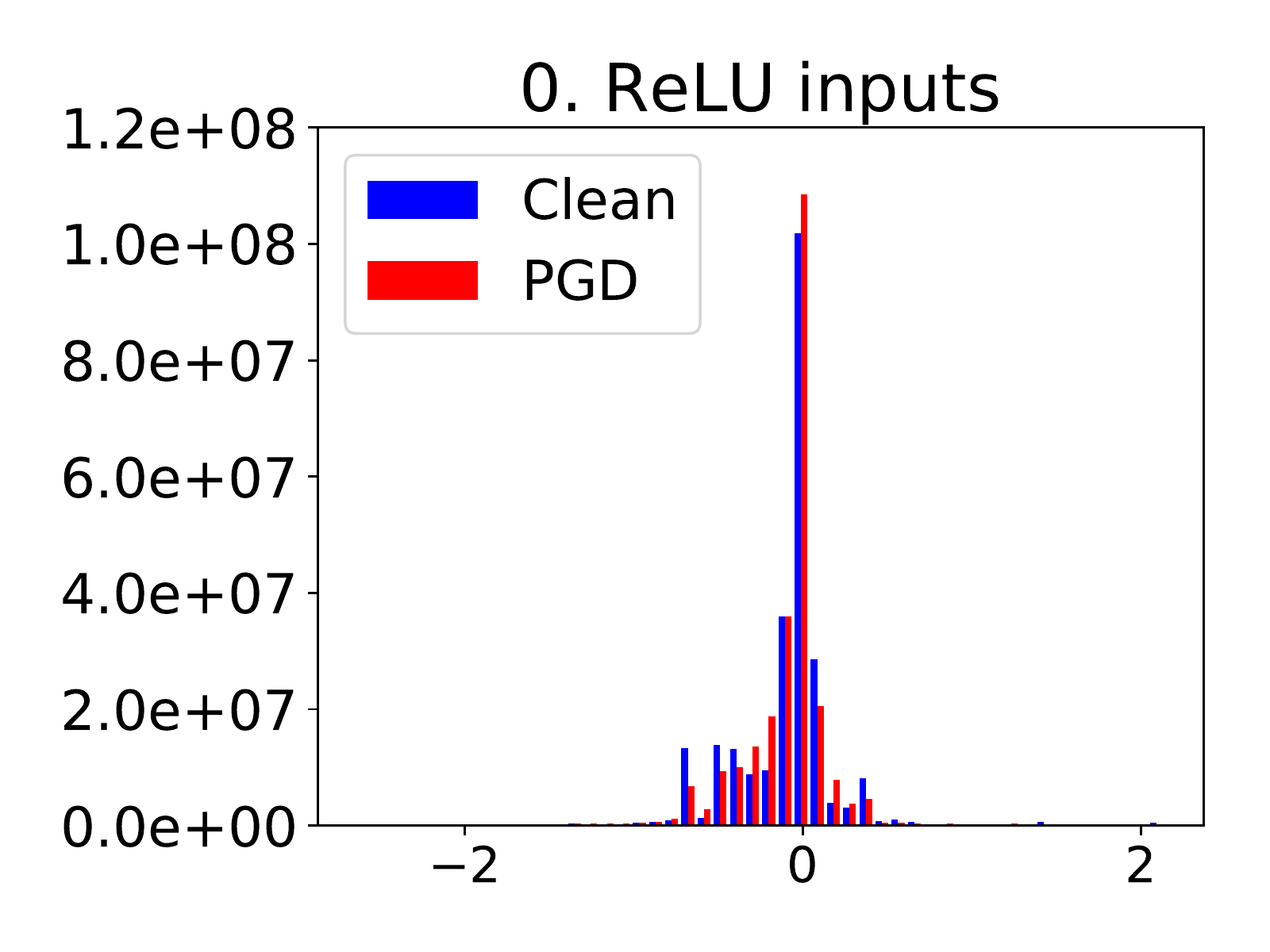} \hfill
    \includegraphics[width=.3\linewidth]{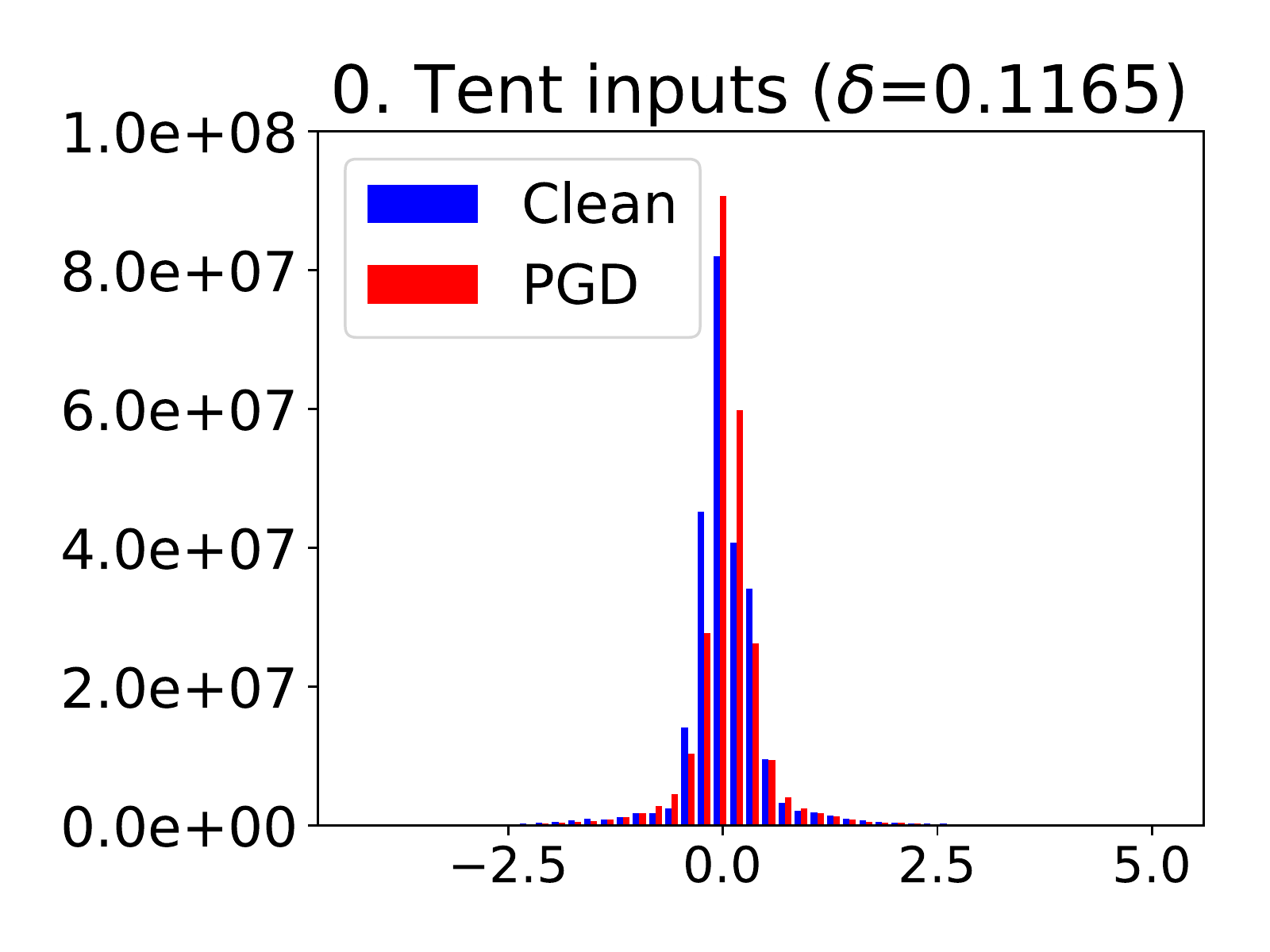} \\

  \caption{Activation Inputs: This figure shows the distributions of activation's inputs per layer for the 10k clean test samples (blue) and their adversarially perturbed counterparts via PGD (red) on three differently obtained networks: first, in the left, the regularly trained network denoted as MNIST-Net bn achieving 99.50\% accuracy on clean samples and 0.39\% on images perturbed via PGD, second, in the middle, the model obtained with adversarial training via PGD named as MNIST-Net adv having 99.36\% and 91.46\% accuracies on clean and PGD examples, and third, in the right column, the network containing tent activation functions denoted as MNIST-Net tent (0.12) delivering 99.20\% and 88.37\% accuracies on clean and PGD images, respectively. At the top of applicable plots, we list parameter $\delta$ of the particular tent.}
  \label{fig:activ_in}
\end{figure}

\fi

\end{document}